\documentclass[letterpaper]{article} 
\usepackage[preprint]{aaai2027}  
\usepackage[hyphens]{url}  
\usepackage{graphicx} 
\usepackage{natbib}  
\usepackage{caption} 
\usepackage{algorithm}
\usepackage{algorithmic}

\usepackage{booktabs}

\usepackage{amsmath, amssymb, amsfonts, amsthm}
\usepackage{mathtools}
\usepackage{enumitem}
\usepackage{multirow}
\usepackage{pifont} 

\newcommand{\fidbetter}[1]{\,{\tiny\textcolor{red}{($-#1\%$)}}}
\newcommand{\fidworse}[1]{\,{\tiny\textcolor{blue}{($+#1\%$)}}}

\newtheorem{theorem}{Theorem}
\newtheorem{proposition}[theorem]{Proposition}

\newtheorem{corollary}[theorem]{Corollary}
\theoremstyle{remark}

\DeclareMathOperator{\Var}{Var}\DeclareMathOperator{\Cov}{Cov}

\DeclareMathOperator{\tr}{tr}
\DeclareMathOperator{\Lip}{Lip}\DeclareMathOperator{\Bures}{Bures}
\DeclareMathOperator{\Law}{Law}\DeclareMathOperator{\Unif}{Unif}
\DeclareMathOperator*{\argmin}{arg\,min}
\newcommand{\R}{\mathbb{R}}\newcommand{\E}{\mathbb{E}}
\newcommand{\gam}[1]{\gamma_{#1}}\newcommand{\Pperp}{P_U^{\perp}}
\newcommand{\KS}{d_{\mathrm{KS}}}\newcommand{\BL}{d_{\mathrm{BL}}}
\newcommand{\Mix}{\mathrm{Mix}}

\definecolor{cutclr}{rgb}{0.75,0.10,0.10}
\newcommand{\cutrule}{\par\nobreak\noindent\textcolor{cutclr}{%
  \leaders\hbox to 8pt{\hss\rule[0.35ex]{5pt}{0.6pt}\hss}\hfill\kern0pt}\par\nobreak}

\title{One-Sided Quantile Coupling for Flow Matching}

\author{
    Jin-Young Kim\textsuperscript{\rm 1}, So-Yoon Cho\textsuperscript{\rm 2}, Hyun-Gyoon Kim\textsuperscript{\rm 3}\thanks{Corresponding author}
}
\affiliations{
    \textsuperscript{\rm 1}Independent Researcher\\
    \textsuperscript{\rm 2}Division of Finance \& AI, Hankuk University of Foreign Studies \\
    \textsuperscript{\rm 3}Dept. of Financial Engineering, Ajou University  \\
    seago0828@gamil.com, soyooncho@hufs.ac.kr, hyungyoonkim@ajou.ac.kr
}

\begin{document}

\maketitle

\begin{abstract}
{Flow Matching (FM) trains continuous-time generative models by regressing the velocity field of a probability path between a simple source distribution and a target data distribution. The coupling that pairs source and target samples strongly affects optimization and sample quality, but structured couplings typically rely on mini-batch transport or assignment procedures whose cost grows at least quadratically in batch size. We propose \textbf{Quantile Coupling Flow Matching (QC-FM)}, a lightweight \emph{one-sided} coupling: rather than matching two pre-sampled batches, it samples only the data batch and constructs each paired source directly. Data ranks projected along a small number of random orthogonal directions are mapped to Gaussian quantiles, and the latent code is completed in the orthogonal complement by conditional Gaussian sampling. The construction is one-dimensional per slice, so the coupling requires no pairwise cost matrix and no assignment to solve. We show that, for each drawn frame, this coupling eliminates the irreducible regression variance along every selected slice and makes the ideal flow exactly straight there, while leaving the sampling prior unchanged: generation still starts from the standard Gaussian, and the training source deviates from it only through the copula of the slice codes, whose transport cost we bound. For training, we apply QC to an anchor subset and complete the remaining source slots with exact Gaussian samples, retaining the QC bias while preserving an explicit signal from the Baseline coupling. Across CIFAR-10, CelebA, FFHQ, and ImageNet-64, QC-FM improves over the Baseline under matched training budgets, reducing FID by up to $12.9\%$, and outperforms minibatch OT-CFM on all four datasets. These results suggest that preserving projected rank structure is a simple and scalable way to inject useful geometric bias into FM couplings without solving a mini-batch transport problem.}

\end{abstract}


\section{Introduction}

Flow Matching (FM) has emerged as an effective framework for training continuous-time generative models by directly regressing the velocity field of a probability path between a source distribution and a target data distribution \citep{Lip23}. A standard choice uses a Gaussian source and linear interpolation between a source sample and a data sample, yielding a simple and efficient training objective. Despite this simplicity, the quality of the coupling between source and target samples plays a central role in determining the geometry of the induced training trajectories. Poor couplings lead to unnecessarily long or conflicting paths, which can increase the variance of the target velocity field and make training less efficient.
While recent curricula progressively organize the noise-to-data learning
problem across denoising difficulty or temporal gaps
\citep{Kim25Curriculum,Kim26MeanFlow}, we focus on a complementary axis:
directly structuring the noise--data endpoint coupling used at every update.

Structured couplings inspired by optimal transport (OT) can reduce trajectory
crossing and shorten transport paths, often leading to faster convergence and
better generation quality. However, existing structured approaches typically
rely on mini-batch OT or related matching procedures, which
require constructing pairwise cost matrices and solving a batchwise assignment
problem. As data dimension and batch size grow, these procedures become
increasingly expensive, and their batchwise nature can introduce additional
instability or bias.

In this paper, we ask whether the geometric benefits of structured coupling can be obtained without matching two pre-sampled batches at all. Our answer is \textbf{Quantile Coupling Flow Matching (QC-FM)}, a \emph{one-sided} coupling scheme: rather than drawing a noise batch and re-pairing it with the data, QC-FM samples only the data batch and constructs each paired source directly. Every data point is ranked along a small number of random orthogonal directions, and the ranks are mapped to Gaussian quantiles, so that the source preserves the batchwise order of the data along each slice. The remaining directions are completed by conditional Gaussian sampling, and the resulting source is structured along the selected slices and stochastic elsewhere.

QC-FM is a batch-local structured coupling: it imposes monotone one-dimensional transport along $k$ random slices. Each slice is realized by a sort against a fixed Gaussian quantile grid, so there is no pairwise cost matrix to build and no assignment problem to solve. Because the quantiles are computed within the current mini-batch, QC-FM is a surrogate for structured coupling rather than an approximation of global OT; the aim is to recover the optimization benefits commonly associated with structured couplings, not to compute an optimal matching.

The QC coupling is the theoretical primitive of our method. In training, QC designates the source for an \emph{anchor} subset of the batch, and the remaining source slots are completed with exact samples from the Gaussian prior. These {Gaussian-remainder completion} schemes retain a controlled amount of QC structure while preserving an explicit signal from the Baseline coupling. \textbf{QC-FM-Mixture} pairs the Gaussian remainder randomly, and \textbf{QC-FM-Adjacency} uses the same Gaussian remainder but pairs it through neighborhoods induced by the QC anchors, using every data target exactly once. The hybrids are practical completion schemes, rather than separately claimed optimal couplings.

Our experiments span CIFAR-10, CelebA, FFHQ, and ImageNet-64. QC-FM-Mixture improves over the Baseline on all four after the same number of training epochs and outperforms minibatch OT-CFM on every dataset. These results indicate that projected rank preservation is a promising, low-cost alternative to full mini-batch transport for flow matching.

The contributions of this paper are as follows:
\begin{itemize}
    \item We propose \textbf{Quantile Coupling Flow Matching (QC-FM)}, a lightweight batchwise source construction for flow matching based on projected rank-to-quantile matching and conditional Gaussian completion.
    \item We develop a theoretical account of the QC coupling primitive: for a fixed frame, it eliminates the irreducible variance along the selected slices and makes the ideal per-slice flow straight; we further quantify the transport-cost gain, the effect of direction resampling, the finite-batch approximation, and the dependence among slice codes that separates the training source from the sampling prior.
    \item We introduce two Gaussian-remainder completion schemes, \textbf{QC-FM-Mixture} and \textbf{QC-FM-Adjacency}, which apply QC to an anchor subset, complete the remaining source slots with exact Gaussian samples, and differ in how this Gaussian remainder is paired.
    \item We demonstrate on CIFAR-10, CelebA, FFHQ, and ImageNet-64 that QC-FM-Mixture improves over the Baseline and minibatch OT-CFM, and analyze the anchor-ratio and slice-count trade-offs.
\end{itemize}

\section{Related Work}
 
\paragraph{Flow matching.}
Flow matching \citep{Lip23} trains continuous normalizing flows \citep{Chen18} by regressing the velocity field that generates a prescribed probability path from a source
distribution to the data. Related formulations include rectified flow
\citep{Liu23} and stochastic interpolants \citep{Alb23,Alb25}. In the Baseline
formulation, a Gaussian source and a data sample are drawn
independently and joined by linear interpolation, so the two endpoints are paired
at random.

\paragraph{Couplings by minibatch assignment.}
One line of work replaces this independent pairing with a structured one computed
within each minibatch. Multisample flow matching \citep{Poo23} and OT-CFM
\citep{Ton24} introduce non-trivial minibatch couplings: they draw a noise batch
and a data batch independently, form the $B\times B$ cost matrix between the two
batches of size $B$, and re-pair them by solving an assignment problem ($O(B^3)$ for an
exact solve, or $O(B^2)$ per iteration under entropic
regularization). Subsequent work refines this coupling
\citep{Lin25,Dav25,Li24}. These couplings straighten the flow but inherit the
statistical bias of minibatch OT \citep{Fat20,Boi26}, with benefits emerging
mainly at large batch sizes \citep{Zha25}, though the cost can be amortized
\citep{Mou26}. All share a two-sided design: both endpoints are sampled first, and
the coupling is a matching between them. QC-FM is instead \emph{one-sided}: it
samples only the data batch and \emph{synthesizes} the paired source directly
against a fixed Gaussian quantile grid. Per slice this reduces to the
one-dimensional comonotone coupling (Theorem~\ref{t:gain}), realized by a sort in
$O(B\log B)$; there is no $B\times B$ cost matrix and no batchwise assignment, and
the sampling prior is unchanged.

\paragraph{Data-dependent priors.}
A complementary line makes the source depend on the data rather than matching two
batches. Data-dependent stochastic interpolants \citep{Alb24} condition the source
on the data, but target conditional tasks such as super-resolution and inpainting.
For unconditional generation, the dependence is usually learned via a variational
noise coupling trained jointly with the flow \citep{Sil25}. Closest to QC-FM, two
recent works build the source without learning: \citet{Che26} fit a data-adaptive
prior from one-dimensional quantile functions but still couple through minibatch OT,
and \citet{Mal26} take each image's own low-frequency content as its prior. QC-FM
shares this construct-rather-than-match philosophy but requires no learned component
and no change of the \emph{sampling} prior: nothing is learned or stored for
inference, and generation still starts from the standard Gaussian. Our source is built from within-batch ranks mapped through the Gaussian quantile transform; it equals the standard Gaussian exactly at $k=1$ and, for $k\ge2$,
deviates only through a copula whose transport cost we bound in
Theorem~\ref{t:price}.

\paragraph{Sliced transport and rank-based constructions.}
One-dimensional OT underlies QC-FM's per-slice mechanism: the quadratic
problem is solved in closed form by the comonotone coupling that matches
quantiles \citep{Hoeffding40,Villani09}, which is also the basis of sliced
Wasserstein distances \citep{Rab11,Bon15,Kol19}. Sliced transport plans lift a one-dimensional plan back to the ambient space
and have likewise been used to couple flow-matching batches
\citep{Tan25, Cha25} --- again a matching between two independent batches. The rank-to-Gaussian-quantile map itself
is classical, underlying Gaussianization methods \citep{CheGop00,Lap11,Dai21}. QC-FM
applies this primitive once, batch-locally, rather than composing it into a
standalone generative model. Concurrently, \citet{Gro26} propose a quantile-coupled objective for distributional reinforcement learning, sorting scalar returns
against noise within each batch, whereas QC-FM operates on high-dimensional data
via random orthogonal projections.

\section{Method}

\subsection{Preliminaries: Flow Matching}

Let \(p_0\) denote a simple source distribution, typically \(p_0 = \mathcal{N}(0, I)\), and let \(p_1\) denote the data distribution. FM trains a time-dependent vector field \(v_\theta(x,t)\) to match the velocity of a prescribed conditional probability path between a source sample \(x_0 \sim p_0\) and a data sample \(x_1 \sim p_1\).

The Baseline uses the linear path
$x_t=(1-t)x_0+tx_1$ for $t\in[0,1]$, with target velocity
$u_t=x_1-x_0$.
The model is trained by minimizing
\[
\mathcal{L}_{\mathrm{FM}}(\theta)
=
\mathbb{E}_{t, x_0, x_1}
\left[
\left\|
v_\theta(x_t, t) - u_t
\right\|^2
\right].
\]

Although the objective is simple, the choice of coupling between \(x_0\) and \(x_1\) strongly affects the geometry of the training trajectories. The Baseline is easy to sample but can produce long or conflicting paths, increasing the variance of the target velocity field. Structured couplings can improve this geometry, but existing approaches often rely on solving a mini-batch transport problem. Our goal is to obtain some of the benefits of structured coupling without explicit batchwise assignment.

\subsection{Quantile Coupling Flow Matching}

\subsubsection{Overview}

We propose \textbf{Quantile Coupling Flow Matching (QC-FM)}, a lightweight
one-sided coupling method that preserves the relative order of data samples
along a small number of random projections. Given a data batch
$\mathcal{B}_x=\{x^{(1)},\dots,x^{(B)}\}\subset\R^d$, QC-FM constructs a
corresponding latent batch
$\mathcal{B}_e=\{e^{(1)},\dots,e^{(B)}\}\subset\R^d$ by matching projected data
ranks to Gaussian quantiles. The resulting coupling is structured along the
selected projections and stochastic in the remaining directions.

\subsubsection{Construction}

Let $U=[u_1,\dots,u_k]\in\R^{d\times k}$ have orthonormal projection
directions as columns, sampled for example by QR decomposition of a Gaussian
random matrix.

For each data point $x^{(i)}$ and direction $u_j$, we compute the scalar
projection $v_{ij}=u_j^\top x^{(i)}$.
Within each projection \(j\), we rank the values \(\{v_{1j}, \dots, v_{Bj}\}\) across the batch. 
For a batch \(\mathcal{B}\), let \(r_j(x;\mathcal{B}) \in \{1,\dots,|\mathcal{B}|\}\)
denote the rank of \(u_j^\top x\) within \(\{u_j^\top x' : x' \in \mathcal{B}\}\),
with rank \(1\) the smallest, and write \(r_{ij} := r_j(x^{(i)}; \mathcal{B}_x)\).
We continuity-correct the empirical rank and map it through the inverse
standard normal CDF:
\[
\tau_{ij}=\frac{r_{ij}-0.5}{|\mathcal{B}_x|},
\qquad
z_{ij}=\Phi^{-1}(\tau_{ij}).
\]
Collecting these coordinates as
$z^{(i)}=(z_{i1},\dots,z_{ik})^\top\in\R^k$, we define $e^{(i)}$ as a
standard Gaussian sample conditioned on $U^\top e^{(i)}=z^{(i)}$.

Because \(U\) has orthonormal columns, the conditional distribution admits the closed form
\[
e^{(i)} = U z^{(i)} + (I_d - UU^\top)\epsilon^{(i)},
\qquad
\epsilon^{(i)} \sim \mathcal{N}(0, I_d).
\]
The first term, \(Uz^{(i)}\), enforces the desired projected coordinates, while the second term injects Gaussian variability in the orthogonal complement of \(\mathrm{span}(U)\).

\subsubsection{Geometric Interpretation}

QC-FM can be interpreted as imposing \(k\) linear slice constraints in latent
space. For each direction $u_j$, the condition
$u_j^\top e^{(i)}=z_{ij}$ defines a hyperplane orthogonal to $u_j$.
Therefore, $e^{(i)}$ lies on the intersection of $k$ such hyperplanes. When
$k<d$, this intersection is an affine subspace of dimension $d-k$, and the
residual term $(I_d-UU^\top)\epsilon^{(i)}$ samples from the Gaussian
restricted to that subspace. When $k=d$, the slice constraints fully determine
the latent code as $e^{(i)}=Uz^{(i)}$.

Algorithm~\ref{alg:qcfm} in Appendix~\ref{app:algorithms} summarizes this
construction. We write \(\textrm{QC-FM}(\mathcal B, k)\) for the algorithm, returning
the source samples \(\{e^{(i)}\}\), with \(e^{(i)}\) paired to \(x^{(i)}\),
together with the frame \(U\). Each \(e^{(i)}\) serves as the source endpoint of
the flow-matching path for \(x^{(i)}\), i.e.\ \((x_0, x_1) = (e^{(i)}, x^{(i)})\).

\subsubsection{Interpretation as a Structured Coupling Surrogate}

QC-FM is a sliced, batch-local structured coupling. Along each projection \(u_j\), matching empirical ranks to Gaussian quantiles is equivalent to monotone one-dimensional transport between the projected empirical data distribution and the Gaussian reference. QC-FM combines \(k\) such one-dimensional constraints and constructs a source point that satisfies all of them simultaneously. In this way, the method injects structured geometric information into the coupling without solving a full pairwise transport problem over the batch.

Importantly, QC-FM is still a \emph{batch-local} method: the quantiles are computed from the current mini-batch rather than from the full data distribution. Thus, QC-FM should be understood as a practical surrogate for structured coupling rather than an exact approximation to global OT.

\subsubsection{Properties}

QC-FM has the following useful properties.

\paragraph{Projected rank preservation.}
For every selected direction \(u_j\), the source samples preserve the
batchwise ordering of the data projections:
\[
r_j\big(e^{(i)};\,\mathcal{B}_e\big) = r_j\big(x^{(i)};\,\mathcal{B}_x\big),
\qquad \forall i, j.
\]
This follows because \(u_j^\top e^{(i)} = z_{ij}\) by orthonormality of \(U\), and
\(z_{ij}\) is a strictly increasing transform of \(r_{ij}\), which preserves order.

\paragraph{Gaussian completion in unconstrained directions.}
The residual term $(I_d-UU^\top)\epsilon^{(i)}$ is Gaussian in the orthogonal
complement of the selected projection subspace. Therefore, QC-FM preserves
stochastic variability in directions not explicitly constrained by projected
ranks.

\paragraph{Batchwise Gaussian surrogate rather than exact i.i.d. source sampling.}
At finite batch size, QC-FM does not produce i.i.d.\ draws from
\(\mathcal{N}(0,I_d)\): its coordinates in \(\mathrm{span}(U)\) follow a
deterministic Gaussian quantile grid induced by the batch ranks, while the
orthogonal complement remains Gaussian. At the population level, however, the
distinction is sharper. For \(k=1\), the source marginal is exactly Gaussian;
for \(k\geq2\), any source--prior discrepancy is confined to the copula among
the slice codes (Theorem~\ref{t:exact} in Appendix~\ref{app:mainproofs}).
Thus we interpret finite-batch QC-FM as a structured \emph{batchwise Gaussian
surrogate}, rather than an i.i.d.\ replacement for standard source sampling.

\paragraph{Controllable coupling strength.}
The parameter \(k\) controls how much structure is injected into the coupling. Small \(k\) imposes weak, low-cost constraints, while larger \(k\) enforces projected rank consistency along more directions and produces a stronger geometric bias.

\paragraph{Efficiency.}
QC-FM requires projection, sorting, and orthogonalization, but avoids pairwise cost matrices and transport assignment over the full batch. Its dominant batch-dependent cost is sorting projected values along \(k\) directions, which scales as \(O(kB\log B)\), in contrast to batchwise transport methods that typically require at least quadratic dependence on batch size.

\subsection{Gaussian-Remainder Hybrid Extensions}

The QC coupling above is the primitive analyzed in Section~\ref{sec:theory}.
For $k\ge2$, the QC joint source law can differ from the Gaussian prior because
the slice codes retain data-dependent dependence, even though every individual
slice coordinate is Gaussian in the population construction. We therefore
apply QC only to an anchor subset and complete the remaining source slots with
exact Gaussian samples. We refer to this design as
\emph{Gaussian-remainder completion}. All variants below share the same
source-side completion; they differ only in the joint pairing law assigned to
the Gaussian remainder. Figure~\ref{fig:straightness} later shows how these
coupling choices affect the geometry of learned toy-data trajectories.

Let \(p \in [0,1]\) denote the anchor ratio. Given a data batch
$\mathcal{B}_x=\{x^{(i)}\}_{i=1}^B$, we split
$\mathcal{B}_x=\mathcal{B}_{x,\mathrm{anc}}\cup
\mathcal{B}_{x,\mathrm{rest}}$ with
$|\mathcal{B}_{x,\mathrm{anc}}|=M=\lfloor pB\rfloor$, and write the
corresponding source split as
$\mathcal{B}_e=\mathcal{B}_{e,\mathrm{anc}}\cup
\mathcal{B}_{e,\mathrm{rest}}$. Applying QC-FM yields the latent anchors and
frame $(\mathcal{B}_{e,\mathrm{anc}},U)=
\textrm{QC-FM}(\mathcal{B}_{x,\mathrm{anc}},k)$, together with the anchor pairs
\[
\mathcal{S} = \{(e_m^{\mathrm{anc}}, x_m^{\mathrm{anc}})\}_{m=1}^{M},
\qquad
e_m^{\mathrm{anc}} \in \mathcal{B}_{e,\mathrm{anc}},\ 
x_m^{\mathrm{anc}} \in \mathcal{B}_{x,\mathrm{anc}}.
\]

We define the structured-space distance
$d_U(a,b)=\|U^\top a-U^\top b\|_2$, which measures only the
$k$-dimensional subspace where QC-FM imposes structure. Using $d_U$ prevents
the unconstrained Gaussian complement from dominating neighborhood assignments
in high dimension.

\paragraph{QC-FM-Mixture.}
For the remainder subset, we independently sample
$\widetilde{\mathcal{B}}_{e,\mathrm{rest}}\sim\mathcal{N}(0,I_d)$ and pair
these latents randomly with $\mathcal{B}_{x,\mathrm{rest}}$, exactly as in
the Baseline, giving the random remainder pairs
\[
\mathcal{A} = \{(\tilde e, \sigma_0(\tilde e)) : \tilde e \in \widetilde{\mathcal{B}}_{e,\mathrm{rest}}\},
\]
where \(\sigma_0\) is a uniformly random bijection from
\(\widetilde{\mathcal{B}}_{e,\mathrm{rest}}\) to \(\mathcal{B}_{x,\mathrm{rest}}\).
The final training set is $\mathcal{T}=\mathcal{S}\cup\mathcal{A}$.
Writing $\overline{\pi}_{\mathrm{qc}}:=\E_U[\pi^{\mathrm{qc}}_{k,U}]$ for the frame-averaged QC pair law and $\pi^{\mathrm{ind}}:=\gam d\otimes p_1$ for the Baseline coupling, the population pair law of this construction is
\[
\pi_{\mathrm{mix}}
=
 p\,\overline{\pi}_{\mathrm{qc}}+(1-p)\,\pi^{\mathrm{ind}},
\]
and hence, for every velocity field $v$,
\[
\mathcal{L}^{\mathrm{mix}}_{\mathrm{FM}}(v)
=
p\,\mathcal{L}^{\overline{\pi}_{\mathrm{qc}}}_{\mathrm{FM}}(v)
+(1-p)\,\mathcal{L}^{\pi^{\mathrm{ind}}}_{\mathrm{FM}}(v).
\]
Thus QC-FM-Mixture explicitly retains the Baseline objective on the Gaussian remainder while injecting a structured QC component.

\paragraph{QC-FM-Adjacency.}
QC-FM-Adjacency propagates anchor structure to the remainder of the batch without duplicating targets, so that every data point is used exactly once.

First, each remainder data point $x\in\mathcal{B}_{x,\mathrm{rest}}$ is
assigned to its nearest anchor target in structured coordinates,
$a(x)=\argmin_{m\in\{1,\dots,M\}}d_U(x,x_m^{\mathrm{anc}})$.
This partitions the remainder data into anchor-induced groups
\[
\mathcal{G}_m = \{x \in \mathcal{B}_{x,\mathrm{rest}} : a(x)=m\},
\qquad n_m = |\mathcal{G}_m|.
\]

Next, we sample $|\mathcal{B}_{x,\mathrm{rest}}|$ Gaussian latent points
$\widetilde{\mathcal{B}}_{e,\mathrm{rest}}\sim\mathcal{N}(0,I_d)$.
We then allocate these latent points to anchors with a \emph{parallel auction} that respects the group quotas \(n_m\). In each round, every still-unassigned latent proposes to its nearest anchor (in \(d_U(e,e_m^{\mathrm{anc}})\)) that still has remaining quota; each anchor provisionally keeps its closest proposers up to its quota; rejected latents re-propose in the next round. Since at least one anchor fills its quota per round, the procedure terminates in at most $M$ rounds; in practice it converges in a small, batch-size-independent number of rounds (typically fewer than ten), and it requires $O(MB)$ nearest-anchor distance evaluations. The auction is fully vectorized and \emph{independent of the order in which anchors are processed}. The matching quotas guarantee that every latent and every remainder target is used exactly once, producing latent groups
\[
\mathcal{H}_m \subset \widetilde{\mathcal{B}}_{e,\mathrm{rest}},
\qquad |\mathcal{H}_m| = |\mathcal{G}_m| = n_m.
\]

Finally, latent points in \(\mathcal{H}_m\) are paired with targets
in \(\mathcal{G}_m\) via a bijection \(\sigma_m:\mathcal{H}_m\to\mathcal{G}_m\), chosen at random within each group. The resulting remainder pair set is
\[
\mathcal{N} = \bigcup_{m=1}^{M} \{(e, \sigma_m(e)) : e \in \mathcal{H}_m\}.
\]

This construction uses every target in the batch exactly once and uses the same exact-Gaussian remainder as QC-FM-Mixture. Its distinction is entirely in the pairing: instead of reverting to the Baseline on the remainder, it propagates the local geometry induced by the QC anchors through anchor-centered source and target neighborhoods.

Both hybrids modify only the batchwise source--target pairing; the interpolation
path and flow-matching objective remain unchanged. The QC
construction and hybrid training procedure are given in
Algorithms~\ref{alg:qcfm} and~\ref{alg:hybrid}, respectively.

\section{Theoretical Analysis}\label{sec:theory}

Each training step draws a fresh frame $U$ and regresses against the coupling it induces, and the results below analyze this per-step coupling at the population level. The effect of averaging over redrawn frames is quantified in Appendix~\ref{app:resample}, and the finite-batch approximation in Appendix~\ref{app:consist}.

\begin{proposition}[Complexity]\label{t:complexity}
For a batch of size $B$ in data dimension $d$ with $k$ orthonormal slice directions, one QC-FM batch runs in $O(B\log B)$ time when $d$ and $k$ are held fixed. The detailed full cost is given in Proposition~\ref{app:complexity} (appendix).
\end{proposition}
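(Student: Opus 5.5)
We prove the bound by walking through the QC-FM construction step by step and charging each step its arithmetic cost as a function of $(B,d,k)$. Only at the end do we set $d$ and $k$ to constants. The steps, in the order Algorithm~\ref{alg:qcfm} runs them, are as follows.

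\emph{Drawing the frame.} We sample a $d\times k$ Gaussian matrix and orthonormalize it by thin QR (Householder or modified Gram--Schmidt). This costs $O(dk)$ for sampling and $O(dk^2)$ for QR. Neither quantity depends on $B$.

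\emph{Projecting.} Computing all $v_{ij}=u_j^\top x^{(i)}$ is the product $X U$ with $X\in\R^{B\times d}$, which takes $O(Bdk)$ operations.

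\emph{Ranking.} For each of the $k$ columns we sort the $B$ projected values with a comparison sort and read off the ranks $r_{ij}$ by inverting the sorting permutation. This costs $O(B\log B)$ per column, so $O(kB\log B)$ in total. Ties are broken by index and do not change the complexity.

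\emph{Mapping ranks to quantiles.} The grid $\tau=(r-0.5)/B$ takes only $B$ distinct values. We therefore evaluate $\Phi^{-1}$ once on the grid $\{(r-0.5)/B\}_{r=1}^B$ and then index into it, at a cost of $O(B)$ plus $O(Bk)$ lookups. Here we assume, as is standard, that one evaluation of $\Phi^{-1}$ to machine precision costs $O(1)$ (for example via a rational approximation).

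\emph{Completing the source.} Sampling the $\epsilon^{(i)}$ costs $O(Bd)$. We never form $UU^\top$. Instead we compute $(I-UU^\top)\epsilon=\epsilon-U(U^\top\epsilon)$ and add $Uz$, each of which is a product costing $O(Bdk)$. Forming $UU^\top$ explicitly would add an avoidable $O(Bd^2)$ term.

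Adding the five contributions gives
\[
T(B,d,k)=O\bigl(dk^2+Bdk+kB\log B\bigr).
\]
This is the full cost that Proposition~\ref{app:complexity} would state. With $d$ and $k$ held fixed, the first term is $O(1)$, the second is $O(B)$ and the third is $O(B\log B)$, so the total is $O(B\log B)$, as claimed.

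Most of this is bookkeeping. The one point that needs a modeling convention is the quantile step, namely that $\Phi^{-1}$ counts as unit cost, together with the reading of ``held fixed''. The $O(B\log B)$ bound uses the comparison-sort bound. It also takes $d$ and $k$ to be constants, so that the $O(Bdk)$ linear algebra, which in practice dominates when $d$ is large, is absorbed into the $O(B)$ term.
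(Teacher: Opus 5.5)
Your proposal is correct and follows the paper's proof essentially line for line: it charges $O(Bdk)$ to the projections, $O(kB\log B)$ to the $k$ sorts and $O(dk^2)$ to the thin QR, and computes the complement as $\epsilon-U(U^\top\epsilon)$ so that this step folds into the $O(Bdk)$ term, before holding $d,k$ fixed. Your extra steps for the quantile lookup and the noise sampling are absorbed into the same terms. The appendix version also states an $O(Bd+dk)$ memory bound, which you omit, but the time claim does not need it.
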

By contrast, assignment-based couplings such as minibatch OT (OT-CFM \citep{Ton24}) solve a full batch assignment, at $O(B^2d)$ for the cost matrix, plus $O(B^3)$ for an exact Hungarian solve or $O(B^2)$ per Sinkhorn iteration.

\begin{theorem}[The irreducible variance vanishes along slices]\label{t:floor}
For a coupling $\pi$, let $Y:=x_1-x_0$ for the constant velocity of the linear path and
\[
\mathcal{L}^\pi_{\mathrm{FM}}(v):=\E_{t\sim q,\,\pi}\big[\|v(x_t,t)-Y\|^2\big],
\]
where $q$ is a probability density on $[0,1]$ that is positive almost everywhere. Its minimum over all fields is $\inf_v \mathcal{L}^\pi_{\mathrm{FM}}=\int_0^1 q(t)\,\mathcal F^\pi_t\,dt$, where
\[
\mathcal F^\pi_t:=\E_\pi\big[\big\|Y-\E_\pi[Y\mid x_t]\big\|^2\big]
\]
is the \emph{irreducible variance} at time $t$ \citep{Lip23,TimeBlind26}, with slice component $\mathcal F^{\pi}_t(u_j):=\E_\pi\big[(u_j^\top Y-\E_\pi[u_j^\top Y\mid x_t])^2\big]$. Let $\pi^{\mathrm{qc}}_{k,U}$ be the QC-FM coupling for a frame $U$ with $k$ slices and $\pi^{\mathrm{ind}}:=\gam d\otimes p_1$ be the independent coupling used by the Baseline, where $\gam d = \mathcal{N}(0, I_d)$. For a fixed frame $U$,
\begin{equation*}
\begin{split}
\inf_v \mathcal{L}^{\pi^{\mathrm{qc}}_{k,U}}_{\mathrm{FM}}
&\le \inf_v \mathcal{L}^{\pi^{\mathrm{ind}}}_{\mathrm{FM}}
   -\sum_{j=1}^k\int_0^1 q(t)\mathcal F^{\pi^{\mathrm{ind}}}_t(u_j)\,dt\\
&< \inf_v \mathcal{L}^{\pi^{\mathrm{ind}}}_{\mathrm{FM}},
\end{split}
\end{equation*}
the gap being the $q$-weighted sum of the Baseline slice variances, each strictly positive whenever $\Var(u_j^\top X)>0$ for $X \sim p_1$.
\end{theorem}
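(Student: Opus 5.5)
The plan is to split the irreducible variance orthogonally into the $k$ slice components plus a component in the complement $\Pperp:=I_d-UU^\top$, and to compare the two couplings term by term. For any coupling, $\|Y-\E[Y\mid x_t]\|^2=\sum_{j=1}^k (u_j^\top Y-\E[u_j^\top Y\mid x_t])^2+\|\Pperp Y-\E[\Pperp Y\mid x_t]\|^2$, so
\[
\mathcal F^\pi_t=\sum_j\mathcal F^\pi_t(u_j)+\mathcal F^\pi_t(\perp).
\]
Given the first claim, $\inf_v\mathcal L^\pi_{\mathrm{FM}}=\int_0^1 q(t)\mathcal F^\pi_t\,dt$, is the standard fact that the pointwise minimiser is $v^\star(x,t)=\E_\pi[Y\mid x_t=x]$, it suffices to show two things for a.e.\ $t\in(0,1)$: (a) $\mathcal F^{\mathrm{qc}}_t(u_j)=0$ for every $j$, and (b) $\mathcal F^{\mathrm{qc}}_t(\perp)\le\mathcal F^{\mathrm{ind}}_t(\perp)$. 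Integrating against $q$ then gives the first inequality.

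\textbf{Step (a).} At the population level, the slice code is $u_j^\top x_0=g_j(s_j):=\Phi^{-1}(F_j(s_j))$, where $s_j=u_j^\top x_1$ and $F_j$ is the CDF of $u_j^\top X$. I will assume the projections are atomless, or else break ties by an independent uniform variable, so that $g_j$ is nondecreasing. Then $u_j^\top x_t=(1-t)g_j(s_j)+t s_j$ is strictly increasing in $s_j$ for $t\in(0,1]$. Hence $s_j$, and therefore $u_j^\top Y=s_j-g_j(s_j)$, is a measurable function of $u_j^\top x_t$, and so of $x_t$. The conditional variance vanishes, and the per-slice ideal flow is exactly straight. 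A useful by-product is that $U^\top x_t$ determines $U^\top x_1$ (and $U^\top x_0$) under QC.

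\textbf{Step (b)} is the main obstacle, because the two couplings condition on different $\sigma$-fields. I would handle it by a garbling (data-processing) argument. Under both couplings, $\Pperp x_0=\Pperp\epsilon$ with $\epsilon$ independent of $x_1$. Hence the joint law of $W:=(x_1,\Pperp\epsilon)$ is the same in both cases, and the target $\Pperp Y=\Pperp x_1-\Pperp\epsilon$ is the same function of $W$.

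Under QC, by Step (a), $\sigma(x_t)=\sigma(U^\top x_1,\Pperp x_t)=:\sigma(S)$, where $S$ is a function of $W$. Under the Baseline, $x_t$ is equivalent to $(U^\top x_t,\Pperp x_t)$, and $U^\top x_t=t\,U^\top x_1+(1-t)U^\top\epsilon$. Here $U^\top\epsilon$ is independent of $W$, because the Gaussian components $U^\top\epsilon$ and $\Pperp\epsilon$ are orthogonal and hence independent. So the Baseline observation is $S$ passed through an independent noisy channel, $\Pperp Y$ and $S$ have the same joint law in both models, and conditioning on a garbling cannot reduce mean squared prediction error. Formally, $\E[\Pperp Y\mid S,\text{noise}]=\E[\Pperp Y\mid S]$, and the tower property gives $\mathcal F^{\mathrm{ind}}_t(\perp)\ge\E\|\Pperp Y-\E[\Pperp Y\mid S]\|^2=\mathcal F^{\mathrm{qc}}_t(\perp)$.

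\textbf{Strictness.} I would show that $\mathcal F^{\mathrm{ind}}_t(u_j)>0$ for every $t\in(0,1)$ whenever $\Var(u_j^\top X)>0$. If it vanished, then $u_j^\top Y=s_j-u_j^\top\epsilon$ would be a.s.\ a function of $x_t$, and hence $s_j$ would be too, since $u_j^\top x_t=t s_j+(1-t)u_j^\top\epsilon$ and $s_j=u_j^\top x_t+(1-t)u_j^\top Y$. But $x_t$ is a sum involving the independent nondegenerate Gaussian $\epsilon$, so the conditional law of $s_j$ given $x_t$ is a nontrivial tilt of the law of $s_j$. Concretely, $\Var(s_j\mid x_t)=0$ a.s.\ together with Gaussian convolution forces $s_j$ to be constant, which contradicts $\Var(u_j^\top X)>0$. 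Since $q>0$ a.e., the $q$-weighted integral of this positive function is positive, which gives the strict inequality.
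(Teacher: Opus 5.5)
Your proposal is correct and follows essentially the same route as the paper: the same split of $\mathcal F_t$ into slice and complement parts, the same monotone-inversion argument for vanishing slice variance, and the same garbling argument for the complement. That garbling argument realizes both couplings on shared $(x_1,\Pperp\epsilon)$ with an independent frame noise ($U^\top\epsilon$ in your version, $Z''$ in the paper's). Strict positivity also uses the same Bayes-tilt argument, and the one small difference is that you invert the slice map in the data variable $s_j$, which needs only $g_j$ nondecreasing and $t>0$, whereas the paper inverts in the code variable using strict monotonicity of $F_j^{-1}\circ\Phi$.
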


Here $\E_\pi[Y\mid x_t]$ is the best possible velocity prediction from the current state, and $\mathcal F^\pi_t$ is the mean squared spread of the velocities of all training pairs passing through the point $x_t$ around this prediction; since the network can output only one velocity per point, this spread is the part of the regression error no network can remove.

A lower irreducible variance bounds the attainable loss, not the speed of optimization. We prove no direct variance-to-FID inequality, since FID also depends on the trained network and the sampler.

\begin{theorem}[Slice straightness]\label{t:straight}
For a fixed frame $U$, the ideal QC-FM flow is exactly straight along every slice and distinct pairs never cross within a slice, so all $k$ one-dimensional slice marginals are generated exactly.
\end{theorem}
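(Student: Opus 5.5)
The plan is to reduce everything to a one-dimensional statement along each slice $u_j$, using the population form of the QC coupling. For a fixed frame $U$ and $X\sim p_1$, write $w:=u_j^\top x_1$ with law $\mu_j$ and CDF $F_j$. At the population level, the rank-to-quantile step becomes the comonotone map $z=\Phi^{-1}(F_j(w))$, with an independent uniform randomisation inside any atom of $\mu_j$. Since $u_j^\top x_0=z$ by orthonormality of $U$, the slice coordinate of the linear path is
\[
s^j_t:=u_j^\top x_t=(1-t)\,z+t\,w ,
\]
and its slice velocity is $u_j^\top Y=w-z$.

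\textbf{Step 1 (injectivity).} The key observation is that the pair $(z,w)$ is comonotone: it is supported on a nondecreasing curve in $\R^2$. Hence, for each $t\in(0,1)$, the map $(z,w)\mapsto(1-t)z+tw$ is strictly increasing along that support. If $w<w'$, then $z\le z'$, so $s_t<s_t'$. If $w=w'$ lies in an atom, the randomised codes give $z<z'$, which again yields $s_t<s_t'$. Therefore $s^j_t$ determines $(z,w)$, and in particular determines $u_j^\top Y$, through a measurable function $g^j_t$. This gives two consequences.
\begin{itemize}
\item Distinct pairs never cross within the slice for $t\in(0,1)$.
\item $\E_\pi[u_j^\top Y\mid x_t]=u_j^\top Y=g^j_t(s^j_t)$ almost surely, which is the mechanism already underlying Theorem~\ref{t:floor}.
\end{itemize}

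\textbf{Step 2 (straightness).} The ideal field $v^\star_t(x)=\E_\pi[Y\mid x_t=x]$ has slice component $u_j^\top v^\star_t(x)=g^j_t(u_j^\top x)$, which depends only on the slice coordinate. The projected flow therefore closes into the scalar ODE $\dot s=g^j_t(s)$. By construction this ODE is solved by the straight lines $s_t=(1-t)z+tw$, each with constant speed $w-z$.

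\textbf{Step 3 (marginals).} By Lipman et al., the interpolant law $p_t$ solves the continuity equation with velocity $v^\star_t$. Projecting onto $u_j$ shows that the slice laws $\Law(s^j_t)$ solve the one-dimensional continuity equation with velocity $g^j_t$. At $t=1$ this slice law is $\Law(w)=\mu_j$, and at $t=0$ it is $\Law(z)=\mathcal N(0,1)$. So the ideal flow reproduces all $k$ slice marginals exactly.

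The main obstacle is the regularity needed to speak of ``the'' flow. The function $g^j_t$ may fail to be Lipschitz, for example when $F_j$ has atoms or flat pieces. In that case, identifying ideal trajectories with the straight lines requires uniqueness for the ODE or continuity equation. I would first try to avoid pathwise uniqueness altogether. One route is to argue via the superposition principle, using the fact that $g^j_t$ is monotone in $s$; this holds because $g^j_t(s)=w-z$ and $w$ increases faster than $z$ decreases along $s$. In one dimension this yields a unique monotone (quantile) flow. The other route is to check directly that the quantile functions of $\Law(s^j_t)$ interpolate linearly between those of $\mathcal N(0,1)$ and $\mu_j$, which is the one-dimensional displacement interpolation, and that this interpolation is straight.
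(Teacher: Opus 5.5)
Your architecture matches the paper's. You reduce to one slice and use comonotonicity to make the slice position injective in the code. From this, the slice velocity, and hence $u_j^\top v^\ast$, is a function of $u_j^\top x$ alone. The projected flow therefore closes into a scalar ODE whose characteristics are the lines $(1-t)z+tS(z)$, and non-crossing follows from the ordering of codes.

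The gap is the step you flag and leave open: uniqueness for $\dot s=g^j_t(s)$. Without it, ``this ODE is solved by the straight lines'' does not imply that the flow of $v^\ast$ follows them, and neither fallback route supplies it.

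\textbf{Quantile-interpolation route.} This only re-identifies the interpolant's slice laws as the displacement interpolation, which is the paper's Corollary~\ref{a:rk}. It cannot exclude a different solution from $\mathcal N(0,1)$ unless the one-dimensional ODE or continuity equation is well posed. The same objection applies to your Step~3.

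\textbf{Superposition route.} This rests on a false monotonicity claim. Along the comonotone support, both $z$ and $w=S(z)$ are nondecreasing in $s$; neither decreases. So $g^j_t=(S-\mathrm{id})\circ m_t^{-1}$ increases where $S'>1$ and decreases where $S'<1$. Take a uniform slice on $(-a,a)$ with $a>\sqrt{2\pi}/2$: then $S'=2a\phi$ exceeds $1$ near the center and tends to $0$ in the tails, so $g^j_t$ is not monotone. Even a monotone field would help only in the nonincreasing direction: $\dot s=\operatorname{sign}(s)\sqrt{|s|}$ is nondecreasing yet has two solutions from $0$.

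The paper closes this step with the hypothesis of the full statement (Theorem~\ref{app:straight}): each slice has a continuous positive density $f$ on an interval. Then $S=F^{-1}\circ\Phi$ is $C^1$ with $S'=\phi/(f\circ S)>0$, so $\partial_z m_t=(1-t)+tS'>0$. The implicit function theorem makes $h(t,\eta)=(S-\mathrm{id})(m_t^{-1}(\eta))$ jointly $C^1$ on $[0,1)\times\R$, hence locally Lipschitz. Picard--Lindel\"of then forces every trajectory to be one of the lines, extended to $t=1$ by continuity.

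If you want your more general setting with atoms or flat pieces, the right substitute for monotonicity is a difference-quotient bound. For $s<s'$ on the support with code increment $a\ge0$ and target increment $b\ge0$, $(g^j_t(s')-g^j_t(s))/(s'-s)=(b-a)/((1-t)a+tb)\in[-1/(1-t),\,1/t]$. So $g^j_t$ is Lipschitz on the support for each $t\in(0,1)$, although $t\to0$ and the field on the gaps of the support still need separate care.

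Finally, state explicitly that generation starts from $\gamma_d$, not from the QC source $\alpha_k^U$, which differs from it for $k\ge2$. The conclusion holds because the slice ODE closes and $u_j^\top y_0\sim\gamma_1$ under $\gamma_d$ as well. By uniqueness, $u_j^\top y_1=S(u_j^\top y_0)\sim(u_j)_\#p_1$. Your Step~3, phrased for the interpolant $p_t$, does not by itself say this.
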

\noindent The precise assumptions, formal statement, and proof are given in Theorem~\ref{app:straight} (appendix).

Straightness here is not learned but built in. The comonotone slice coupling makes the per-slice velocity single-valued, so the generated law can deviate from $p_1$ only in the dependence among the slice marginals (Theorem~\ref{t:price}).

These results concern the QC primitive and do not bound the floor of the full
hybrid coupling used in the experiments. The hybrids share their source marginal;
Proposition~\ref{t:completion} in Appendix~\ref{app:completion-section} shows
that their exact-Gaussian remainder attenuates whatever prior mismatch the QC
anchors carry.

\section{Experiments}

\subsection{Setup}

\paragraph{Benchmarks and protocol.}
We diagnose coupling geometry on Checkerboard and an 8-Gaussians ring, and
evaluate generation on CIFAR-10~\citep{Krizhevsky09CIFAR},
CelebA-64~\citep{Liu15CelebA}, FFHQ-64~\citep{Karras19FFHQ}, and
class-conditional ImageNet-64~\citep{Deng09ImageNet}. We adopt the EDM-based
image configuration released with
\emph{Flow Matching Guide and Code}
\citep{lipman2024flowmatchingguidecode} as the common model and use its
independent-coupling configuration as the Baseline. Image experiments use the same
U-Net, linear flow-matching path, batch size $64$, Adam
($\beta=(0.9,0.95)$, learning rate $10^{-4}$), EMA, and EDM-style timestep law
across methods; only the coupling changes. Training and timing measurements
use NVIDIA RTX A6000 GPUs, with latency measured on a single GPU.
Class-conditional ImageNet uses classifier-free guidance with scale $1.5$. We
report FID-50K from a Heun solver at $\mathrm{NFE}=50$ after matched training
budgets: $1600/500/1200/60$ epochs for
CIFAR-10/CelebA/FFHQ/ImageNet, respectively. The first three budgets train each
method until performance has sufficiently converged; on ImageNet-64, we instead
compare methods at the same number of training iterations.

\paragraph{Comparisons and configurations.}
We compare the Baseline, exact Hungarian OT-CFM \citep{Ton24}, and the two
QC-FM hybrids. CIFAR-10 ablations select $k=16$, $p=0.2$ for Mixture, and
$p=0.8$ for Adjacency. We transfer these values without per-dataset tuning: the calibrated heuristic of
Appendix~\ref{app:kstar} selects $k=32$, which is used for both reported FFHQ
variants; all other image results use $k=16$. Slice frames are redrawn
independently at every step. We always report the prescribed final checkpoint.

\subsection{Toy Coupling Diagnostics}

Checkerboard tests disconnected support, while 8-Gaussians tests mode-preserving
geometry. We measure the path-length ratio (PLR) and the summed coordinate variance of the
target velocity $x_1-x_0$. Table~\ref{tab:toy_results} reports results after
$20{,}000$ training steps with the same seed; this toy-only selection is not
transferred to images.

\begin{figure}[!t]
  \centering
  \includegraphics[width=\columnwidth]{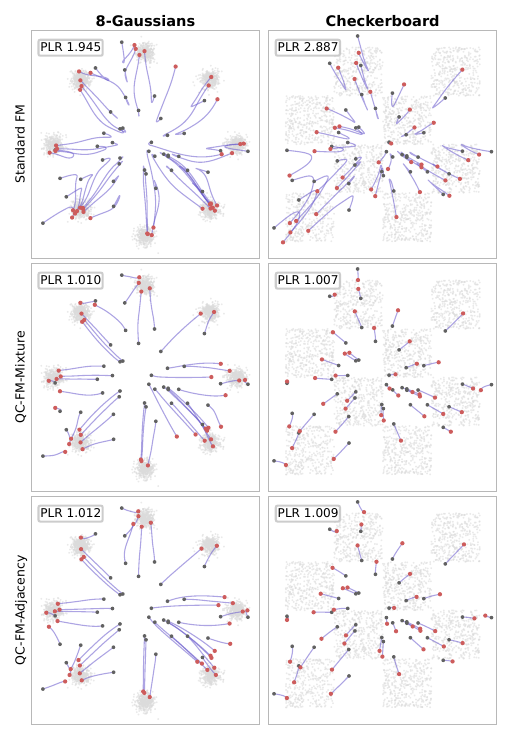}
  \caption{Learned flow trajectories on 8-Gaussians and Checkerboard.
  Light-gray points show the target distribution, dark-gray points are shared
  Gaussian initial states, red points are the corresponding model endpoints,
  and purple curves are $100$-step Euler trajectories. PLR is integrated path
  length divided by endpoint displacement, so $\mathrm{PLR}=1$ corresponds to a straight path.
  Both QC-FM variants visibly straighten trajectories learned by the Baseline.}
  \label{fig:straightness}
\end{figure}

\begin{table}[t]
\centering
\caption{Toy coupling diagnostics after $20{,}000$ steps. Best results are
bold and runners-up are underlined.}
\label{tab:toy_results}
\small\setlength{\tabcolsep}{9pt}
\resizebox{\columnwidth}{!}{%
\begin{tabular}{lcccc}
\toprule
& \multicolumn{2}{c}{Checkerboard} & \multicolumn{2}{c}{8-Gaussians} \\
\cmidrule(lr){2-3}\cmidrule(lr){4-5}
Method & PLR \(\downarrow\) & Velocity Var. \(\downarrow\) &
PLR \(\downarrow\) & Velocity Var. \(\downarrow\) \\
\midrule
Baseline & 2.887 & 4.578 & 1.945 & 5.982 \\
OT-CFM & 1.010 & 0.354 &
1.013 & 0.982 \\
\midrule
QC-FM-Mixture & \textbf{1.007} & \underline{0.174} & \textbf{1.010} & \underline{0.804} \\
QC-FM-Adjacency & \underline{1.009} & \textbf{0.158} &
\underline{1.012} & \textbf{0.707} \\
\bottomrule
\end{tabular}
}
\end{table}

Figure~\ref{fig:straightness} makes the PLR diagnostic tangible: the Baseline
exhibits substantially more curvature, whereas both QC-FM variants yield
nearly straight learned paths. Table~\ref{tab:toy_results} quantifies the same
comparison. QC-FM-Mixture attains the lowest PLR on both datasets, narrowly
improving on OT-CFM, while QC-FM-Adjacency gives the lowest target-velocity
variance. Both QC-FM variants report lower values than OT-CFM on both
diagnostics without solving a full batch assignment. These results probe
coupling mechanics rather than claim global transport optimality. In
particular, Adjacency's lower target-velocity variance does not translate into
lower image FID than Mixture, so neither diagnostic alone determines sample
quality. This distinction is consistent with our theory, which controls
per-slice irreducible variance rather than final sample quality.

\subsection{Image Generation Results}

\begin{table}[t]
\centering
\caption{Main FID-50K comparison at $\mathrm{NFE}=50$. Best results are bold
and runners-up are underlined. Parentheses report
the relative FID change from the Baseline; improvements are
\textcolor{red}{red} and degradations are \textcolor{blue}{blue}. Mixture uses
$(p,k)=(0.2,16)$ except on FFHQ, where it uses $(0.2,32)$; Adjacency uses
$(0.8,16)$ except on FFHQ, where it uses $(0.8,32)$.}
\label{tab:main_results}
\small
\resizebox{\columnwidth}{!}{%
\begin{tabular}{lcccc}
\toprule
& \multicolumn{1}{c}{$32{\times}32$} &
\multicolumn{3}{c}{$64{\times}64$} \\
\cmidrule(lr){2-2}\cmidrule(lr){3-5}
Method &
\shortstack{CIFAR-10} &
\shortstack{CelebA} &
\shortstack{FFHQ} &
\shortstack{ImageNet} \\
\midrule
Baseline & 2.24 & 1.76 & 2.64 & 8.81 \\
OT-CFM (Hungarian) &
2.59\fidworse{15.6} & 2.23\fidworse{26.7} &
3.19\fidworse{20.8} & 8.80\fidbetter{0.1} \\
\midrule
QC-FM-Adjacency (ours) &
\underline{2.12}\fidbetter{5.4} & \underline{1.75}\fidbetter{0.6} &
\underline{2.35}\fidbetter{11.0} & \underline{8.12}\fidbetter{7.8} \\
\textbf{QC-FM-Mixture (ours)} &
\textbf{2.10}\fidbetter{6.3} & \textbf{1.54}\fidbetter{12.5} &
\textbf{2.30}\fidbetter{12.9} & \textbf{7.82}\fidbetter{11.2} \\
\bottomrule
\end{tabular}
}
\vspace{2pt}

\end{table}

Table~\ref{tab:main_results} shows that QC-FM-Mixture is the best method on all
four datasets, improving over the Baseline by
$6.3\%$, $12.5\%$, $12.9\%$, and $11.2\%$ on CIFAR-10, CelebA, FFHQ, and
ImageNet, respectively. It also outperforms Hungarian OT-CFM throughout,
with the largest margins on the two face datasets. Adjacency likewise improves
over the Baseline on all four datasets, including a $7.8\%$ gain on
ImageNet at the transferred $k=16$, although Mixture remains consistently
stronger in image FID. This pattern is consistent with the hybrids' designs:
Mixture retains an explicit Baseline component on its Gaussian
remainder, whereas Adjacency applies QC to most of the batch and organizes the
remainder through anchor-local neighborhoods. As shown in
Table~\ref{tab:toy_results}, the latter attains the lowest toy velocity variance.
An entropic Sinkhorn OT-CFM solver was unstable in this setting (FID above $8$
on CIFAR-10), so we report the stable Hungarian OT-CFM result.
Since minibatch OT can benefit from larger batches \citep{Zha25},
Table~\ref{tab:batch_ablation} in Appendix~\ref{app:batch-size} records the
matched batch-size ablation. Matched-input endpoint comparisons appear in
Appendix~\ref{app:image-qualitative}.

\subsection{Sliced and Sort-Based Couplings}

Table~\ref{tab:sliced_compare} isolates QC-FM's one-sided source construction
by matching the backbone, schedule, batch size, and slice count against
two-sided sliced OT \citep{Tan25} and batchwise sort/quantile coupling
\citep{Gro26}.
At essentially matched step time, sort/quantile coupling and sliced OT have $5.7\%$ and $18.1\%$ higher FID, respectively, than QC-FM-Mixture. The gap therefore cannot be explained by additional computation and supports the utility of constructing the Gaussian source one-sidedly rather than matching two pre-sampled batches.

\begin{table}[t]
\centering
\caption{One- versus two-sided rank/slice couplings on CIFAR-10 under matched
settings. Parentheses report the relative FID increase over QC-FM-Mixture in
\textcolor{blue}{blue}. Time/step is the mean wall-clock time over $100$
training steps after $10$ warm-up steps at $B=64$.}
\label{tab:sliced_compare}
\scriptsize\setlength{\tabcolsep}{2.5pt}
\resizebox{\columnwidth}{!}{%
\begin{tabular}{@{}lcc@{}}
\toprule
Method & FID & Time/step (ms) \\
\midrule
Sliced-OT \citep{Tan25} & 2.48\fidworse{18.1} & 288.5 \\
Sort/quantile \citep{Gro26} & 2.22\fidworse{5.7} & 289.7 \\
\textbf{QC-FM-Mixture (ours)} & \textbf{2.10} & 288.1 \\
\bottomrule
\end{tabular}
}
\end{table}

\subsection{Computational Cost}

Proposition~\ref{t:complexity} predicts $O(B\log B)$ batch dependence for
fixed $(d,k)$, versus $O(B^3)$ for exact Hungarian assignment. Because the
U-Net masks these small differences, we time coupling construction in
isolation on a single GPU.

\begin{table}[t]
\centering
\caption{Isolated CIFAR-10 coupling time (ms/batch; $d=3072,k=16$).
Model forward/backward is excluded.}
\label{tab:overhead}
\small\setlength{\tabcolsep}{5pt}
\resizebox{\columnwidth}{!}{%
\begin{tabular}{lcccc}
\toprule
Coupling & $B{=}64$ & $B{=}256$ & $B{=}1024$ & $B{=}2048$ \\
\midrule
Baseline           & $0.01$ & $0.01$ & $0.02$ & $0.04$ \\
OT-CFM (Sinkhorn)  & $4.95$ & $5.06$ & $6.13$ & $24.62$ \\
OT-CFM (Hungarian) & $0.38$ & $5.09$ & $105.12$ & $568.31$ \\
\midrule
QC-FM-Mixture      & $0.42$ & $0.42$ & $0.50$ & $0.69$ \\
QC-FM-Adjacency    & $2.96$ & $3.89$ & $4.74$ & $5.51$ \\
\bottomrule
\end{tabular}
}
\end{table}

\begin{table}[t]
\centering
\caption{QC-FM cost breakdown at $B=256$, averaged over $200$ calls.
Percentages use the $1079.8$\,ms full training step.}
\label{tab:cost_breakdown}
\small\setlength{\tabcolsep}{4pt}
\resizebox{\columnwidth}{!}{%
\begin{tabular}{lccc}
\toprule
Component & Cost & Time/batch (ms) & \% of step \\
\midrule
Projection $U^\top x$ & $O(Bdk)$ & $0.011$ & $<0.01$ \\
QR / frame generation & $O(dk^2)$ & $0.162$ & $0.02$ \\
$k$ per-slice sorts & $O(kB\log B)$ & $0.074$ & $<0.01$ \\
Adjacency $d_U$ assignment & $O(MB)$ & $2.29$ & $0.21$ \\
\bottomrule
\end{tabular}
}
\end{table}

Table~\ref{tab:overhead} shows that QC-FM-Mixture changes only
$0.42\to0.69$\,ms as $B$ grows from $64$ to $2048$, while Hungarian OT grows
from $0.38$ to $568.31$\,ms. At $B=2048$, Mixture is over $800\times$ faster
than Hungarian and $35\times$ faster than Sinkhorn; Adjacency remains over
$100\times$ faster than Hungarian. Table~\ref{tab:cost_breakdown} shows that at
$B=256$, the shared QC primitives total only $0.25$\,ms and the entire Mixture
coupling is below $0.05\%$ of a step. Adjacency's auction is the only visible
component, yet is still $0.21\%$ of a step.

\subsection{Ablations}

Table~\ref{tab:ablations}(a) shows a non-monotone slice-count trade-off:
Mixture improves from $2.46$ at $k=1$ to $2.10$ at $k=16$, then degrades to
$2.24$ at $k=64$. Additional nondegenerate slices remove corresponding
within-frame residuals and provide the transport-cost gain predicted by
Theorems~\ref{t:floor} and~\ref{t:gain}, but inter-slice dependence
(Theorem~\ref{t:price}), frame disagreement, finite-batch estimation, and
hybrid completion counteract that gain. We therefore use the calibrated
gain--dependence heuristic of Appendix~\ref{app:kstar} as a selection guide.
As shown in Table~\ref{tab:ablations}(b), at $k=16$, Mixture prefers a small
anchor fraction ($p=0.2$), retaining more Baseline pairs, whereas Adjacency
benefits from denser anchor neighborhoods and prefers $p=0.8$. These CIFAR-10
choices define the transferred image configuration above.

\begin{table}[t]
\centering
\caption{CIFAR-10 ablations at the common $1600$-epoch endpoint.
(a) Slice-count sweep for QC-FM-Mixture at $p=0.2$.
(b) Anchor-ratio sweep at $k=16$.}
\label{tab:ablations}
\small
\begin{minipage}[t]{0.48\columnwidth}
\centering
\textbf{(a) Number of slices}\\[2pt]
\resizebox{\linewidth}{!}{%
\begin{tabular}{lcc}
\toprule
Method & \(k\) & FID \\
\midrule
QC-FM-Mixture & 1  & 2.46 \\
QC-FM-Mixture & 4  & 2.42 \\
QC-FM-Mixture & 16 & \textbf{2.10} \\
QC-FM-Mixture & 64 & 2.24 \\
\bottomrule
\end{tabular}
}
\end{minipage}
\hfill
\begin{minipage}[t]{0.48\columnwidth}
\centering
\textbf{(b) Anchor ratio}\\[2pt]
\renewcommand{\arraystretch}{0.78}%
\resizebox{0.88\linewidth}{!}{%
\begin{tabular}{lcc}
\toprule
Method & \(p\) & FID \\
\midrule
QC-FM-Mixture & 0.2 & \textbf{2.10} \\
QC-FM-Mixture & 0.5 & 2.17 \\
QC-FM-Mixture & 0.8 & 2.26 \\
\midrule
QC-FM-Adjacency & 0.2 & 2.56 \\
QC-FM-Adjacency & 0.5 & 2.57 \\
QC-FM-Adjacency & 0.8 & \textbf{2.12} \\
\bottomrule
\end{tabular}
}
\end{minipage}
\end{table}

Table~\ref{tab:pca_fixed} in Appendix~\ref{app:pca-fixed} tests
projection-frame design by comparing the default randomly resampled \(U\) with
a single PCA-aligned \(U\) fixed throughout training. We also reverse the
rank-to-quantile order while preserving the same Gaussian quantile multiset.
Table~\ref{tab:rank_orientation} in Appendix~\ref{app:rank-orientation}
compares the Baseline, anti-monotone, and monotone variants to test whether
correct rank orientation, rather than quantile regularization alone, drives
the gain.

\section{Discussion}

QC-FM should be understood as a \emph{batch-local structured coupling
surrogate}, not an approximation to global OT. Its quantiles are
estimated within each mini-batch, its projections capture only partial
geometry, and its theory controls coupling cost and per-slice regression
variance rather than FID or globally straight trajectories. The
experiments show that this limited, inexpensive bias can improve
sample quality under matched architectures and training budgets. The hybrid
results should likewise be read empirically: Gaussian-remainder completion
attenuates source-prior mismatch, but theory does not rank Mixture and
Adjacency because their remainder pairings define different joint couplings.

Two questions remain open. First, slice and anchor selection is
dataset-dependent. The gain--dependence statistics provide a useful calibrated
guide, but a principled account must also include frame resampling,
finite-batch error, hybrid completion, and their relationship to final sample
quality. Second, QC-FM remains local to the current mini-batch. Replacing
empirical batch quantiles with learned global quantile estimates could yield
more stable slice coordinates across batches; establishing when such estimates
preserve the one-sided coupling guarantees is an important direction.

\section{Conclusion}

We introduced \textbf{Quantile Coupling Flow Matching (QC-FM)}, a lightweight structured coupling method for flow matching. QC-FM constructs source samples by preserving the empirical rank structure of data points along a small number of random projections and completing the remaining dimensions by conditional Gaussian sampling. This yields a simple and efficient alternative to batchwise transport matching that injects geometric structure into the coupling without requiring a full pairwise assignment procedure.

Building on the QC coupling primitive, we proposed Gaussian-remainder completion strategies --- \textbf{QC-FM-Mixture} and \textbf{QC-FM-Adjacency}. Each applies QC to structured anchors and fills the non-anchor source slots with exact Gaussian samples, thereby attenuating source-prior mismatch while retaining the QC bias. Mixture pairs this remainder as in the Baseline, whereas Adjacency organizes it through QC-anchor neighborhoods, using every target exactly once. QC-FM-Mixture is our main variant, delivering the lowest endpoint FID across all four image benchmarks.

Taken together, our results show that preserving projected rank structure is a practical and effective way to bias flow matching toward more organized source--target pairings. While QC-FM is not a substitute for global optimal transport, it offers an attractive middle ground between the Baseline and expensive batchwise transport methods: it is simple to implement, computationally lightweight, and controllable through the number of projections. We believe this makes QC-FM a promising foundation for future work on scalable structured couplings in flow matching.

Future work should make slice and anchor selection more dataset-adaptive and
extend QC-FM from mini-batch empirical quantiles to globally consistent
estimates.

\bibliography{qcfm}


\onecolumn

\begin{center}
{\LARGE\bfseries Supplement Material: One-Sided Quantile Coupling for Flow Matching\par}
\end{center}
\vspace{1em}

\setcounter{secnumdepth}{2} 
\appendix

\setcounter{table}{0}
\renewcommand{\thetable}{\Alph{table}}
\setcounter{figure}{0}
\renewcommand{\thefigure}{\Alph{figure}}

\numberwithin{theorem}{section}
\makeatletter
\renewcommand{\@seccntformat}[1]{Appendix~\csname the#1\endcsname\quad}
\makeatother

\section{Proofs and Further Results}\label{app:mainproofs}

Throughout this appendix, we use the following notation. $X\sim p_1$ is the data variable in $\R^d$, and $\gam n$ is the standard Gaussian on $\R^n$ with $\Phi$ the cdf of $\gam1$. A frame $U=[u_1,\dots,u_k]$ has orthonormal columns, and $P_U^{\perp}:=I_d-UU^\top$ projects onto its complement; a Haar frame is $U$ with uniform orthonormal columns. The QC-FM coupling $\pi^{\mathrm{qc}}_{k,U}$ pairs $X$ with $e=UZ+P_U^{\perp}\epsilon$, $\epsilon\sim\gam d$, where $Z=(Z_1,\dots,Z_k)^\top$ with $Z_j:=\Phi^{-1}(F_{u_j}(u_j^\top X))$ the Gaussian rank code of the slice $u_j^\top X$ (here $F_{u_j}$ is its cdf), and $c(\pi):=\E_{(x_0,x_1)\sim\pi}\|x_1-x_0\|^2$ is the quadratic transport cost. We write $W_2$ for the quadratic Wasserstein distance and $R^2:=\E\|X\|^2$. We also write $\rho_U:=\Law(Z)$ for the joint law of the slice codes and $\alpha_k^U:=\Law(e)$ for the fixed-frame source prior.

\subsection{Population exactness, the copula identity, and soundness}


\begin{theorem}
\label{t:exact}
Assume finite second moments and continuous slice cdfs, and write $\pi^{\mathrm{qc}}_{k,U}=\Law(e,X)$. Then:
\begin{enumerate}[label=\textup{(\alph*)},itemsep=2pt,topsep=2pt]
\item \emph{\textbf{Population Exactness}.} Each $Z_j\sim\gam1$ exactly, and $W:=\Pperp\epsilon\sim N(0,\Pperp)$ is independent of $Z$, so $\Law(e)=\rho_U\otimes\gam{d-k}=\alpha_k^U$ and $\pi^{\mathrm{qc}}_{k,U}$ has marginals $(\alpha_k^U,p_1)$.
\item \emph{\textbf{Copula Identity}.} Each map $V_j\mapsto Z_j$ is nondecreasing with $\Phi(Z_j)=F_j(V_j)$ exactly, so $Z$ and $(V_1,\dots,V_k)$ have the same copula \citep{Nelsen06}, which is unique because the marginals are continuous, by Sklar's theorem \citep{Sklar59}. Hence $e\sim\gam d$ iff the projections are independent. The entire gap between $\alpha_k^U$ and the FM prior is this one $k$-dimensional copula. It leaves the mean of $e$ at zero and every slice coordinate exactly standard Gaussian, while the joint covariance can deviate from the identity, $\Cov(e)=I_d+U\big(\E[ZZ^\top]-I\big)U^\top$, and it vanishes identically at $k=1$.

\item \emph{\textbf{Soundness for $k$}.} For every $k$ and every coupling, the ideal field $v^*$ solves the continuity equation between $\alpha_k^U$ and $p_1$ weakly, using only the tower property. Consequently, under standard well-posedness of the flow, the induced map $\Psi$ transports $\alpha_k^U$ exactly to $p_1$, so correctness depends only on the source marginal $\alpha_k^U$, not on the pairing. Sampling, however, starts from the FM prior $\gam d$ rather than from $\alpha_k^U$. The generated law then deviates from $p_1$ by at most $W_2(\Psi_\#\gam d,p_1)\le \Lip(\Psi)\,W_2(\gam k,\rho_U)$, with $\Lip(\Psi)$ the Lipschitz constant of the flow map. At $k=1$, this deviation vanishes, since $\alpha_1^U=\gam d$ exactly. Integrating from $x_0\sim N(0,I_d)$ then yields samples of law exactly $p_1$ under the stated finite-second-moment, continuous-slice-CDF, and flow well-posedness assumptions. This is the case in which the QC source marginal matches the Gaussian sampling prior exactly, and it anchors the soundness discussion for $k\ge2$.
\end{enumerate}
\end{theorem}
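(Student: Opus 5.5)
I will prove the three parts in order, with (a) and (b) reducing to the probability integral transform and (c) to a weak-form computation plus a stability estimate. Write $V_j:=u_j^\top X$ with continuous cdf $F_j:=F_{u_j}$.

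For (a), continuity of $F_j$ gives $F_j(V_j)\sim\Unif(0,1)$, so $Z_j=\Phi^{-1}(F_j(V_j))\sim\gam1$. The value $F_j(V_j)\in\{0,1\}$ occurs with probability zero, so $Z_j$ is a.s. finite. Since $\epsilon$ is drawn independently of $X$, the Gaussian vector $W=\Pperp\epsilon\sim N(0,\Pperp)$ is independent of $Z$, which is a function of $X$. Choose an orthonormal completion $[U,U_\perp]$ and rotate coordinates, so that $e$ has coordinates $(U^\top e,U_\perp^\top e)=(Z,U_\perp^\top\epsilon)$. This gives $\Law(e)=\rho_U\otimes\gam{d-k}$, and the pair $(e,X)$ has marginals $(\alpha_k^U,p_1)$ by construction. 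Finite second moments of $e$ follow from $\E Z_j^2=1$.

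For (b), $\Phi$ is a strictly increasing bijection and $F_j$ is nondecreasing, so each map $V_j\mapsto Z_j$ is nondecreasing and $\Phi(Z_j)=F_j(V_j)$ holds identically. The vector $(\Phi(Z_j))_j$ therefore equals $(F_j(V_j))_j$ pointwise, so the two vectors have the same uniform-marginal law. That common law is the copula of both $V$ and $Z$, and Sklar's theorem makes it unique because the marginals are continuous. It follows that $Z\sim\gam k$ iff this copula is the independence copula, i.e. iff the $V_j$ are independent. Combined with (a), this gives $e\sim\gam d$ iff the projections are independent. The moment claims follow quickly: $\E e=U\E Z=0$; each slice $u_j^\top e=Z_j$ is standard Gaussian; and
\[
\Cov(e)=U\E[ZZ^\top]U^\top+\Pperp=I_d+U(\E[ZZ^\top]-I)U^\top,
\]
where the cross terms vanish by independence and $\E W=0$. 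At $k=1$ every one-dimensional copula is trivial, so $\rho_U=\gam1$.

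For (c), set $v^*(x,t)=\E_\pi[Y\mid x_t=x]$ and $\mu_t=\Law(x_t)$. For a test function $\varphi\in C_c^\infty$ I will differentiate $\E\varphi(x_t)$ in $t$. Using $\dot x_t=Y$ and the tower property,
\[
\tfrac{d}{dt}\E\varphi(x_t)=\E[\nabla\varphi(x_t)\cdot Y]=\E[\nabla\varphi(x_t)\cdot v^*(x_t,t)].
\]
This is exactly the weak continuity equation with $\mu_0=\alpha_k^U$ and $\mu_1=p_1$, and differentiating under the expectation is justified by the finite second moments. Under the assumed well-posedness, uniqueness for the continuity equation yields $\Psi_\#\alpha_k^U=p_1$; I will invoke this as a standard result, and it is the step I expect to carry the real technical weight. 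The deviation bound comes from taking an optimal coupling $(G,\tilde Z)$ of $\gam k$ and $\rho_U$ and attaching the same complement $W$ to both. This produces a coupling of $\gam d$ and $\alpha_k^U$ with cost $W_2(\gam k,\rho_U)^2$, so
\[
W_2(\Psi_\#\gam d,p_1)=W_2(\Psi_\#\gam d,\Psi_\#\alpha_k^U)\le\Lip(\Psi)\,W_2(\gam d,\alpha_k^U)\le\Lip(\Psi)\,W_2(\gam k,\rho_U).
\]
The $k=1$ case then follows from (b), since $\alpha_1^U=\gam d$ makes the right-hand side zero.
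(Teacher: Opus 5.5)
Your proposal is correct and follows the paper's proof essentially step for step: the probability integral transform plus independence of $\Pperp\epsilon$ from $X$ for (a), the pointwise identity $\Phi(Z_j)=F_j(V_j)$ combined with Sklar's theorem for (b), and the tower-property weak continuity equation followed by the Lipschitz push-forward estimate for (c). The differences are cosmetic. You obtain $\alpha_1^U=\gam d$ from the triviality of one-dimensional copulas rather than from the paper's characteristic-function factorization. You also prove only the inequality $W_2(\gam d,\alpha_k^U)\le W_2(\gam k,\rho_U)$, by gluing a shared independent complement, whereas the paper records equality; the inequality is all the bound needs.
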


\begin{proof}
\emph{(a).} For $t\in(0,1)$, let $\xi_t:=\sup\{s:F_j(s)\le t\}$; continuity of $F_j$ gives $F_j(\xi_t)=t$ and $\{F_j(V_j)\le t\}=\{V_j\le \xi_t\}$, so $F_j(V_j)\sim\Unif(0,1)$ and $Z_j\sim\gam1$. By rotational invariance of the Gaussian, $W:=\Pperp\epsilon\sim N(0,\Pperp)$, and $W$ is independent of $(X,Z)$ since $\epsilon$ is drawn independently of $X$; hence $\Law(e)=\rho_U\otimes\gam{d-k}=\alpha_k^U$. For $k=1$, the characteristic function factorizes:
\begin{align*}
    \E e^{i\theta^\top e}&=\E e^{i(u^\top\theta)Z}\,\E e^{i\theta^\top W}\\
&=e^{-\frac12(u^\top\theta)^2}e^{-\frac12\|\Pperp\theta\|^2}=e^{-\frac12\|\theta\|^2}.
\end{align*}

\emph{(b).} Each $Z_j$ has continuous strictly increasing cdf $\Phi$, and $\Phi(Z_j)=\Phi(\Phi^{-1}(F_j(V_j)))=F_j(V_j)$, since $\Phi\circ\Phi^{-1}$ is the identity on $(0,1)$ and $F_j(V_j)\in(0,1)$ almost surely. The copula of $Z$ is therefore $\Law(F_1(V_1),\dots,F_k(V_k))$, the copula of $V$, unique by Sklar's theorem for continuous marginals \citep{Sklar59}. The independence copula gives $e\sim\gam d$ by the characteristic-function computation of (a); conversely $e\sim\gam d$ forces $Z\sim\gam k$ with independent components.

\emph{(c).} Let $\varphi\in C_c^\infty$. Since $x_{t+h}-x_t=hY$, the difference quotient is dominated by an integrable bound, $|\varphi(x_{t+h})-\varphi(x_t)|/|h|\le\|\nabla\varphi\|_\infty\|Y\|\in L^1$. Dominated convergence then lets us differentiate under the expectation, and since $\tfrac{d}{dt}x_t=Y$ the chain rule gives $\tfrac{d}{dt}\E[\varphi(x_t)]=\E[\nabla\varphi(x_t)^\top Y]$; the tower property replaces $Y$ by $\E[Y\mid x_t]=v^*(x_t,t)$, yielding
\begin{equation}\label{eq:weakcont}
\frac{d}{dt}\E[\varphi(x_t)]=\E[\nabla\varphi(x_t)^\top v^*(x_t,t)],
\end{equation}
the weak continuity equation between $p_0=\alpha_k^U$ and $p_1$, valid for every $k$ and every coupling. Under standard well-posedness, the induced flow map satisfies $\Psi_\#\alpha_k^U=p_1$ with $\alpha_1^U=\gam d$. Started instead from the FM prior,
\begin{equation}\label{eq:transfer}
\begin{aligned}
W_2(\Psi_\#\gam d,p_1)&=W_2(\Psi_\#\gam d,\Psi_\#\alpha_k) \\ &\le\Lip(\Psi)\,W_2(\gam d,\alpha_k^U) \\ &=\Lip(\Psi)\,W_2(\gam k,\rho_U),
\end{aligned}
\end{equation}
where the last equality holds because the shared factor $\gam{d-k}$ cancels: for product measures $W_2^2(\mu\otimes\nu,\mu'\otimes\nu)=W_2^2(\mu,\mu')$. The bound ``$\le$'' couples $\nu$ to itself by the identity, adding an optimal coupling of $(\mu,\mu')$ at zero extra cost; the reverse ``$\ge$'' holds because any coupling of the two products marginalizes to a coupling of $(\mu,\mu')$, whose cost it can only exceed.
\end{proof}

\subsection{Proof of Proposition~\ref{t:complexity}}


\begin{proposition}[Complexity, full accounting]\label{app:complexity}
For a batch of size $B$ in data dimension $d$ with $k$ orthonormal slice directions, one QC-FM batch costs
\[
O(Bdk)\;+\;O(kB\log B)\;+\;O(dk^2)
\]
time and $O(Bd+dk)$ memory. The three terms are the projection $v_{ij}=u_j^\top x_i$, the $k$ sorts of $B$ values, and the one-off QR orthogonalization of $U$, respectively. For fixed $d$ and $k$ this reduces to $O(B\log B)$, dominated by the sorts: the projection term is linear in $B$ and the orthogonalization term is independent of $B$.
\end{proposition}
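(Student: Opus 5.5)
The plan is to walk through the QC-FM construction (Algorithm~\ref{alg:qcfm}) step by step, bound the time and memory of each step in the RAM model with unit-cost arithmetic, and then add the bounds. The only nontrivial ingredients are standard costs of dense linear algebra and comparison sorting. We also treat evaluating $\Phi^{-1}$ as an $O(1)$ operation, for example via a fixed-precision rational approximation.

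\emph{Frame generation.} Drawing a $d\times k$ Gaussian matrix costs $O(dk)$. Householder or Gram--Schmidt QR of a $d\times k$ matrix with $k\le d$ costs $O(dk^2)$ and returns $U$ in $O(dk)$ memory. This term does not depend on $B$.

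\emph{Projection.} Forming $V=X^\top U\in\R^{B\times k}$ from the stacked batch $X\in\R^{d\times B}$ is one dense product. It costs $O(Bdk)$ time and $O(Bk)$ memory.

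\emph{Ranks and codes.} For each of the $k$ columns, a comparison sort of $B$ values costs $O(B\log B)$. Inverting the sort permutation gives the ranks $r_{ij}$ in $O(B)$. The Gaussian grid $\Phi^{-1}((r-0.5)/B)$, $r=1,\dots,B$, depends only on $B$, so it is computed once in $O(B)$ and indexed. This gives $z_{ij}$ in $O(kB\log B)$ total.

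\emph{Completion.} Sampling $\epsilon\in\R^{d\times B}$ costs $O(Bd)$. For each sample, we compute
\[
e = Uz + \epsilon - U(U^\top\epsilon),
\]
evaluated right to left. The two thin products cost $O(Bdk)$, and we never form the $d\times d$ matrix $UU^\top$. Memory is $O(Bd)$ for $\epsilon$, $e$, and the data, plus $O(dk)$ for $U$ and $O(Bk)$ for $V$ and $Z$. Since $k\le d$, the $O(Bk)$ term is absorbed into $O(Bd)$.

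Summing the steps gives $O(Bdk)+O(kB\log B)+O(dk^2)$ time, where the $O(Bd)$ sampling and $O(B)$ grid terms are absorbed. For fixed $d$ and $k$, the first term is $O(B)$, the third is $O(1)$, and the second is $O(B\log B)$. So the sorts dominate, and this also gives Proposition~\ref{t:complexity}.

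The only delicate point is making sure that no step silently costs $O(d^2)$ or $O(B^2)$. This means the complement projection must be applied as $\epsilon - U(U^\top\epsilon)$ rather than through $I_d-UU^\top$, and ranks must come from sorting rather than pairwise comparisons. We will state both choices explicitly. We will also note that ties among projections occur with probability zero under continuous slice laws, and in any case any fixed tie-breaking rule keeps the cost unchanged.
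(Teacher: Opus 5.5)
Your proposal is correct and follows the paper's own accounting: $O(Bdk)$ for the projection, $O(kB\log B)$ for the $k$ sorts, $O(dk^2)$ for the QR, and $O(Bdk)$ for the completion $\epsilon - U(U^\top\epsilon)$, which is absorbed into the projection term. Your explicit memory tally and your treatment of $\Phi^{-1}$ as an $O(1)$ lookup on a precomputed grid supply details the paper leaves implicit.
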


\begin{proof}
The $Bk$ projections $v_{ij}=u_j^\top x_i$ cost $O(Bdk)$; the $k$ sorts of $B$ values cost $O(kB\log B)$; QR of a $d\times k$ Gaussian costs $O(dk^2)$; and the complement completion $\epsilon^{(i)}-U(U^\top\epsilon^{(i)})$ costs $O(dk)$ per sample, i.e.\ $O(Bdk)$ over the batch, absorbed into the projection term. Summing gives the stated bound, and holding $d,k$ fixed leaves $O(B\log B)$. No pairwise cost matrix is formed, and the only superlinear dependence on $B$ is the $O(B\log B)$ of the $k$ sorts.
\end{proof}

\subsection{Optimal Slices and Additive Gains}

\begin{theorem}
\label{t:gain}
For a unit direction $u$, define the \emph{per-slice gain constant}
\[
C_u\;:=\;\E[V_uZ_u]\;=\;\int_0^1\Phi^{-1}(\tau)\,F_u^{-1}(\tau)\,d\tau\;\in(0,\sigma_u],
\]
the covariance between a projection $V_u=u^\top X$, with law $u_\#p_1$ and cdf $F_u$, and its Gaussian rank code $Z_u=\Phi^{-1}(F_u(V_u))$. When the slice cdfs are continuous, $C_u$ can be computed from the data before training through
\[
2C_u=\mu_u^2+\sigma_u^2+1-W_2^2\big(u_\#p_1,\gam1\big),
\]
where $\mu_u:=\E[u^\top X]$ and $\sigma_u^2:=\Var(u^\top X)$ are the mean and variance of the slice, with $C_u=\sigma_u$ if and only if the slice $u_\#p_1$ is Gaussian. The following hold.
\begin{enumerate}[label=\textup{(\alph*)},itemsep=2pt,topsep=2pt]
\item For each $j$, the pair $(Z_j,V_j)$ is a.s.\ the comonotone coupling of $(\gam1,(u_j)_\#p_1)$, the unique minimizer of the one-dimensional quadratic transport cost among couplings with these marginals \citep{Hoeffding40,Frechet51,Villani09}, and the complement coordinates are independent Gaussian noise.
\item The batch coupling has cost
\[
c(\pi^{\mathrm{qc}}_{k,U})=R^2+d-2\sum_{j=1}^kC_{u_j},
\]
so relative to a coupling with independent slice codes each direction $u_j$ lowers the transport cost by exactly $2C_{u_j}>0$ (positive whenever $\Var(u_j^\top X)>0$); this is the sense in which $C_{u_j}$ is the gain of slice $j$. The reductions add across slices with no interaction term, and for a Haar-random frame the expected total is $\E_U\big[2\sum_jC_{u_j}\big]=k\big(\tfrac{R^2}{d}+1-SW_2^2(p_1,\gam d)\big)$, where $SW_2^2(p_1,\gam d):=\E_u\,W_2^2(u_\#p_1,\gam1)$ is the sliced-Wasserstein distance (expectation over a uniform direction $u$).
\end{enumerate}
\end{theorem}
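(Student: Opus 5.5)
We first pin down $C_u$, then derive (a) and (b) from the QC construction.

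\emph{Step 1: the quantile representation of $C_u$.} Write $V_u=F_u^{-1}(\tau)$ with $\tau:=F_u(V_u)$. By Theorem~\ref{t:exact}(a), $\tau\sim\Unif(0,1)$ when $F_u$ is continuous. Moreover $F_u^{-1}(F_u(V_u))=V_u$ almost surely. Since $Z_u=\Phi^{-1}(\tau)$, this gives $C_u=\E[V_uZ_u]=\int_0^1\Phi^{-1}(\tau)F_u^{-1}(\tau)\,d\tau$.

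\emph{Step 2: properties of $C_u$.} The closed form follows by expanding $\E(V_u-Z_u)^2=\mu_u^2+\sigma_u^2+1-2C_u$. In one dimension the quantile (comonotone) coupling is $W_2$-optimal, so this expectation equals $W_2^2(u_\#p_1,\gam1)$. Since $\E Z_u=0$, we have $C_u=\Cov(V_u,Z_u)$, and Cauchy--Schwarz gives $C_u\le\sigma_u$. Equality holds iff $V_u-\mu_u$ is a nonnegative multiple of $Z_u$, that is, iff the slice is Gaussian. For positivity, note that $F_u^{-1}$ and $\Phi^{-1}$ are both nondecreasing functions of the same uniform variable. Chebyshev's association inequality then gives $\Cov\ge0$. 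Equality would force $F_u^{-1}$ to be almost everywhere constant, contradicting $\sigma_u>0$. So $C_u>0$ whenever $\sigma_u>0$, and the interval $(0,\sigma_u]$ is read under this nondegeneracy.

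\emph{Step 3: part (a).} The pair $(Z_j,V_j)=(\Phi^{-1}(\tau),F_j^{-1}(\tau))$ is a nondecreasing function of a single uniform, which is exactly the comonotone coupling. Its uniqueness as the cost minimizer is the cited classical 1D result. The complement noise $W=\Pperp\epsilon$ is independent of $X$ by Theorem~\ref{t:exact}(a).

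\emph{Step 4: part (b).} Expand
\[
c=\E\|X\|^2+\E\|e\|^2-2\E[X^\top e].
\]
Here $\E\|e\|^2=\E\|Z\|^2+\tr\Pperp=k+(d-k)=d$. Since $W$ is independent of $X$ and centered, $\E[X^\top e]=\E[X^\top UZ]=\sum_j\E[V_jZ_j]=\sum_jC_{u_j}$. This gives $c=R^2+d-2\sum_jC_{u_j}$. The cross terms never mix slices, which is why the gains are additive. Against independent slice codes, $\E[X^\top UZ]=0$, so the difference is exactly $2\sum_jC_{u_j}$.

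For a Haar frame, each column $u_j$ is marginally uniform on the sphere. Averaging the Step~2 identity over $u$ gives
\[
\E_u[\mu_u^2+\sigma_u^2]=\E_u\E(u^\top X)^2=R^2/d,
\]
hence $\E_U[2\sum_jC_{u_j}]=k(R^2/d+1-SW_2^2)$. This relies on continuity of $F_u$ for almost every $u$, which we assume as in the statement.

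\emph{Main obstacle.} The delicate points are the almost-sure identity $F^{-1}(F(V))=V$ and the strict positivity and Gaussian-equality cases when $F_u$ has flat pieces. We would handle these via the quantile representation, working only with $\tau$ and never inverting $F_u$ on its flat parts.
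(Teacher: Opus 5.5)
Your proposal is correct and follows the paper's proof essentially step for step. Both use the quantile representation of $C_u$, the closed form obtained by expanding the comonotone (one-dimensional optimal) quadratic cost, Cauchy--Schwarz for $C_u\le\sigma_u$ with the Gaussian equality case, the orthogonal expansion of the cost for (b), and the Haar average via $\E_u(u^\top X)^2=R^2/d$. The only cosmetic difference is that you obtain $C_u>0$ by citing Chebyshev's association inequality, whereas the paper proves the same fact directly from the nonnegativity of $(V-V')(g(V)-g(V'))$ for an independent copy $V'$.
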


\begin{proof}[Proof]
We first record the closed form and bounds for the constant $C_u$ stated before the enumerated claims, then prove (a) and (b).

\emph{The quantile representation of $C_u$.}
Fix a unit direction $u$ and write $V:=u^\top X$, $F:=F_u$ for its cdf, and $Z:=\Phi^{-1}(F(V))$ for its Gaussian rank code. When $F$ is continuous, $\tau:=F(V)$ is uniform on $(0,1)$ (Theorem~\ref{t:exact}(a)); on this event the quantile functions invert the cdfs, so $V=F^{-1}(\tau)$ and $Z=\Phi^{-1}(\tau)$ almost surely. Hence, by the change of variables $V\mapsto\tau=F(V)$ (which sends the law of $V$ to $\Unif(0,1)$),
\begin{equation}\label{eq:Cquantile}
\begin{aligned}
C_u&=\E[VZ]=\E\big[F^{-1}(\tau)\,\Phi^{-1}(\tau)\big] \\ &=\int_0^1 F^{-1}(\tau)\,\Phi^{-1}(\tau)\,d\tau ,
\end{aligned}
\end{equation}
the last equality because $\tau$ is uniform. This is the integral form used below.

\emph{The closed form.}
The one-dimensional Wasserstein distance between laws $\mu,\nu$ on $\R$ with quantile functions $F_\mu^{-1},F_\nu^{-1}$ is
\begin{equation}\label{eq:W2quantile}
W_2^2(\mu,\nu)=\int_0^1\big(F_\mu^{-1}(\tau)-F_\nu^{-1}(\tau)\big)^2\,d\tau ,
\end{equation}
since the comonotone coupling, which pairs equal quantiles, is optimal in one dimension \citep{Villani09}. Taking $\mu=u_\#p_1$ (so $F_\mu^{-1}=F^{-1}$) and $\nu=\gam1$ (so $F_\nu^{-1}=\Phi^{-1}$) and expanding the square,
\begin{equation}\label{eq:W2expand}
\begin{aligned}
W_2^2(u_\#p_1,\gam1)=&\int_0^1(F^{-1})^2\,d\tau-2\int_0^1 F^{-1}\Phi^{-1}\,d\tau  +\int_0^1(\Phi^{-1})^2\,d\tau .
\end{aligned}
\end{equation}
The three integrals are $\int_0^1(F^{-1})^2\,d\tau=\E[V^2]=\mu_u^2+\sigma_u^2$, the cross term $\int_0^1 F^{-1}\Phi^{-1}\,d\tau=C_u$ by \eqref{eq:Cquantile}, and $\int_0^1(\Phi^{-1})^2\,d\tau=\E[\Phi^{-1}(\tau)^2]=1$. Substituting into \eqref{eq:W2expand} and solving for $C_u$,
\begin{equation}\label{eq:Cclosed}
2C_u=\mu_u^2+\sigma_u^2+1-W_2^2(u_\#p_1,\gam1).
\end{equation}

\emph{The bounds $0<C_u\le\sigma_u$.}
Since $\int_0^1\Phi^{-1}(\tau)\,d\tau=0$, we may center the first factor in \eqref{eq:Cquantile}:
\begin{equation}\label{eq:Ccentered}
C_u=\int_0^1\big(F^{-1}(\tau)-\mu_u\big)\Phi^{-1}(\tau)\,d\tau .
\end{equation}
For the lower bound, let $V'$ be an independent copy of $V$ and set $g:=\Phi^{-1}\circ F$, so that $Z=g(V)$ and $g$ is nondecreasing. Since $\E[g(V)]=\E[Z]=\int_0^1\Phi^{-1}=0$, expanding the product $(V-V')(g(V)-g(V'))$ and taking expectations gives
\begin{equation}\label{eq:Cpos}
\begin{aligned}
&\E\big[(V-V')\big(g(V)-g(V')\big)\big]\\
&=2\E[V\,g(V)]-2\E[V]\,\E[g(V)] \\
&=2\E[V\,g(V)]
=2C_u ,
\end{aligned}
\end{equation}
where the first equality uses that $V$ and $V'$ are i.i.d.\ (so $\E[V g(V)]=\E[V'g(V')]$ and $\E[Vg(V')]=\E[V]\E[g(V')]$), and the second uses $\E[g(V)]=0$. Each summand $(V-V')(g(V)-g(V'))$ is nonnegative because $g$ is nondecreasing, so $V-V'$ and $g(V)-g(V')$ always share the same sign; hence $2C_u\ge0$. The inequality is strict when $\Var(V)>0$: then $V\ne V'$ on a positive-probability event, and since $F$ is continuous and $\Phi^{-1}$ strictly increasing, $V\ne V'$ implies $g(V)\ne g(V')$ almost surely, so the nonnegative integrand is strictly positive there, giving $2C_u>0$.

For the upper bound, apply the Cauchy--Schwarz inequality in $L^2(0,1)$ to the two factors of the centered integrand in \eqref{eq:Ccentered}:
\begin{equation}\label{eq:Cub}
\begin{aligned}
C_u&=\int_0^1\big(F^{-1}-\mu_u\big)\Phi^{-1}\,d\tau \\ &\le\Big(\int_0^1(F^{-1}-\mu_u)^2\,d\tau\Big)^{1/2}\Big(\int_0^1(\Phi^{-1})^2\,d\tau\Big)^{1/2} \\ &=\sigma_u\cdot1=\sigma_u ,
\end{aligned}
\end{equation}
using $\int_0^1(F^{-1}-\mu_u)^2\,d\tau=\Var(V)=\sigma_u^2$ and $\int_0^1(\Phi^{-1})^2\,d\tau=1$. Equality in Cauchy--Schwarz forces the two factors to be proportional, $F^{-1}-\mu_u=c\,\Phi^{-1}$ Lebesgue-a.e.\ for some $c\ge0$; both sides are left-continuous and $\Phi^{-1}$ is continuous, so this holds for every $\tau$, i.e.\ $u_\#p_1=N(\mu_u,c^2)$ is Gaussian, and then $c=\sigma_u$ and $C_u=\sigma_u$.

\emph{Proof of (a).}
The identity $F^{-1}(F(v))=v$ holds except on the interiors of the countably many flat intervals of $F$, each carrying zero $u$-slice mass, so $V=S(Z)$ almost surely with $S:=F^{-1}\circ\Phi$ nondecreasing. The pair $(Z,V)$ therefore lies almost surely on a nondecreasing graph, which is the comonotone coupling of $(\gam1,u_\#p_1)$. Among couplings with these marginals, comonotonicity maximizes $\E[ZV]$ (Fr\'echet--Hoeffding \citep{Hoeffding40,Frechet51}), equivalently minimizes $\E[(Z-V)^2]$, the unique optimum of one-dimensional quadratic transport \citep{Villani09}. The complement coordinates $\Pperp\epsilon\sim N(0,\Pperp)$ are independent of $Z$ by construction.

\emph{Proof of (b).}
Write $e=UZ+\Pperp\epsilon$. Expanding the cost,
\begin{equation}\label{eq:cexpand}
c(\pi^{\mathrm{qc}}_{k,U})=\E\|X\|^2+\E\|e\|^2-2\,\E\langle X,e\rangle .
\end{equation}
The middle term uses the orthogonal decomposition $e=UZ+\Pperp\epsilon$: the two parts are orthogonal, so $\|e\|^2=\|UZ\|^2+\|\Pperp\epsilon\|^2=\|Z\|^2+\|\Pperp\epsilon\|^2$ (as $U$ has orthonormal columns), and taking expectations,
\begin{equation}\label{eq:enorm}
\E\|e\|^2=\sum_{j=1}^k\E Z_j^2+\E\|\Pperp\epsilon\|^2=k+(d-k)=d ,
\end{equation}
because each $Z_j\sim\gam1$ gives $\E Z_j^2=1$, and $\Pperp\epsilon\sim N(0,\Pperp)$ with $\Pperp$ a rank-$(d-k)$ projection gives $\E\|\Pperp\epsilon\|^2=\tr\Pperp=d-k$. The sum $\|Z\|^2=\sum_j Z_j^2$ has no cross term $Z_iZ_j$, so \eqref{eq:enorm} holds whatever the joint law of $Z$. The cross term splits along the same decomposition:
\begin{equation}\label{eq:cross}
\begin{aligned}
\E\langle X,e\rangle&=\E\langle X,UZ\rangle+\E\langle X,\Pperp\epsilon\rangle \\ &=\sum_{j=1}^k\E[V_jZ_j]+0=\sum_{j=1}^k C_{u_j} ,
\end{aligned}
\end{equation}
where the first equality is the decomposition $e=UZ+\Pperp\epsilon$, the second expands $\langle X,UZ\rangle=\sum_j (u_j^\top X)Z_j=\sum_j V_jZ_j$ and uses $\E\langle X,\Pperp\epsilon\rangle=0$ (the noise $\epsilon$ is centered and independent of $X$), and the last is the definition $C_{u_j}=\E[V_jZ_j]$. Substituting \eqref{eq:enorm} and \eqref{eq:cross} into \eqref{eq:cexpand} with $\E\|X\|^2=R^2$ gives
\begin{equation}\label{eq:cfinal}
c(\pi^{\mathrm{qc}}_{k,U})=R^2+d-2\sum_{j=1}^k C_{u_j},
\end{equation}
and each $C_{u_j}>0$ whenever $\Var(u_j^\top X)>0$ by \eqref{eq:Cpos}. Because \eqref{eq:enorm} is independent of the joint law of $Z$, the cost reads only the marginals of the slice codes: each slice contributes its own $2C_{u_j}$ with no interaction term, and the dependence among slices --- the copula of Theorem~\ref{t:exact}(b) --- never enters $c(\pi^{\mathrm{qc}}_{k,U})$.

For the Haar average, recall $\mu_u=u^\top m$ and $\sigma_u^2=u^\top\Sigma u$, so $\mu_u^2+\sigma_u^2=u^\top mm^\top u+u^\top\Sigma u=u^\top(mm^\top+\Sigma)u$. Taking $\E_u$ over a uniform direction, where $\E_u[uu^\top]=I/d$,
\begin{equation}\label{eq:haarmoment}
\begin{aligned}
\E_u[\mu_u^2+\sigma_u^2]&=\E_u\big[u^\top(mm^\top+\Sigma)u\big] \\ &=\tr\big((mm^\top+\Sigma)\,\E_u[uu^\top]\big) \\ &=\frac{\tr(mm^\top+\Sigma)}{d}=\frac{R^2}{d},
\end{aligned}
\end{equation}
where the third equality writes the quadratic form as a trace, $u^\top Au=\tr(Auu^\top)$, and the last uses $\tr(mm^\top+\Sigma)=\|m\|^2+\tr\Sigma=\E\|X\|^2=R^2$.
Averaging the closed form \eqref{eq:Cclosed} over $u$ and using the definition of the sliced Wasserstein distance, $\E_u[W_2^2(u_\#p_1,\gam1)]=SW_2^2(p_1,\gam d)$ (with $u_\#\gam d=\gam1$), gives $\E_u[2C_u]=\tfrac{R^2}d+1-SW_2^2(p_1,\gam d)$. Each column of a Haar frame is marginally uniform on $S^{d-1}$, so summing over $j=1,\dots,k$ yields
\begin{equation}\label{eq:haartotal}
\E_U\Big[2\sum_{j=1}^k C_{u_j}\Big]=k\Big(\frac{R^2}{d}+1-SW_2^2(p_1,\gam d)\Big).
\end{equation}
\end{proof}

By (b), the reductions add across slices with no interaction term, so larger $k$ is favorable along the cost axis. The price of raising $k$ lies elsewhere, in the dependence among the slice codes (Theorem~\ref{t:price}). The gain is measurable before training: the closed form \eqref{eq:Cclosed} is estimable from the data by Monte Carlo slicing, and its Haar average \eqref{eq:haartotal} gives the sliced-Wasserstein prediction $k\big(\tfrac{R^2}{d}+1-SW_2^2(p_1,\gam d)\big)$.
\subsection{Proof of Theorem~\ref{t:floor}}

\begin{proof}[Proof of Theorem~\ref{t:floor}]
For any field $v$, the fixed-time regression loss splits into a part that depends on $v$ and a part that does not \citep{Lip23,TimeBlind26}:
\begin{equation}
\begin{aligned}
\ell^\pi_t(v)&=\underbrace{\E_\pi\big[\|v(x_t,t)-\E_\pi[Y\mid x_t]\|^2\big]}_{\text{approximation error}} +\underbrace{\E_\pi\big[\|Y-\E_\pi[Y\mid x_t]\|^2\big]}_{\mathcal F^\pi_t}.
\end{aligned}
\end{equation}
The first term is minimized to zero by the ideal field $v^*_\pi(x,t):=\E_\pi[Y\mid x_t=x]$, so $\mathcal F^\pi_t$ is irreducible: no choice of $v$ lowers it. Averaging under the training-time law gives $\inf_v \mathcal{L}^\pi_{\mathrm{FM}}=\int_0^1 q(t)\,\mathcal F^\pi_t\,dt$.

We use three facts about the QC-FM construction. The code is $Z_j:=\Phi^{-1}(F_j(V_j))$ with $V_j:=u_j^\top X$, the increasing map $S_j:=F_j^{-1}\circ\Phi$ satisfies $V_j=S_j(Z_j)$ a.s., and $m^{(j)}_t:=(1-t)\,\mathrm{id}+t\,S_j$ carries the code to the time-$t$ slice position. We prove the three ingredients and then assemble the loss inequality.

\emph{Step 1: the slice variance vanishes under QC-FM.}
Because the slice cdfs are continuous, $F^{-1}$ is strictly increasing, and hence so are $S_j$ and every $m^{(j)}_t$; each therefore has a well-defined inverse. Along direction $u_j$, the slice position at time $t$ is a deterministic increasing function of the code alone,
\[
u_j^\top x_t=(1-t)\,Z_j+t\,S_j(Z_j)=m^{(j)}_t(Z_j),
\]
which can be inverted to recover the code, $Z_j=(m^{(j)}_t)^{-1}(u_j^\top x_t)$. The slice velocity is likewise a function of the code, $u_j^\top Y=u_j^\top(x_1-x_0)=S_j(Z_j)-Z_j$, so substituting the recovered code expresses it through the single deterministic map
\begin{equation}\label{eq:floortarget}
\begin{gathered}
u_j^\top Y=S_j(Z_j)-Z_j=h^{(j)}_t\big(u_j^\top x_t\big), \\ h^{(j)}_t:=\big(S_j-\mathrm{id}\big)\circ(m^{(j)}_t)^{-1}.
\end{gathered}
\end{equation}
Thus knowing $x_t$ pins down the slice velocity exactly: it is $\sigma(u_j^\top x_t)$-measurable, its conditional variance given $x_t$ is zero, and so the slice variance $\mathcal F^{\pi^{\mathrm{qc}}_{k,U}}_t(u_j)=\E[\Var(u_j^\top Y\mid x_t)]$ is identically zero. The ideal slice field is the explicit map $u_j^\top v^*(x,t)=h^{(j)}_t(u_j^\top x)$.

\emph{Step 2: the same slice variance is strictly positive under the Baseline.}
This is what makes the QC-FM loss \emph{strictly}, not merely weakly, smaller. In the present convention $x_0$ is the noise endpoint and $x_1$ the data endpoint, so under $\pi^{\mathrm{ind}}$ the source is independent of the data and the slice source $Z':=u^\top x_0\sim\gam1$ is independent of $X$. Write $V:=u^\top X$. On the slice, $u^\top x_t=(1-t)Z'+tV$ and $u^\top Y=V-Z'$; eliminating $Z'=(u^\top x_t-tV)/(1-t)$ leaves $u^\top Y=(V-u^\top x_t)/(1-t)$, so conditionally on $x_t$ the only randomness in $u^\top Y$ is that of $V$, and
\begin{equation}\label{eq:floorperp}
\Var(u^\top Y\mid x_t)=\frac{\Var(V\mid x_t)}{(1-t)^2}.
\end{equation}
It therefore suffices that $V$ is not already pinned down by $x_t$, i.e.\ $\Var(V\mid x_t)>0$. Suppose instead $\Var(V\mid x_t)=0$ on a set of positive probability. By Bayes' rule, the conditional law of the data given $x_t$ is the data law $p_1(dx)$ reweighted by the likelihood of the noise $x_0=(x_t-tx)/(1-t)$; since $x_0\sim\gam d=N(0,I_d)$, the noise scaled by $(1-t)$ has density $\phi_{1-t}$, the $N(0,(1-t)^2 I_d)$ density, so the conditional law has density proportional to $\phi_{1-t}(x_t-tx)$ against $p_1(dx)$. This reweighting factor $\phi_{1-t}(x_t-tx)$ is strictly positive and finite for every $x$, so the conditional law and $p_1$ share the same null sets. If $V=u^\top X$ were constant under the conditional law, it would then be constant under $p_1$ as well, contradicting $\Var(u^\top X)>0$. Hence, $\Var(V\mid x_t)>0$ at every interior $t$, and by \eqref{eq:floorperp} the Baseline slice variance is strictly positive there.

To gauge the size of this variance in the simplest case, suppose the data are Gaussian and the slice $V=u^\top X\sim N(\mu_u,\sigma_u^2)$ is independent of the complement coordinates $\Pperp X$, as for isotropic data or $u$ along a principal axis. Image data are not exactly of this form, but the case is analytically tractable and already shows that the variance is of the order of the data's own spread. The complement observation then carries no information about $V$, so $\Var(V\mid x_t)=\Var(V\mid u^\top x_t)$, the pair $(V,u^\top x_t)$ is jointly Gaussian, and
\begin{equation}\label{eq:floorgauss}
\begin{gathered}
\mathcal F^{\pi^{\mathrm{ind}}}_t(u)=\frac{\Var(V\mid x_t)}{(1-t)^2}=\frac{\sigma_u^2}{(\sigma_u^2+1)t^2-2t+1}, \\
\int_0^1 q(t)\,\mathcal F^{\pi^{\mathrm{ind}}}_t(u)\,dt>0;\\
\end{gathered}
\end{equation}
for $q(t)\equiv1$,
\begin{equation}
\begin{gathered}
\int_0^1\mathcal F^{\pi^{\mathrm{ind}}}_t(u)\,dt=\frac\pi2\,\sigma_u ,
\end{gathered}
\end{equation}
where $\sigma_u^2=\Var(u^\top X)$ is the slice variance of the data: a strictly positive amount that QC-FM removes and the Baseline cannot.

\emph{Step 3: the off-slice (complement) variance is no larger under QC-FM.}
The complement component of the irreducible variance is the residual of the velocity components orthogonal to the frame,
\[
\mathcal F^\pi_t(\perp):=\E_\pi\big[\|\Pperp Y-\E_\pi[\Pperp Y\mid x_t]\|^2\big],
\]
where $\Pperp=I-UU^\top$ projects onto the $(d-k)$-dimensional complement of the frame; since $\Pperp$ is a fixed matrix, $\E[\Pperp Y\mid x_t]=\Pperp\E[Y\mid x_t]$. We compare this quantity between the QC-FM coupling $\pi^{\mathrm{qc}}_{k,U}$ and the Baseline coupling $\pi^{\mathrm{ind}}$. Throughout, $\sigma(\cdot)$ denotes the $\sigma$-field generated by the listed random variables --- the information available to the ideal predictor, which is the conditional expectation given that $\sigma$-field.

The idea is simple: both couplings must predict the same complement target from the state $x_t$, and QC-FM's state reveals strictly more about the frame directions than the Baseline does, so its predictor is at least as good and its residual at least as small. We make this precise by realizing both couplings on a single probability space.

Let $X\sim p_1$, a complement noise $W\sim N(0,\Pperp)$, and a frame code $Z''\sim\gam k$ be mutually independent, and let $Z$ be the QC-FM code of $X$, i.e.\ $Z_j=\Phi^{-1}(F_j(u_j^\top X))$. Realize QC-FM through $x_0=UZ+W$ and the Baseline through $\tilde x_0=UZ''+W$. The complement velocity is then
\[
T:=\Pperp Y=\Pperp X-W ,
\]
the same random variable under both couplings, and both states reveal the same complement observation $A_t:=\Pperp x_t=(1-t)W+t\Pperp X$. The two couplings differ only in what their states reveal about the frame directions $V:=U^\top X$.

Under QC-FM, the frame coordinates of the state recover the code exactly, $U^\top x_t=m_t(Z)$ with $m_t$ strictly increasing per slice, and since $V=S(Z)$ with $S$ a bijection this is equivalent to knowing $V$:
\[
\sigma(x_t)=\sigma(V,A_t).
\]
Under the Baseline, the frame coordinates of the state are the noisy mixture $U^\top\tilde x_t=(1-t)Z''+tV$, which reveals $V$ only through the independent noise $Z''$; the state is a function of $(Z'',A_t)$, so
\[
\sigma(\tilde x_t)\subseteq\sigma(V,Z'',A_t).
\]
Refining the conditioning cannot increase the residual: for $\sigma$-fields $\mathcal G\subseteq\mathcal H$, the predictor $\E[T\mid\mathcal G]$ is $\mathcal H$-measurable, whereas $\E[T\mid\mathcal H]$ is the $L^2$-optimal $\mathcal H$-measurable predictor, so $\E\|T-\E[T\mid\mathcal H]\|^2\le\E\|T-\E[T\mid\mathcal G]\|^2$. Apply this with $\mathcal G=\sigma(\tilde x_t)\subseteq\mathcal H=\sigma(V,Z'',A_t)$, and note that the independent code is idle: $Z''\perp(X,W)$ gives $Z''\perp(T,V,A_t)$, so $\E[T\mid V,Z'',A_t]=\E[T\mid V,A_t]$, and $\sigma(V,A_t)=\sigma(x_t)$ is exactly the QC-FM state. Hence
\begin{equation}\label{eq:complement}
\begin{aligned}
\mathcal F^{\pi^{\mathrm{ind}}}_t(\perp)&=\E\big\|T-\E[T\mid\tilde x_t]\big\|^2\ \\ &\ge\ \E\big\|T-\E[T\mid V,A_t]\big\|^2=\mathcal F^{\pi^{\mathrm{qc}}_{k,U}}_t(\perp).
\end{aligned}
\end{equation}

\emph{Assembling the loss inequality.}
The irreducible variance splits along the frame: with $w:=Y-\E_\pi[Y\mid x_t]$, the orthonormal expansion $\|w\|^2=\sum_{j=1}^k(u_j^\top w)^2+\|\Pperp w\|^2$ together with $u_j^\top\E[Y\mid x_t]=\E[u_j^\top Y\mid x_t]$ gives $\mathcal F^\pi_t=\sum_j\mathcal F^\pi_t(u_j)+\mathcal F^\pi_t(\perp)$. Under QC-FM every slice part is zero (Step 1) and the complement part is at most that of the Baseline (Step 3), so for every $t$,
\[
\mathcal F^{\pi^{\mathrm{qc}}_{k,U}}_t=\mathcal F^{\pi^{\mathrm{qc}}_{k,U}}_t(\perp)\le\mathcal F^{\pi^{\mathrm{ind}}}_t(\perp)=\mathcal F^{\pi^{\mathrm{ind}}}_t-\sum_{j=1}^k\mathcal F^{\pi^{\mathrm{ind}}}_t(u_j).
\]
Integrating against $q(t)\,dt$ and using $\inf_v \mathcal{L}^\pi_{\mathrm{FM}}=\int_0^1 q(t)\,\mathcal F^\pi_t\,dt$ gives the displayed bound,
\[
\inf_v \mathcal{L}^{\pi^{\mathrm{qc}}_{k,U}}_{\mathrm{FM}}\le\inf_v \mathcal{L}^{\pi^{\mathrm{ind}}}_{\mathrm{FM}}-\sum_{j=1}^k\int_0^1 q(t)\,\mathcal F^{\pi^{\mathrm{ind}}}_t(u_j)\,dt ,
\]
and the subtracted sum is strictly positive by Step 2, so the right-hand side is strictly below $\inf_v \mathcal{L}^{\pi^{\mathrm{ind}}}_{\mathrm{FM}}$.
\end{proof}

\subsection{Slice Straightness}

\begin{theorem}[Slice straightness, full statement]\label{app:straight}
Fix a frame $U$ and assume each slice $(u_j)_\#p_1$ has a continuous positive density on an interval. Let $v^\ast$ be the ideal QC-FM field of Theorem~\ref{t:floor}, and let $y_t$ be the flow generated by $v^\ast$ from $y_0\sim\gam d$. Along every sliced direction, the ideal flow is exactly straight, and distinct QC-FM pairs never cross in that slice. Consequently, each of the $k$ slice marginals is generated exactly:
\[
u_j^\top y_1 \sim (u_j)_\#p_1,\qquad j=1,\dots,k .
\]
\end{theorem}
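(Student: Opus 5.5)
The plan is to reduce the $d$-dimensional flow to $k$ decoupled scalar ODEs, one per slice, using the explicit slice field already identified in Step~1 of the proof of Theorem~\ref{t:floor}. That step showed that, under $\pi^{\mathrm{qc}}_{k,U}$, the ideal field satisfies
\[
u_j^\top v^*(x,t)=h^{(j)}_t(u_j^\top x),\qquad h^{(j)}_t=(S_j-\mathrm{id})\circ(m^{(j)}_t)^{-1},
\]
with $S_j=F_j^{-1}\circ\Phi$ and $m^{(j)}_t=(1-t)\,\mathrm{id}+t\,S_j$. Projecting the flow equation $\dot y_t=v^*(y_t,t)$ onto $u_j$ then gives a closed scalar ODE $\dot s_t=h^{(j)}_t(s_t)$ for $s_t:=u_j^\top y_t$. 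The complement coordinates never enter this equation, so each slice evolves autonomously of the others and of $\Pperp y_t$.

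Next I will exhibit the explicit solution. For each starting value $z$, set $s_t:=m^{(j)}_t(z)=(1-t)z+tS_j(z)$. Then
\[
\dot s_t=S_j(z)-z=h^{(j)}_t\big(m^{(j)}_t(z)\big)=h^{(j)}_t(s_t),
\]
so this curve solves the slice ODE. It is affine in $t$ with constant velocity $S_j(z)-z$, which is exactly the claimed straightness.

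The non-crossing claim follows from monotonicity. The density assumption makes $F_j$ a $C^1$ bijection from the support interval onto $(0,1)$ with positive derivative, so $S_j$ is $C^1$ and strictly increasing on $\R$. Hence every $m^{(j)}_t$ is strictly increasing, with derivative $(1-t)+tS_j'\ge \min(1,S_j')>0$. Therefore $z<z'$ implies $m^{(j)}_t(z)<m^{(j)}_t(z')$ for all $t\in[0,1]$, so two distinct slice trajectories never meet.

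For the marginal statement, $y_0\sim\gam d$ gives $u_j^\top y_0\sim\gam1$, because $u_j$ is a unit vector. Consequently $u_j^\top y_1=S_j(u_j^\top y_0)=F_j^{-1}(\Phi(u_j^\top y_0))$. Since $\Phi(u_j^\top y_0)$ is uniform on $(0,1)$, this random variable has law $(u_j)_\#p_1$, by the same quantile-transform argument as in Theorem~\ref{t:exact}(a).

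The main obstacle is uniqueness: I must show that the flow of $v^*$ actually follows $m^{(j)}_t(z)$, rather than some other solution of a possibly non-Lipschitz scalar ODE. I will first argue local Lipschitz continuity of $h^{(j)}_t$ for $t<1$. The map $(m^{(j)}_t)^{-1}$ is $C^1$ with derivative bounded by $1/(1-t)$, and $S_j-\mathrm{id}$ is $C^1$, so their composition is $C^1$. Picard--Lindelöf then gives uniqueness on $[0,1)$, and continuity extends the identification to $t=1$.

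If $S_j'$ is unbounded near the tails, I will fall back on a direct argument that avoids Lipschitz bounds. The map $(t,z)\mapsto m^{(j)}_t(z)$ is a $C^1$ diffeomorphism in $z$ for each fixed $t$. For any solution $s_t$, define $z_t:=(m^{(j)}_t)^{-1}(s_t)$ and differentiate $s_t=m^{(j)}_t(z_t)$ in $t$:
\[
\dot s_t=S_j(z_t)-z_t+\partial_z m^{(j)}_t(z_t)\,\dot z_t .
\]
Since also $\dot s_t=h^{(j)}_t(s_t)=S_j(z_t)-z_t$, this forces $\partial_z m^{(j)}_t(z_t)\,\dot z_t=0$, and because $\partial_z m^{(j)}_t>0$ we get $\dot z_t=0$. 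So $z_t$ is constant, and every solution is one of the straight lines $m^{(j)}_t(z)$.
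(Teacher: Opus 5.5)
Your proposal is correct and follows essentially the same route as the paper's proof. The straight curves $m^{(j)}_t(z)=(1-t)z+tS_j(z)$ are exhibited as characteristics of the slice field $h^{(j)}_t$, uniqueness comes from local Lipschitz continuity of $h^{(j)}_t$ (the paper gets this from $\partial_z m^{(j)}_t>0$ via the implicit function theorem), non-crossing follows from strict monotonicity of $m^{(j)}_t$, and the exact slice marginal comes from pushing $\gamma_1$ forward through $S_j=F_j^{-1}\circ\Phi$. Your fallback constant-code argument is a valid, slightly more elementary variant of the uniqueness step, provided you note that $(t,\eta)\mapsto(m^{(j)}_t)^{-1}(\eta)$ is jointly $C^1$ (again by the implicit function theorem), so that $z_t$ is differentiable.
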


\begin{corollary}[Geodesic slice marginals and exact slice generation]\label{a:rk}
In the setting of Theorem~\ref{app:straight}, the slice marginal path is the $W_2$-geodesic from $\gam1$ to $(u_j)_\#p_1$ for each $j$. Consequently, for $y_0\sim\gam d$, every one-dimensional slice marginal of the generated flow is exact, and any deviation of the generated law from $p_1$ is confined to the dependence among slices.
\end{corollary}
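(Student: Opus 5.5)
The plan is to reduce everything to the one-dimensional slice dynamics already isolated in Step~1 of the proof of Theorem~\ref{t:floor}, and then read off both claims of the corollary from Theorem~\ref{app:straight}. Fix $j$ and write $S_j=F_j^{-1}\circ\Phi$ and $m^{(j)}_t=(1-t)\,\mathrm{id}+t\,S_j$, as in that proof.

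\emph{Geodesic slice path.} Under the assumption of Theorem~\ref{app:straight}, the slice has a continuous positive density on an interval. Hence $S_j$ is continuous and strictly increasing, and it is the monotone (comonotone) transport map from $\gam1$ to $(u_j)_\#p_1$ by Theorem~\ref{t:gain}(a). Under $\pi^{\mathrm{qc}}_{k,U}$ we have $u_j^\top x_t=m^{(j)}_t(Z_j)$ with $Z_j\sim\gam1$ by Theorem~\ref{t:exact}(a). So the slice marginal at time $t$ is $(m^{(j)}_t)_\#\gam1$, which is McCann's displacement interpolation along the optimal map. In one dimension this is the constant-speed $W_2$-geodesic. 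To verify this I will compute
\[
W_2\big((m_s)_\#\gam1,(m_t)_\#\gam1\big)=|t-s|\,W_2\big(\gam1,(u_j)_\#p_1\big)
\]
through the quantile formula \eqref{eq:W2quantile}. This works because the quantile function of $(m_t)_\#\gam1$ is $(1-t)\Phi^{-1}+tF_j^{-1}$.

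\emph{Exact slice generation from $\gam d$.} Step~1 of Theorem~\ref{t:floor} gives $u_j^\top v^*(x,t)=h^{(j)}_t(u_j^\top x)$. Therefore the slice coordinate $s_t:=u_j^\top y_t$ of the generated flow satisfies the \emph{closed} scalar ODE $\dot s_t=h^{(j)}_t(s_t)$, and this ODE does not involve any other coordinate of $y_t$. Next, $\tfrac{d}{dt}m^{(j)}_t(z)=S_j(z)-z=h^{(j)}_t\big(m^{(j)}_t(z)\big)$, so $t\mapsto m^{(j)}_t(s_0)$ is a solution through $s_0$. Since $y_0\sim\gam d$, its projection $s_0=u_j^\top y_0\sim\gam1$, which is the same slice law as the QC source. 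Consequently $s_1=S_j(s_0)\sim(u_j)_\#p_1$. This is exactly where the copula mismatch between $\gam d$ and $\alpha_k^U$ drops out: each slice ODE sees only its own one-dimensional initial law, which is identical under both priors.

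\emph{Confinement of the deviation, and the main obstacle.} Since all $k$ one-dimensional laws $u_j^\top y_1$ agree with those of $p_1$, any discrepancy between $\Law(y_1)$ and $p_1$ must lie in the joint law, that is, in the dependence among the slice coordinates and the complement. The step I expect to be delicate is uniqueness of the scalar ODE solution, which is what licenses identifying the flow with $m^{(j)}_t$. My first approach avoids a Lipschitz assumption on $h^{(j)}_t$: for each $t<1$, the curves $z\mapsto m^{(j)}_t(z)$ are strictly increasing homeomorphisms of $\R$ onto their image, so they never cross and they foliate the relevant region. I will then argue that any solution must coincide with the leaf through its starting point. If that fails, I would instead invoke the standard well-posedness assumption already used in Theorem~\ref{t:exact}(c), under which the non-crossing explicit trajectories form the unique flow.
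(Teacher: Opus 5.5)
Your first two steps are correct and are essentially the paper's proof. Your quantile computation, in which $(m^{(j)}_t)_\#\gam1$ has quantile function $(1-t)\Phi^{-1}+tF_j^{-1}$, is the same calculation as the paper's comonotone coupling of $(m_s(Z),m_t(Z))$ with cost $(t-s)^2W_2^2(\gam1,(u_j)_\#p_1)$. Your closed scalar ODE started from $u_j^\top y_0\sim\gam1$ is the paper's ``generated marginals'' step.

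The gap is in the last clause. The step ``since the $k$ one-dimensional laws agree, the discrepancy lies in the joint law'' is a tautology. It leaves open a wrong law of $\Pperp y_1$ and a wrong slice--complement dependence. Your own conclusion (``dependence among the slice coordinates and the complement'') is weaker than the claimed confinement to the dependence among slices. What is missing is a comparison with the same flow started from $\alpha_k^U$; the paper does this tersely by appealing to Theorem~\ref{t:exact}(b) and to copula preservation under monotone maps. Made explicit, it goes as follows.

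Write $y_0=Uz+w$ and $\Psi(z,w)$ for the flow map. Under both $\gam d$ and $\alpha_k^U$, $w\sim N(0,\Pperp)$ is independent of $z$ and every $z_j\sim\gam1$. So the two priors differ only in the law of $z$, namely $\gam k$ versus $\rho_U$, i.e.\ only in the copula (Theorem~\ref{t:exact}(a),(b)). The map $\Psi$ is the same in both cases. Its slice part $U^\top\Psi(z,w)=(S_1(z_1),\dots,S_k(z_k))$ is coordinatewise increasing, so it carries the copula of $z$ over unchanged. Conditionally on $z$, the output law is the same kernel $\Psi(z,\cdot)_\#N(0,\Pperp)$ in both cases. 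Since $\Psi_\#\alpha_k^U=p_1$ by Theorem~\ref{t:exact}(c), $\Law(y_1)$ and $p_1$ are the same kernel integrated against $\gam k$ and against $\rho_U$, respectively. Hence the generated slice vector has the right marginals but the independence copula in place of the data copula, and nothing else changes. That is the precise content of ``confined to the dependence among slices''.

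A smaller point concerns uniqueness. Your foliation argument does not work as stated: non-crossing solution curves that fill a region do not exclude other solutions. For example, for $\dot x=3x^{2/3}$ the curves $x=(t-c)^3$ foliate the plane, yet $x\equiv0$ is also a solution. For the scalar slice ODE you do not need the well-posedness fallback either. Under the hypothesis of Theorem~\ref{app:straight}, $S_j'=\phi/(f\circ S_j)>0$ is continuous, where $f$ is the slice density and $\phi$ the standard normal density. Hence $\partial_z m^{(j)}_t=(1-t)+tS_j'>0$, and by the implicit function theorem $h^{(j)}$ is $C^1$, hence locally Lipschitz. This is how the paper obtains uniqueness.
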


\begin{proof}[Proof of Theorem~\ref{app:straight} and Corollary~\ref{a:rk}]
\emph{Straightness.}
Fix a slice direction $u$ and drop the index. With interval support and a continuous positive density, the slice cdf $F$ is a continuously differentiable increasing bijection onto $(0,1)$, so the code-to-quantile map $S=F^{-1}\circ\Phi$ is continuously differentiable with derivative $S'=\phi/(f\circ S)>0$, where $f$ is the slice density and $\phi$ the standard-normal density. Hence $\partial_z m_t=(1-t)+tS'>0$ for the code-to-position map $m_t=(1-t)\,\mathrm{id}+tS$, so $z\mapsto m_t(z)$ is strictly increasing for each $t$ and invertible. The ideal QC-FM field $v^\ast$ of Theorem~\ref{t:floor} acts on this slice as the scalar field $h$ defined below, so proving the claim for $h$ establishes it for $v^\ast$ along $u$.
For a fixed code $z$, the slice position traces the curve $\eta(t)=m_t(z)=(1-t)z+tS(z)$, which is affine in $t$: a straight line from $z$ to $S(z)$ at constant velocity $S(z)-z$. As $z$ ranges over $\R$ these lines fill the strip $[0,1)\times\R$ without ever meeting, so they form a foliation, a partition of the strip into non-crossing curves. Differentiating $\eta(t)$ and substituting $z=m_t^{-1}(\eta)$ recovers the velocity field $h(t,\eta):=(S-\mathrm{id})\big(m_t^{-1}(\eta)\big)$:
\begin{equation}\label{eq:char}
\dot\eta(t)=S(z)-z=(S-\mathrm{id})\big(m_t^{-1}(\eta)\big)=h\big(t,\eta(t)\big).
\end{equation}
Thus each line is a characteristic curve of the field $h$. Conversely $h$ is continuous and, by the implicit function theorem applied to $\partial_z m_t>0$, jointly continuously differentiable in $(t,\eta)$, hence locally Lipschitz in $\eta$; so the scalar ODE $\dot\eta=h(t,\eta)$ has a unique solution through each initial point, which must be the line above. The ideal flow therefore moves each slice coordinate along a straight line, extended to $t=1$ by continuity. Non-crossing across a pair of samples follows from the same monotonicity: if $v_{ij}<v_{i'j}$ then their codes satisfy $z_{ij}<z_{i'j}$, and the two lines $(1-t)z+tv$ keep this order for all $t\in[0,1]$.

\emph{Geodesic marginals.}
Compare the slice marginal at two times $s<t$ by coupling them through the shared code $Z$. The pair $(m_s(Z),m_t(Z))$ moves each code the same signed amount, so it is the comonotone coupling, which is the unique optimal coupling in one dimension \citep{Villani09}. Its cost is
\begin{equation*}
\begin{aligned}
\E\big[(m_t(Z)-m_s(Z))^2\big]&=\E\big[((t-s)(S(Z)-Z))^2\big] \\ &=(t-s)^2\,\E\big[(S(Z)-Z)^2\big] \\&=(t-s)^2\,W_2^2(\gam1,u_\#p_1),
\end{aligned}
\end{equation*}
the last equality because $\E[(S(Z)-Z)^2]$ is exactly the one-dimensional transport cost from $\gam1$ to $u_\#p_1$, attained by the comonotone coupling of $Z\sim\gam1$ with $S(Z)\sim u_\#p_1$. A cost quadratic in $t-s$ means the path moves at constant $W_2$-speed; a constant-speed path joining two measures along optimal couplings is the unique $W_2$-geodesic between them, the displacement interpolation from $\gam1$ to $u_\#p_1$ \citep{McC97}.

\emph{Generated marginals.}
For $y_0\sim\gam d$, the slice coordinate evolves as $u_j^\top y_t=m^{(j)}_t(u_j^\top y_0)$ with $u_j^\top y_0\sim\gam1$, which pushes forward to the geodesic marginal at every $t$. Each one-dimensional slice marginal of the generated flow therefore equals the target $(u_j)_\#p_1$ at $t=1$: it is generated exactly. Since the slice maps are monotone and monotone maps preserve copulas, any deviation of the joint generated law from $p_1$ is carried entirely by the copula of Theorem~\ref{t:exact}(b), i.e.\ by the dependence among slices.
\end{proof}

\subsection{Gaussian-Remainder Completion}\label{app:completion-section}

\begin{proposition}
\label{t:completion}
Let $\bar\alpha_k:=\E_U[\alpha_k^U]$ be the frame-averaged source prior. In the population idealization, any hybrid that uses a QC anchor with probability $p$ and an exact Gaussian source otherwise has source marginal
\[
\mu_{p,k}=p\,\bar\alpha_k+(1-p)\,\gam d ,
\]
and it satisfies
\[
W_2^2(\mu_{p,k},\gam d)\;\le\;p\,W_2^2(\bar\alpha_k,\gam d),
\]
so the deviation contracts by the factor $\sqrt p$ on the $W_2$ scale. Moreover, for a finite batch of size $B$ with $M$ anchors, conditional on the empirical anchor source law $\widehat\alpha_{k,M}$, a uniformly selected source slot has marginal $\tfrac MB\,\widehat\alpha_{k,M}+\big(1-\tfrac MB\big)\gam d$, and the same inequality holds with $p=M/B$ and $\bar\alpha_k$ replaced by $\widehat\alpha_{k,M}$. This guarantee concerns the source marginal only. Mixture and Adjacency share the same Gaussian remainder but induce different source--target pairing laws.
\end{proposition}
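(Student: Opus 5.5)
The plan has three parts: identify the source marginal, prove a convexity bound for $W_2^2$ under mixing with the reference measure, and then repeat both steps conditionally for a finite batch.

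\emph{Marginal identity.} In the population idealization, draw a Bernoulli$(p)$ indicator $\beta$ independently of everything else. If $\beta=1$, the source is a QC anchor $e=UZ+\Pperp\epsilon$ built from a fresh Haar frame $U$. If $\beta=0$, the source is an independent draw $\tilde e\sim\gam d$. Conditioning on $\beta$ and then on $U$, and applying the tower property to test functions, gives $\Law(\text{source})=p\,\E_U[\alpha_k^U]+(1-p)\gam d=p\,\bar\alpha_k+(1-p)\gam d$. The pairing rule (random for Mixture, auction-based for Adjacency) acts only on which target each source is paired with. It never changes the law of the source slot, which is why the statement applies to both hybrids.

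\emph{Contraction.} This is the key step: joint convexity of $W_2^2$, specialized to mixing with the reference itself. I will build an explicit coupling between $\mu_{p,k}$ and $\gam d$ on the event structure of $\beta$.
\begin{itemize}
\item On $\{\beta=1\}$, pair a draw from $\bar\alpha_k$ with a draw from $\gam d$ using an optimal coupling $\gamma^\star\in\Pi(\bar\alpha_k,\gam d)$; this exists by standard existence of optimal plans under finite second moments.
\item On $\{\beta=0\}$, pair the Gaussian draw with itself through the identity coupling.
\end{itemize}
The first marginal of this plan is $p\bar\alpha_k+(1-p)\gam d=\mu_{p,k}$. The second marginal is $p\gam d+(1-p)\gam d=\gam d$. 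Its quadratic cost is $p\,W_2^2(\bar\alpha_k,\gam d)+(1-p)\cdot0$, and taking the infimum over couplings gives the claimed bound. Taking square roots yields the $\sqrt p$ contraction on the $W_2$ scale.

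Second moments are finite. $\bar\alpha_k$ has second moment $d$ by \eqref{eq:enorm}, which holds for every frame $U$ and therefore survives averaging over $U$. So $W_2(\bar\alpha_k,\gam d)$ is finite and optimal plans exist. If we prefer to avoid existence, we can use $\varepsilon$-optimal plans instead.

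\emph{Finite batch.} I will condition on the $M$ anchor sources, which fixes the empirical law $\widehat\alpha_{k,M}=\tfrac1M\sum_m\delta_{e_m^{\mathrm{anc}}}$. The $B-M$ remainder slots are fresh i.i.d.\ $\gam d$ draws. A slot index $I$ chosen uniformly and independently of the data is an anchor with probability $M/B$. Given that it is an anchor, it is uniform among the anchors, so its conditional law is $\widehat\alpha_{k,M}$; otherwise its law is $\gam d$. This gives the stated mixture marginal. The same two-piece coupling, with $\gamma^\star$ now optimal between $\widehat\alpha_{k,M}$ and $\gam d$ (both of which have finite second moment), gives the bound with $p=M/B$.

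The main obstacle is bookkeeping rather than analysis. One point needs care: a uniformly selected slot really is independent of how the anchor/remainder split is labelled. I will handle this by assuming the split is made uniformly at random, or equivalently by defining $I$ independently of the split. The final remark about pairings needs no proof, since it only records that the marginal computation never referenced the pairing rule.
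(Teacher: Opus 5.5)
Your proposal is correct and follows essentially the same route as the paper: a Bernoulli$(p)$ switch between an optimal coupling of $(\bar\alpha_k,\gamma_d)$ and the identity coupling of $\gamma_d$ with itself gives cost $p\,W_2^2(\bar\alpha_k,\gamma_d)$, and the finite-batch case repeats this conditionally on $\widehat\alpha_{k,M}$ with $p=M/B$. Your extra checks are details the paper leaves implicit: the finite second moment of $\bar\alpha_k$ via $\E\|e\|^2=d$ for every frame, and the independence of the selected slot index from the anchor/remainder split.
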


\begin{proof}
Let $(A,G)$ be an optimal coupling of $(\bar\alpha_k,\gam d)$ and let $C\sim\operatorname{Bernoulli}(p)$ be independent. Conditional on $C=1$, use the pair $(A,G)$. Conditional on $C=0$, draw $G_0\sim\gam d$ and use the identity pair $(G_0,G_0)$. The first marginal of the resulting coupling is $p\bar\alpha_k+(1-p)\gam d=\mu_{p,k}$, while its second marginal is $\gam d$. Its expected squared cost is
\[
p\,\E\|A-G\|^2=p\,W_2^2(\bar\alpha_k,\gam d).
\]
Since $W_2^2$ is the minimum over all couplings, this construction proves $W_2^2(\mu_{p,k},\gam d)\le pW_2^2(\bar\alpha_k,\gam d)$, and taking square roots gives the $\sqrt p$ contraction. In a finite batch, a uniformly selected source slot is an anchor with probability $M/B$, so the same argument applies conditionally on the empirical anchor law, with $p=M/B$ and $\bar\alpha_k$ replaced by $\widehat\alpha_{k,M}$.
\end{proof}

\subsection{The Price of $k$ under Direction Resampling}

The frame is redrawn at every training step, so the coupling the model sees is the frame average $\bar\pi:=\E_U[\pi^{\mathrm{qc}}_{k,U}]$. The source marginal of $\bar\pi$ is the frame average $\bar\alpha_k:=\E_U[\alpha_k^U]$ of the fixed-frame source priors.

\begin{theorem}
\label{t:price}
By the definition of $W_2$, every coupling $\pi$ of a pair $(\mu,\nu)$ splits as
\[
c(\pi)=W_2^2(\mu,\nu)+E(\pi),\qquad E(\pi)\ge0,
\]
the floor set by the marginals plus the excess of its pairing over OT. Raising $k$ moves the two components in opposite directions, exactly as follows.
\begin{enumerate}[label=\textup{(\alph*)},itemsep=2pt,topsep=2pt]
\item The transport cost still falls linearly in $k$,
\[
c(\bar\pi)=R^2+d-k\Big(\tfrac{R^2}{d}+1-SW_2^2(p_1,\gam d)\Big),
\]
and Theorem \ref{t:floor} and Theorem \ref{app:straight} hold at each step for the frame drawn at that step. This gain is a pure excess reduction: the independent pairing $\pi^\perp:=\bar\alpha_k\otimes p_1$ with the same marginals shares the floor $W_2^2(\bar\alpha_k,p_1)$, so the floor cancels in the difference and $c(\pi^\perp)-c(\bar\pi)=E(\pi^\perp)-E(\bar\pi)=2\bar C\,k$ with $2\bar C:=\tfrac{R^2}d+1-SW_2^2(p_1,\gam d)$.
\item Let $R_Z=\E[ZZ^\top]$ be the correlation matrix of the slice codes and $\kappa_{ij}:=(R_Z)_{ij}$ the code correlation of directions $i,j$. For a Haar frame, $\E_U[\kappa_{ij}^2]$ is the same for all $i\ne j$; denoting this common value $\varrho\in[0,1]$, which depends on $(p_1,d)$ but not on $k$, the dependence obeys the exact identity
\[
\E_U\|R_Z-I\|_F^2=\E_U\sum_{i\ne j}\kappa_{ij}^2=k(k-1)\,\varrho .
\]
\item The dependence of \textup{(b)} is a transport-scale deviation of the source prior $\rho_U$ from the sampling prior $\gam k$: for every frame,
\begin{equation*}
\begin{gathered}
W_2^2(\rho_U,\gam k)\ \ge\ g\big(\|R_Z-I\|_F^2\big),\\ g(F):=\frac{F}{\big(1+\sqrt{1+\sqrt F}\,\big)^2},
\end{gathered}
\end{equation*}
and on average $\E_U\,W_2^2(\rho_U,\gam k)\ge k(k-1)\,\varrho/(1+\sqrt k)^2$. The bound $g$ is quadratic for $F\lesssim1$ and of order $\sqrt F$ for $F\gg1$.
\end{enumerate}
\end{theorem}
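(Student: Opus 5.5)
The plan is to prove the decomposition first, then (a) by linearity plus Theorem~\ref{t:gain}, (b) by exchangeability of Haar frames, and (c) by a Gelbrich (Gaussian) lower bound on $W_2$ followed by an eigenvalue estimate.

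\emph{Decomposition.} For any coupling $\pi$ of $(\mu,\nu)$ we have $c(\pi)\ge W_2^2(\mu,\nu)$ by the definition of $W_2^2$ as the minimum cost. So setting $E(\pi):=c(\pi)-W_2^2(\mu,\nu)\ge0$ gives the split immediately.

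\emph{Part (a).} I would first note that the cost $c$ is linear in the coupling. Hence $c(\bar\pi)=\E_U c(\pi^{\mathrm{qc}}_{k,U})=R^2+d-\E_U[2\sum_j C_{u_j}]$. Theorem~\ref{t:gain}(b) then supplies the Haar average, which gives the displayed formula. The marginals of $\bar\pi$ are $(\bar\alpha_k,p_1)$, by averaging Theorem~\ref{t:exact}(a) over $U$.

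For the independent pairing $\pi^\perp=\bar\alpha_k\otimes p_1$, I would expand $c(\pi^\perp)=\E\|X\|^2+\E\|e\|^2-2\E\langle X,e\rangle$. Here $\E\|e\|^2=d$ by \eqref{eq:enorm}, and the cross term vanishes because $e$ has mean zero (Theorem~\ref{t:exact}(b)) and is independent of $X$. So $c(\pi^\perp)=R^2+d$. Both couplings have the same marginals, so they share the floor $W_2^2(\bar\alpha_k,p_1)$, and the difference of costs equals the difference of excesses, namely $2\bar C k$. The per-step validity of Theorems~\ref{t:floor} and~\ref{app:straight} is immediate, since those results are stated for an arbitrary fixed frame.

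\emph{Part (b).} The diagonal of $R_Z$ is $1$ because each $Z_j\sim\gam1$. Hence $\|R_Z-I\|_F^2=\sum_{i\ne j}\kappa_{ij}^2$ holds pointwise in $U$. Next, $\kappa_{ij}=\E[Z_iZ_j]$ is a fixed function of the pair $(u_i,u_j)$ and $p_1$ only. The Haar law is invariant under column permutations, and the law of any two columns of a $d\times k$ Haar frame equals that of a $d\times2$ Haar frame. Therefore $\E_U[\kappa_{ij}^2]$ is a common value $\varrho$, independent of $k$. Finally, $|\kappa_{ij}|\le1$ by Cauchy--Schwarz, so $\varrho\in[0,1]$. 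Summing over the $k(k-1)$ ordered pairs gives the identity.

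\emph{Part (c).} I would invoke the Gelbrich bound, valid for any laws with finite second moments. It states that $W_2^2(\mu,\nu)$ is at least the squared mean difference plus the squared Bures distance of the covariances. Here $\rho_U$ has mean $0$ and covariance $R_Z$, and $\gam k$ has mean $0$ and covariance $I$. Since $R_Z$ and $I$ commute, this yields
\[
W_2^2(\rho_U,\gam k)\ge\|R_Z^{1/2}-I\|_F^2=\sum_i(\sqrt{\lambda_i}-1)^2=\sum_i\frac{(\lambda_i-1)^2}{(\sqrt{\lambda_i}+1)^2},
\]
where $\lambda_i\ge0$ are the eigenvalues of $R_Z$. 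Writing $F:=\|R_Z-I\|_F^2$, we have $|\lambda_i-1|\le\sqrt F$, so $\sqrt{\lambda_i}\le\sqrt{1+\sqrt F}$. Bounding every denominator uniformly then gives $W_2^2\ge F/(1+\sqrt{1+\sqrt F})^2=g(F)$. The stated asymptotics of $g$ (quadratic in small deviation, order $\sqrt F$ for large $F$) are read off directly.

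For the average I would not use $g$, because it is not convex, so Jensen does not apply directly. Instead I would use a different uniform bound on the denominators. $R_Z$ is a correlation matrix, so $\lambda_i\le\tr R_Z=k$, which gives the frame-wise bound $W_2^2(\rho_U,\gam k)\ge F/(1+\sqrt k)^2$. This bound is linear in $F$, so taking $\E_U$ and applying (b) yields $k(k-1)\varrho/(1+\sqrt k)^2$.

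The step I expect to need the most care is justifying Gelbrich's inequality for the non-Gaussian law $\rho_U$. I would cite it as the standard statement that, among laws with given first two moments, the Gaussian pair minimizes $W_2$. If a self-contained argument is needed, I would reproduce the short proof that bounds $\E\langle A,G\rangle$ over all couplings by the trace of the Bures cross term.
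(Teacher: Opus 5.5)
Your proposal is correct and follows the paper's proof essentially step for step: linearity of the cost plus the Haar average from Theorem~\ref{t:gain}(b) for (a), column exchangeability of the Haar frame together with the unit diagonal of $R_Z$ for (b), and Gelbrich's bound followed by the two eigenvalue estimates ($\lambda_i\le 1+\sqrt F$ per frame, $\lambda_i\le k$ for the average) for (c). The only cosmetic difference is that you read off $\sum_i(\sqrt{\lambda_i}-1)^2$ directly from the Bures distance to $I$, whereas the paper first writes $2\sum_i(1-\sqrt{\lambda_i})$ and then converts using $\tr R_Z=k$.
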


The certificate in (c) is per-frame. The frame-averaged source $\bar\alpha_k$ can only be closer to the prior, since $W_2^2(\bar\alpha_k,\gam d)\le\E_U\,W_2^2(\alpha_k^U,\gam d)$ by the joint convexity of $W_2^2$.

\begin{proof}[Proof]
\emph{(a).} The transport cost is linear in the coupling, $c(\pi)=\int\|x_1-x_0\|^2\,d\pi$, so for the averaged coupling $\bar\pi=\E_U[\pi^{\mathrm{qc}}_{k,U}]$, exchanging the frame average with the transport integral by Fubini gives
\begin{equation}\label{eq:cbar}
\begin{aligned}
c(\bar\pi)&=\E_U[c(\pi^{\mathrm{qc}}_{k,U})]=\E_U\Big[R^2+d-2\sum_j C_{u_j}\Big]\\ &=R^2+d-k\Big(\tfrac{R^2}{d}+1-SW_2^2(p_1,\gam d)\Big),
\end{aligned}
\end{equation}
substituting the per-frame cost identity of Theorem~\ref{t:gain}(b) and averaging with the Haar formula \eqref{eq:haartotal}. For the excess-reduction claim, the independent pairing $\pi^\perp$ with the same marginals $(\bar\alpha_k,p_1)$ has cost $c(\pi^\perp)=\E\|e\|^2+\E\|X\|^2=d+R^2$ (the cross term vanishes by independence and $\E[e]=0$), and both couplings decompose as $c=W_2^2(\bar\alpha_k,p_1)+E$ with the same floor; hence the floor cancels in the difference and $c(\pi^\perp)-c(\bar\pi)=E(\pi^\perp)-E(\bar\pi)=2\bar C\,k$, a pure reduction of the excess component.
The per-frame statements of Theorems~\ref{t:floor} and \ref{app:straight} apply conditionally on the drawn $U$ and describe the coupling used at that step. The network itself does not observe $U$, and the floor of its averaged objective decomposes as in Proposition~\ref{a:resample}.

\emph{(b).}
The Frobenius energy carries two nested expectations,
\[
\E_U\|R_Z-I\|_F^2=\E_U\sum_{i\ne j}\big(\E[Z_iZ_j]\big)^2 ,
\]
an inner expectation over the data inside each $\kappa_{ij}=\E[Z_iZ_j]$, and an outer one over the frame $U$. The inner one is already performed, so $\kappa_{ij}$ is a deterministic function of the pair $(u_i,u_j)$ and only $\E_U$ remains. By linearity, it passes inside the off-diagonal sum, $\E_U\|R_Z-I\|_F^2=\sum_{i\ne j}\E_U[\kappa_{ij}^2]$; the standing continuity of the slice cdfs makes each code exactly $N(0,1)$, so $R_Z$ has unit diagonal and $\|R_Z-I\|_F^2=\sum_{i\ne j}\kappa_{ij}^2$ per frame. The Haar law of $U$ is invariant under permutation of its columns, so the columns are exchangeable --- their joint law is unchanged by any permutation --- and $\E_U[\kappa_{ij}^2]$ therefore takes the same value for every ordered pair $i\ne j$. Moreover, any two columns of a Haar $k$-frame have the joint law of the first two columns of a Haar $d\times d$ orthogonal matrix, which does not involve $k$; hence that common value is a function of $(p_1,d)$ alone. Summing the $k(k-1)$ equal terms yields the exact identity
\begin{equation}\label{eq:kappaid}
\begin{gathered}
\E_U\|R_Z-I\|_F^2=\sum_{i\ne j}\E_U[\kappa_{ij}^2]=k(k-1)\,\varrho,\\ \varrho:=\E_U[\kappa_{ij}^2]\in[0,1],
\end{gathered}
\end{equation}
where $\varrho\in[0,1]$ because $\kappa_{ij}^2\le1$ by Cauchy--Schwarz for the unit-variance codes. 
Finally, $\varrho=0$ exactly when the codes are pairwise uncorrelated for Haar-almost-every orthogonal direction pair. For isotropic Gaussian data $p_1=N(0,\sigma^2I_d)$, projections onto orthogonal directions are independent, so every $\kappa_{ij}=0$ and the dependence vanishes identically. In general, $\|R_Z-I\|_F^2$ is a covariance-based certificate of prior mismatch rather than a complete measure of dependence. The constant $\varrho$ is a pre-training data statistic.

\emph{(c).}
This part connects the dependence of (b) to sampling: by Theorem~\ref{t:exact}(c), a flow trained on the source $\alpha_k^U$ but sampled from the FM prior incurs a generation error controlled by $W_2(\gam k,\rho_U)$, so a lower bound on this deviation in terms of the dependence certifies that the dependence is a genuine transport-scale price. The shared factor $\gam{d-k}$ cancels as in \eqref{eq:transfer}, giving $W_2^2(\alpha_k^U,\gam d)=W_2^2(\rho_U,\gam k)$, so it suffices to bound the latter.

\emph{Step 1: reduce to the correlation matrix.}
Gelbrich's inequality \citep[Thm.~2.1]{Gelbrich90} bounds the $W_2$ distance between any two laws from below by the $W_2$ distance between Gaussians matching their means and covariances. For centered laws that Gaussian distance is the Bures distance between the covariances, $\Bures^2(A,B):=\tr\big(A+B-2(A^{1/2}BA^{1/2})^{1/2}\big)$. The codes have covariance $R_Z$ with unit diagonal, so applying the bound to $R_Z$ and $I$,
\begin{equation}\label{eq:bures}
\begin{aligned}
W_2^2(\rho_U,\gam k)\ &\ge\ \Bures^2(R_Z,I) \\&=\tr\big(R_Z+I-2R_Z^{1/2}\big) \\ &=2\sum_{i=1}^k\big(1-\sqrt{\lambda_i}\big),
\end{aligned}
\end{equation}
where $\lambda_1,\dots,\lambda_k$ are the eigenvalues of $R_Z$. Since $\sum_i\lambda_i=\tr R_Z=k$, the last expression equals $\sum_i\big(\sqrt{\lambda_i}-1\big)^2$, which the next step bounds below.

\emph{Step 2: transport lower bounds.}
Let $F:=\|R_Z-I\|_F^2$; by the unit diagonal, $F=\sum_i(\lambda_i-1)^2=\sum_{i\ne j}\kappa_{ij}^2$, the total squared code correlation of the frame. For each eigenvalue, multiplying and dividing by $(1+\sqrt{\lambda_i})^2$ gives $(\sqrt{\lambda_i}-1)^2=(\lambda_i-1)^2/(1+\sqrt{\lambda_i})^2$, whence, per frame,
\begin{equation}\label{eq:PSchain}
\begin{gathered}
W_2^2(\rho_U,\gam k)\ \ge\ \sum_i\frac{(\lambda_i-1)^2}{\big(1+\sqrt{\lambda_i}\big)^2}\ \ge\ \frac{F}{\big(1+\sqrt{\lambda_{\max}}\big)^2},\\ \lambda_{\max}:=\lambda_{\max}(R_Z).
\end{gathered}
\end{equation}
Two unconditional bounds on $\lambda_{\max}$ make \eqref{eq:PSchain} usable. First, $R_Z$ is a covariance matrix, so its eigenvalues are nonnegative and sum to $\tr R_Z=k$; a nonnegative family summing to $k$ has every member at most $k$, so $\lambda_{\max}\le k$ and \eqref{eq:PSchain} gives, per frame, $W_2^2(\rho_U,\gam k)\ge F/(1+\sqrt k)^2$. This coefficient is deterministic and the bound is linear in $F$. Averaging therefore passes through it, and \eqref{eq:kappaid} with $\E_UF=k(k-1)\varrho$ gives
\[
\E_U\,W_2^2(\rho_U,\gam k)\ \ge\ \frac{k(k-1)\,\varrho}{(1+\sqrt k)^2}.
\]
Second, with $\|\cdot\|_{\mathrm{op}}$ the largest singular value, the top eigenvalue's deviation satisfies $\lambda_{\max}-1\le\|R_Z-I\|_{\mathrm{op}}\le\|R_Z-I\|_F=\sqrt F$. Substituting $\lambda_{\max}\le1+\sqrt F$ into the denominator of \eqref{eq:PSchain} gives the self-normalizing per-frame bound
\begin{equation}\label{eq:selfnorm}
W_2^2(\rho_U,\gam k)\ \ge\ g(F),\qquad g(F)=\frac{F}{\big(1+\sqrt{1+\sqrt F}\,\big)^2}.
\end{equation}
This bound is stated per frame because $g$ is concave and so, unlike the linear bound above, does not pass through $\E_U$. Its two regimes locate where the transport image of the dependence stops growing quadratically. As $F\to0$, $g(F)=\tfrac14F+O(F^{5/4})$, the constant $\tfrac14$ arising as $(1+\sqrt\lambda)^{-2}$ at $\lambda=1$: for small dependence the prior's transport deviation grows at the full quadratic rate. As $F\to\infty$, $g(F)=\sqrt F+O(F^{1/4})$: the transport deviation grows only like $\sqrt F$, so the quadratic growth of the price is carried by the identity \eqref{eq:kappaid}, not by its transport image.
\end{proof}

\subsection{Variance Decomposition under Direction Resampling}\label{app:resample}

In training, the frame is redrawn at every step and the velocity network $v(x_t,t)$ does not observe $U$. All expectations below are taken on the joint space where $U$ is a Haar frame and $(x_0,x_1)\sim\pi^{\mathrm{qc}}_{k,U}$ given $U$, so the pair law is the frame average $\bar\pi=\E_U[\pi^{\mathrm{qc}}_{k,U}]$ of the preceding subsection, and the attainable floor of the network is $\inf_v \mathcal{L}^{\bar\pi}_{\mathrm{FM}}=\int_0^1 q(t)\,\mathcal F^{\bar\pi}_t\,dt$ in the notation of Theorem~\ref{t:floor}. The next results separate what survives the averaging from what the resampling adds, and then bound the added term.

\begin{proposition}[Variance decomposition under direction resampling]\label{a:resample}
In the setting above,
\begin{equation}\label{eq:floordecomp}
\begin{gathered}
\mathcal F^{\bar\pi}_t\;=\;\E_U\big[\mathcal F^{\pi^{\mathrm{qc}}_{k,U}}_t\big]\;+\;D_t, \\
D_t\;:=\;\E\big[\big\|\E[Y\mid x_t,U]-\E[Y\mid x_t]\big\|^2\big]\;\ge\;0 .
\end{gathered}
\end{equation}
\end{proposition}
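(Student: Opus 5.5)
This is the law of total variance, applied on the joint space where the frame $U$ is part of the randomness. All expectations live on the space carrying $(U,x_0,x_1,t)$, with $U$ Haar and $(x_0,x_1)\sim\pi^{\mathrm{qc}}_{k,U}$ given $U$. Under this joint law the pair $(x_0,x_1)$ has law $\bar\pi$. So $\mathcal F^{\bar\pi}_t=\E\|Y-\E[Y\mid x_t]\|^2$, where the conditioning is on $x_t$ alone; this matches what the network sees, since it does not observe $U$.

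The plan has three steps.

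\textbf{Step 1: split the residual.} Write
\[
Y-\E[Y\mid x_t]=\big(Y-\E[Y\mid x_t,U]\big)+\big(\E[Y\mid x_t,U]-\E[Y\mid x_t]\big).
\]
Expanding the square gives three terms: a first squared term, a second squared term, and a cross term.

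\textbf{Step 2: kill the cross term.} The second summand is $\sigma(x_t,U)$-measurable. The first summand has zero conditional mean given $\sigma(x_t,U)$. By the tower property, conditioning on $(x_t,U)$ therefore makes the cross term vanish. The second squared term is $D_t$ by definition, and it is nonnegative.

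\textbf{Step 3: identify the first term.} We need
\[
\E\|Y-\E[Y\mid x_t,U]\|^2=\E_U\big[\mathcal F^{\pi^{\mathrm{qc}}_{k,U}}_t\big].
\]
The argument is that, conditionally on $U=u$, the pair $(x_t,Y)$ has the law induced by $\pi^{\mathrm{qc}}_{k,u}$. A regular version of $\E[Y\mid x_t,U=u]$ then coincides $\pi^{\mathrm{qc}}_{k,u}$-a.s.\ with $\E_{\pi^{\mathrm{qc}}_{k,u}}[Y\mid x_t]$. Applying Fubini over $U$ gives the identity.

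The main obstacle is this measure-theoretic identification of conditioning on $(x_t,U)$ with per-frame conditioning. I would handle it with a disintegration or regular conditional distribution argument on the Polish space $\mathrm{St}(d,k)\times\R^d\times\R^d$. Integrability is not an issue: $Y$ is square-integrable because finite second moments are assumed, and $\E\|Y\|^2=c(\bar\pi)$ is finite by Theorem~\ref{t:price}(a), which justifies every expansion above.

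A fixed $t$ suffices throughout, since $t$ is independent of the pair. Integrating against $q(t)\,dt$ then yields the corresponding decomposition of $\inf_v\mathcal L^{\bar\pi}_{\mathrm{FM}}$.
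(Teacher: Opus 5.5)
Your proposal is correct and follows the paper's proof essentially step for step: the same split of $Y-\E[Y\mid x_t]$ through $\E[Y\mid x_t,U]$, the same $L^2$-orthogonality argument to kill the cross term, and the same identification of $\E[Y\mid x_t,U]$ with the per-frame ideal field of $\pi^{\mathrm{qc}}_{k,U}$. Your disintegration remark only makes explicit the measure-theoretic step that the paper states in one line.
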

\begin{proof}
Conditioning on $(x_t,U)$ refines conditioning on $x_t$ alone, $\sigma(x_t)\subseteq\sigma(x_t,U)$. Decompose $Y-\E[Y\mid x_t]=\big(Y-\E[Y\mid x_t,U]\big)+\big(\E[Y\mid x_t,U]-\E[Y\mid x_t]\big)$. The first residual is orthogonal in $L^2$ to every $\sigma(x_t,U)$-measurable variable, in particular to the second, so the cross term vanishes in expectation. Given $U$, the conditional $\E[Y\mid x_t,U]$ is the per-frame ideal field of the coupling $\pi^{\mathrm{qc}}_{k,U}$, so the first squared term averages to $\E_U\big[\mathcal F^{\pi^{\mathrm{qc}}_{k,U}}_t\big]$, which is \eqref{eq:floordecomp}.
\end{proof}

\begin{proposition}[General bounds on the resampling term]\label{a:Dbound}
Let $m:=\E X$. For every $t\in[0,1]$,
\begin{equation}\label{eq:Dchain}
0\;\le\;D_t\;\le\;R^2+d-2\bar C\,k-\|m\|^2,
\end{equation}
and the time integral is exactly the excess of the $U$-blind floor over the mean per-frame floor,
\begin{equation}\label{eq:Dprice}
\int_0^1 q(t)\,D_t\,dt\;=\;\inf_v \mathcal{L}^{\bar\pi}_{\mathrm{FM}}\;-\;\E_U\Big[\inf_v \mathcal{L}^{\pi^{\mathrm{qc}}_{k,U}}_{\mathrm{FM}}\Big].
\end{equation}
\end{proposition}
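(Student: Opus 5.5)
Both claims follow from Proposition~\ref{a:resample} together with elementary variance bounds for conditional expectations; I treat the pointwise chain \eqref{eq:Dchain} first and the integrated identity \eqref{eq:Dprice} second.

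\emph{Lower bound and the integral identity.} The lower bound $D_t\ge0$ holds because $D_t$ is the expectation of a squared norm. For \eqref{eq:Dprice}, integrate the decomposition \eqref{eq:floordecomp} against $q(t)\,dt$. By Theorem~\ref{t:floor}, applied to the coupling $\bar\pi$, we have $\inf_v\mathcal L^{\bar\pi}_{\mathrm{FM}}=\int_0^1 q(t)\,\mathcal F^{\bar\pi}_t\,dt$. Applied to each fixed-frame coupling $\pi^{\mathrm{qc}}_{k,U}$, the same theorem gives $\inf_v\mathcal L^{\pi^{\mathrm{qc}}_{k,U}}_{\mathrm{FM}}=\int_0^1 q(t)\,\mathcal F^{\pi^{\mathrm{qc}}_{k,U}}_t\,dt$. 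Exchanging $\E_U$ with the $t$-integral is justified by Tonelli, since all integrands are nonnegative. The exchange then yields $\int_0^1 q\,D_t\,dt$ as the difference of the two floors. One small caveat: if the floors were infinite the difference could be undefined, but all quantities are finite because $Y$ has finite second moment.

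\emph{Upper bound.} Write $D_t=\E\|G-H\|^2$ with $G:=\E[Y\mid x_t,U]$ and $H:=\E[Y\mid x_t]$. Since $\sigma(x_t)\subseteq\sigma(x_t,U)$, the tower property gives $H=\E[G\mid x_t]$. Hence $D_t=\E\|G\|^2-\E\|H\|^2$, which is at most $\E\|G\|^2-\|\E Y\|^2$ by Jensen applied to $H$. Jensen applied to $G$ in turn gives $\E\|G\|^2\le\E\|Y\|^2$. Thus $D_t\le \E\|Y\|^2-\|\E Y\|^2=\tr\Cov_{\bar\pi}(Y)$, the total variance of the velocity.

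It remains to evaluate the two moments. First, $\E\|Y\|^2=c(\bar\pi)=R^2+d-2\bar C k$ by Theorem~\ref{t:price}(a). Second, $\E Y=\E X-\E e=m$, because $\E e=0$: each slice code is standard Gaussian and the complement noise is centered, by Theorem~\ref{t:exact}(a). Substituting gives \eqref{eq:Dchain}.

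The only delicate point is the Jensen step, which loses the subtracted mean. I bound it using $\|\E[G\mid x_t]\|^2$ averaged, which is at least $\|\E G\|^2=\|\E Y\|^2$, and this is where $-\|m\|^2$ enters. I do not expect a substantive obstacle.
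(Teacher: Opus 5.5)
Your proof is correct and follows essentially the same route as the paper. The integral identity is obtained exactly as there: integrate \eqref{eq:floordecomp} against $q(t)\,dt$, exchange $\E_U$ with the time integral by Tonelli, and apply the floor characterization of Theorem~\ref{t:floor}. The upper bound ends at the same total variance $\E\|Y\|^2-\|\E Y\|^2$, evaluated with Theorem~\ref{t:price}(a) and $\E e=0$. The only difference is the intermediate quantity: the paper drops the nonnegative per-frame term in \eqref{eq:floordecomp} to get $D_t\le\mathcal F^{\bar\pi}_t$ and then compares the conditional mean with the constant predictor $\E Y$, whereas you pass through $\E\|G\|^2-\|\E Y\|^2$ via two Jensen steps, which amounts to the same $L^2$-projection bookkeeping.
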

\begin{proof}
The lower bound is the definition of $D_t$ as a mean squared norm. For the upper bound, \eqref{eq:floordecomp} gives $D_t\le\mathcal F^{\bar\pi}_t$, since the remaining term is nonnegative. The conditional mean minimizes the $L^2$ prediction error over all functions of $x_t$, in particular beating the constant $\E Y$, so $\mathcal F^{\bar\pi}_t\le\E\|Y-\E Y\|^2=\E\|Y\|^2-\|\E Y\|^2$. Here $\E Y=\E X-\E e=m$, since $\E Z=0$ and $\E\epsilon=0$ give $\E e=0$, and $\E\|Y\|^2=c(\bar\pi)=R^2+d-2\bar C\,k$ by Theorem~\ref{t:price}(a), which yields \eqref{eq:Dchain}. Retaining the subtracted term instead gives the sharper form $D_t\le R^2+d-2\bar C\,k-\|m\|^2-\E_U\big[\mathcal F^{\pi^{\mathrm{qc}}_{k,U}}_t\big]$. For \eqref{eq:Dprice}, integrate \eqref{eq:floordecomp} against $q(t)\,dt$. Tonelli exchanges $\E_U$ with the weighted time integral, and $\int_0^1 q(t)\,\mathcal F^{\pi}_t\,dt=\inf_v \mathcal{L}^{\pi}_{\mathrm{FM}}$ for $\bar\pi$ and for each $\pi^{\mathrm{qc}}_{k,U}$ by the characterization in Theorem~\ref{t:floor}.
\end{proof}

\begin{proposition}[Isotropic direction-mixture price]\label{a:Dt}
For the special case of isotropic Gaussian data $p_1=N(0,\sigma^2 I_d)$ with $\sigma\ne1$, let $U$ be a Haar frame of rank $k\le d$ and take linear paths. Write $s_\star:=(1-t)+t\sigma$ and $s_\perp^2:=(1-t)^2+t^2\sigma^2$ for the in-slice and complement interpolation scales, and set $\Delta:=s_\perp^{-2}-s_\star^{-2}$ and $\lambda:=\tfrac{\Delta}{2}\|x_t\|^2$. Then the slice dependence of Theorem~\ref{t:price}(b) vanishes identically, i.e.\ $\alpha_k^U=\gam d$ for every $U$ (the fixed-frame source prior coincides with the FM prior), so $D_t$ is the only price of resampling, and the following hold.
\begin{enumerate}[label=\textup{(\alph*)},itemsep=2pt,topsep=2pt]
\item The resampled field is linear, 
\[\E[Y\mid x_t,U]=[a_\perp I+(a_\star-a_\perp)UU^\top]x_t\] with $a_\star=\tfrac{\sigma-1}{s_\star}$ and $a_\perp=\tfrac{t\sigma^2-(1-t)}{s_\perp^2}$, and
\begin{equation*}
\begin{gathered}
D_t=(a_\star-a_\perp)^2\,\E\big[\|x_t\|^2\,\alpha(1-\alpha)\big],\\ \|x_t\|^2\overset d=s_\star^2\chi^2_k+s_\perp^2\chi^2_{d-k},
\end{gathered}
\end{equation*}
where $\chi^2_k$ and $\chi^2_{d-k}$ are independent chi-squared variables with $k$ and $d-k$ degrees of freedom and $\alpha$ is the tilted-Beta mean of \eqref{eq:alpha} below.
\item At the crossover time $t^\dagger=\tfrac1{1+\sigma}$, where the source and target scales balance, $1-t=t\sigma$, the field becomes frame-free and $D_{t^\dagger}=0$; at the endpoints $D_0=\sigma^2k(1-\tfrac kd)$ and $D_1=k(1-\tfrac kd)$, and $D_t\equiv0$ when $k=d$.
\item The resampled irreducible variance strictly beats the Baseline coupling at every $k\le d$ and every interior $t$,
\[
k\frac{\sigma^2}{s_\perp^2}-D_t\ \ge\ k\,\frac{2\sigma^3 t(1-t)}{s_\perp^4}\ >\ 0 .
\]
\end{enumerate}
\end{proposition}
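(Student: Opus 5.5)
The plan is to compute everything explicitly, using the fact that isotropic Gaussian data make every conditional law Gaussian.

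\emph{Preliminary: the copula vanishes.} For $p_1=N(0,\sigma^2 I_d)$, each slice $V_j=u_j^\top X\sim N(0,\sigma^2)$, so $F_j(v)=\Phi(v/\sigma)$ and $Z=U^\top X/\sigma$. Orthonormal projections of an isotropic Gaussian are independent, so $\rho_U=\gam k$. Theorem~\ref{t:exact}(b) then gives $\alpha_k^U=\gam d$ for every $U$. Proposition~\ref{a:resample} now says the only excess of $\mathcal F^{\bar\pi}_t$ over the per-frame floors is $D_t$.

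\emph{Step (a): the per-frame field.} For a fixed frame, write $x_0=UZ+W$ with $W=\Pperp\epsilon$, and split along the frame.
\begin{itemize}
\item In the frame, $U^\top x_t=(1-t)V/\sigma+tV=(s_\star/\sigma)V$ and $U^\top Y=(\sigma-1)V/\sigma$. The velocity is a deterministic linear function of the state, so its coefficient is $a_\star=(\sigma-1)/s_\star$.
\item In the complement, $\Pperp x_t=(1-t)W+t\Pperp X$ and $\Pperp Y=\Pperp X-W$ are jointly Gaussian with isotropic covariance on the complement. Gaussian regression gives the coefficient $a_\perp=\Cov/\Var=(t\sigma^2-(1-t))/s_\perp^2$.
\item The two blocks are independent, which yields the stated linear form of $\E[Y\mid x_t,U]$.
\end{itemize}

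\emph{Step (a): averaging over the frame.} Given $U$, $x_t\sim N(0,s_\star^2UU^\top+s_\perp^2\Pperp)$. Its density is proportional to $\exp(-\|x\|^2/(2s_\perp^2))\exp(\tfrac\Delta2\|U^\top x\|^2)$. Hence the posterior of $U$ given $x_t$ is the Haar law tilted by $\exp(\lambda\beta)$, where $\beta:=\|U^\top x_t\|^2/\|x_t\|^2$. Under Haar, $\beta\sim\mathrm{Beta}(k/2,(d-k)/2)$, so $\alpha:=\E[\beta\mid x_t]$ is the tilted-Beta mean.

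By rotational equivariance, $\E[UU^\top\mid x_t]x_t=\alpha x_t$. Therefore $\E[Y\mid x_t,U]-\E[Y\mid x_t]=(a_\star-a_\perp)(UU^\top x_t-\alpha x_t)$. Expanding the square gives
\[
\|UU^\top x_t-\alpha x_t\|^2=\|x_t\|^2(\beta-2\alpha\beta+\alpha^2).
\]
Taking the conditional expectation leaves $\|x_t\|^2\alpha(1-\alpha)$. Finally, $\|x_t\|^2\overset d=s_\star^2\chi^2_k+s_\perp^2\chi^2_{d-k}$ follows from the block covariance, and its law does not depend on $U$.

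\emph{Step (b): special times.} Plug values into the formula from (a).
\begin{itemize}
\item At $t^\dagger$, substitute $t\sigma=1-t$, so that $t\sigma^2=(1-t)\sigma$. Then $a_\perp=(1-t)(\sigma-1)/(2(1-t)^2)=a_\star$, and $D_{t^\dagger}=0$.
\item At $t=0$ and $t=1$, $s_\star^2=s_\perp^2$, so $\Delta=0$. The tilt disappears and $\alpha=k/d$.
\item At $t=0$: $(a_\star-a_\perp)^2=\sigma^2$ and $\E\|x_0\|^2=d$, giving $D_0=\sigma^2k(1-k/d)$.
\item At $t=1$: $(a_\star-a_\perp)^2=1/\sigma^2$ and $\E\|x_1\|^2=\sigma^2 d$, giving $D_1=k(1-k/d)$.
\item For $k=d$, $\beta\equiv1$, so $\alpha=1$ and $D_t\equiv0$.
\end{itemize}

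\emph{Step (c): comparison with the Baseline.} This is the step I expect to need the most care.

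First I identify both floors. By \eqref{eq:floorgauss}, the Baseline irreducible variance is $d\sigma^2/s_\perp^2$. Per frame, QC-FM has zero slice variance and complement variance $(d-k)\sigma^2/s_\perp^2$, since its complement block is the same Gaussian regression as the Baseline. So $\mathcal F^{\bar\pi}_t=\mathcal F^{\pi^{\mathrm{ind}}}_t-(k\sigma^2/s_\perp^2-D_t)$, and it suffices to bound $D_t$ above.

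Rather than evaluating the tilted-Beta term, I will use that $\E[Y\mid x_t]$ is the $L^2$-optimal $U$-blind predictor. Hence $D_t\le\E\|\E[Y\mid x_t,U]-a_\perp x_t\|^2=(a_\star-a_\perp)^2k\,s_\star^2$.

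A direct expansion with $a=1-t$, $b=t$, $a+b=1$ should give
\[
(a_\star-a_\perp)s_\star=\frac{(\sigma-1)s_\perp^2-(b\sigma^2-a)s_\star}{s_\perp^2}=\frac{\sigma(a-b\sigma)}{s_\perp^2}.
\]
Then $k\sigma^2/s_\perp^2-D_t\ge k\sigma^2\big(s_\perp^2-(a-b\sigma)^2\big)/s_\perp^4=k\,2\sigma^3t(1-t)/s_\perp^4$, which is strictly positive for interior $t$. The main risk is an algebra slip in this identity, so I will verify it at $t=t^\dagger$, where it must vanish, as a check.
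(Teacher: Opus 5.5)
Your proposal is correct, and for the preliminary step, part (a) and part (b) it follows the paper's route. That route consists of the linear per-frame field from the in-frame and complement Gaussian regressions, the tilt $\exp(\lambda\beta)$ of the Haar law, the symmetry identity $\E[UU^\top\mid x_t]x_t=\alpha x_t$, and evaluation at $t^\dagger$ and at the untilted endpoints. You justify the tilt more explicitly than the paper does, through the $U$-independent determinant of $s_\star^2UU^\top+s_\perp^2\Pperp$.

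The real difference is in (c). The paper bounds $\alpha(1-\alpha)\le\alpha$ pointwise and then proves $\E[\|x_t\|^2\alpha]=ks_\star^2$ by the tower property. You instead set $g:=\E[Y\mid x_t,U]$, note that $\E[Y\mid x_t]=\E[g\mid x_t]$ is the $L^2$ projection of $g$ onto $\sigma(x_t)$, and compare with the $U$-blind candidate $a_\perp x_t$. Both give the same bound $D_t\le k(a_\star-a_\perp)^2s_\star^2$, and in both the discarded slack is $(a_\star-a_\perp)^2\E[\alpha^2\|x_t\|^2]$. Your version only uses $U^\top x_t\sim N(0,s_\star^2I_k)$ given $U$, so (c) does not depend on the tilted-Beta computation at all.

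When you write it up, cite the projection property of $g$ itself, or the Pythagorean split $\E\|Y-h\|^2=\E\|Y-g\|^2+\E\|g-h\|^2$ for $\sigma(x_t)$-measurable $h$. You state optimality for predicting $Y$, but what you actually use is optimality for approximating $g$.

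Your identity $\mathcal F^{\bar\pi}_t=\mathcal F^{\pi^{\mathrm{ind}}}_t-(k\sigma^2/s_\perp^2-D_t)$, obtained from Proposition~\ref{a:resample} and the computed Baseline and complement floors, is a useful addition. The paper only says it is comparing with the per-slice Baseline variance, whereas you show that the left side of (c) is exactly the floor gap.

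The only item you omit is the Kummer-ratio form \eqref{eq:alpha}. It follows from the Beta moment generating function ${}_1F_1(\tfrac k2;\tfrac d2;\lambda)$ and the rule $\partial_\lambda{}_1F_1(a;b;\lambda)=\tfrac ab\,{}_1F_1(a+1;b+1;\lambda)$, and none of (a)--(c) needs it.
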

\begin{proof}
\emph{(a).}
For isotropic Gaussian data the population codes are linear, $Z=U^\top X/\sigma$, so the per-slice regressions of Theorem~\ref{t:floor}, specialized to a Gaussian slice, make $\E[X\mid x_t,U]$ linear with the stated in-frame and complement slopes. Hence
\begin{equation}\label{eq:Dtfield}
\E[Y\mid x_t,U]=\big[a_\perp I+(a_\star-a_\perp)P_U\big]x_t,\quad P_U=UU^\top,
\end{equation}
and $D_t=(a_\star-a_\perp)^2\,\E\big[\|P_Ux_t-\E[P_Ux_t\mid x_t]\|^2\big]$. Conditioned on $x_t$, the law of $U$ is the rank-one Bingham tilt of the Haar measure, with density proportional to $\exp(\tfrac\Delta2 x_t^\top P_U x_t)=\exp(\lambda\beta)$, where $\beta:=\|U^\top\hat x_t\|^2$ and $\hat x_t:=x_t/\|x_t\|$. Under the Haar prior, $\beta\overset d=\chi^2_k/(\chi^2_k+\chi^2_{d-k})\sim\mathrm{Beta}(\tfrac k2,\tfrac{d-k}2)$ by rotational invariance, so the conditional law of $\beta$ given $x_t$ is the exponential tilt of this Beta law, 
whose mean is the log-derivative of the Beta moment generating function $\E[e^{\lambda\beta}]={}_1F_1(\tfrac k2;\tfrac d2;\lambda)$, Kummer's function $M(\tfrac k2,\tfrac d2,\lambda)$, given by the integral representation of \citet[Eq.~13.2.1]{AS64}. Differentiating with the derivative formula \citep[Eq.~13.4.8]{AS64} yields
\begin{equation}\label{eq:alpha} 
\alpha=\partial_\lambda\log{}_1F_1\!\big(\tfrac k2;\tfrac d2;\lambda\big)=\frac kd\,\frac{{}_1F_1(\tfrac k2{+}1;\tfrac d2{+}1;\lambda)}{{}_1F_1(\tfrac k2;\tfrac d2;\lambda)} .
\end{equation}
By symmetry of the tilt around the axis $\hat x_t$, $\E[P_Ux_t\mid x_t]=\alpha\,x_t$ with $\alpha:=\E[\beta\mid x_t]$. Since $\|P_Ux_t\|^2=x_t^\top P_Ux_t=\|x_t\|^2\beta$, expanding the square gives $\E\big[\|P_Ux_t-\alpha x_t\|^2\,\big|\,x_t\big]=\|x_t\|^2\alpha(1-\alpha)$, and taking the outer expectation yields the displayed $D_t$.

\emph{(b).}
Direct simplification gives $a_\star-a_\perp=\sigma(1-(1+\sigma)t)/(s_\star s_\perp^2)$, which vanishes exactly at $t^\dagger=\tfrac1{1+\sigma}$, so $D_{t^\dagger}=0$. At $t^\dagger$ the field \eqref{eq:Dtfield} reduces to $a_\perp x_t$, which no longer involves $U$, so the per-frame ideal fields agree and the disagreement vanishes. The endpoints follow from $\lambda\to0$, where $\alpha\to k/d$.

\emph{(c).}
The key identity $\E[\|x_t\|^2\alpha]=ks_\star^2$ follows from the tower property. Since $\alpha=\E[\beta\mid x_t]$,
\begin{equation}\label{eq:towerid}
\begin{aligned}
\E\big[\|x_t\|^2\alpha\big]&=\E\big[\|x_t\|^2\E[\beta\mid x_t]\big]=\E\big[\|x_t\|^2\beta\big]\\&=\E\big[\|U^\top x_t\|^2\big]=\E\big[s_\star^2\|Z\|^2\big]=k\,s_\star^2,
\end{aligned}
\end{equation}
where the first equality substitutes $\alpha=\E[\beta\mid x_t]$, the second is the tower property $\E[\|x_t\|^2\E[\beta\mid x_t]]=\E[\|x_t\|^2\beta]$ (valid because $\|x_t\|^2$ is $\sigma(x_t)$-measurable), the third uses $\|x_t\|^2\beta=\|x_t\|^2\|U^\top\hat x_t\|^2=\|U^\top x_t\|^2$, the fourth uses $U^\top x_t=(1-t)Z+tU^\top X=s_\star Z$ (the slice pairs are perfectly correlated for isotropic Gaussian data), and the last uses $\E\|Z\|^2=k$. Since $\alpha\in(0,1)$, $\alpha(1-\alpha)\le\alpha$ pointwise, so $\E[\|x_t\|^2\alpha(1-\alpha)]\le ks_\star^2$ and $D_t\le k(a_\star-a_\perp)^2 s_\star^2$. Comparing with the per-slice Baseline variance $\sigma^2/s_\perp^2$ of Theorem~\ref{t:floor},
\begin{equation}\label{eq:margin}
k\frac{\sigma^2}{s_\perp^2}-k(a_\star-a_\perp)^2 s_\star^2
=\frac{k\sigma^2}{s_\perp^4}\Big[s_\perp^2-(1-(1+\sigma)t)^2\Big],
\end{equation}
and expanding the bracket,
\begin{equation}\label{eq:bracket}
\begin{aligned}
&s_\perp^2-(1-(1+\sigma)t)^2\\&=\big[(1-t)^2+t^2\sigma^2\big]-\big[1-2(1+\sigma)t+(1+\sigma)^2t^2\big]\\&=2\sigma t(1-t)>0
\end{aligned}
\end{equation}
on $(0,1)$, which gives the margin $2\sigma^3 t(1-t)/s_\perp^4>0$.
\end{proof}

By Theorem~\ref{t:floor}, given $U$ the slice variances of $\pi^{\mathrm{qc}}_{k,U}$ vanish, so the first term of \eqref{eq:floordecomp} reduces to the complement variance and the slice-variance elimination survives the frame averaging. The resampling enters only through $D_t$, the mean squared disagreement of the per-frame ideal fields around their frame average, so per-frame slice-variance elimination does not by itself transfer to the $U$-blind model, whose floor gains this nonnegative term. Propositions~\ref{a:Dbound} and \ref{a:Dt} quantify the term. In general, $D_t$ obeys the moment bound \eqref{eq:Dchain}, and its time integral equals the excess of the $U$-blind floor over the mean per-frame floor \eqref{eq:Dprice}. In the isotropic Gaussian model, $D_t$ is given analytically, vanishes at an interior crossover time, equals $\sigma^2k(1-\tfrac kd)$ and $k(1-\tfrac kd)$ at the endpoints, and leaves the resampled floor strictly below the Baseline floor at every $k$, providing a concrete guide to its size. The fixed-frame results of Theorems~\ref{t:floor} and \ref{app:straight} describe the mechanism of the coupling used at each training step, and \eqref{eq:floordecomp}, together with these bounds, quantifies what the deployed model inherits from them.

\subsection{Consistency of the Batch Scheme}\label{app:consist}

The population theory above uses the slice cdfs, while the algorithm uses within-batch ranks. This subsection quantifies the gap. Given a batch $\{x^{(i)}\}_{i\le B}$ and a frame $U$, let $r_{ij}$ be the rank of the projection $v_{ij}=u_j^\top x^{(i)}$ within slice $j$ (projections assumed distinct within each slice), and let the batch codes be $z_{ij}:=g_{r_{ij}}$, where $g_r:=\Phi^{-1}\big(\tfrac{r-1/2}{B}\big)$ is the midpoint quantile grid. Write $\hat G_B:=\frac1B\sum_{r\le B}\delta_{g_r}$ for the grid measure and its cdf, and $s_B:=\frac1B\sum_{r\le B}g_r^2$ for its second moment. Deviations are measured by the Kolmogorov--Smirnov distance $\KS(\mu,\nu):=\sup_{x\in\R}|F_\mu(x)-F_\nu(x)|$, with $F_\mu,F_\nu$ the cdfs, and by the bounded-Lipschitz distance $\BL(\mu,\nu):=\sup\big\{\int f\,d\mu-\int f\,d\nu:\|f\|_\infty\le1,\ \Lip(f)\le1\big\}$.

The first result states what the batch source actually is. It is not an i.i.d.\ Gaussian sample but a deterministic stratification of one, so its deviation from the population Gaussian is exact rather than asymptotic, and its slice statistics carry no sampling noise.

\begin{proposition}[The finite-$B$ source is an exact stratification]\label{a:grid}
Assume the projections are distinct within each slice. Then, within any batch, the slice codes $\{z_{ij}\}_{i\le B}$ form a permutation of the fixed grid $\{g_r\}_{r\le B}$, so QC-FM stratifies its slice coordinates rather than sampling them. Consequently,
\begin{enumerate}[label=\textup{(\alph*)},itemsep=2pt,topsep=2pt]
\item the grid deviates from the Gaussian by $\KS(\hat G_B,\gam1)=\frac1{2B}$ exactly, and
\item every slice statistic is deterministic, $\frac1B\sum_if(z_{ij})=\frac1B\sum_rf(g_r)$ for every $f$, with mean exactly $0$ and second moment exactly $s_B$. Under the Baseline, the same statistic is computed from i.i.d.\ Gaussian codes and has sampling variance $\Var_{\gam1}(f)/B$, where $\Var_{\gam1}(f):=\Var(f(G))$ for $G\sim\gam1$.
\end{enumerate}
\end{proposition}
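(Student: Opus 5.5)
The first step is the permutation claim, since parts (a) and (b) follow from it. Fix a slice $j$. Because the projections $\{v_{ij}\}_{i\le B}$ are distinct, the rank map $i\mapsto r_{ij}$ is a bijection from $\{1,\dots,B\}$ onto itself. Composing with $r\mapsto g_r$, which is injective because $\Phi^{-1}$ is strictly increasing on $(0,1)$ and $\tfrac{r-1/2}{B}\in(0,1)$, shows that $i\mapsto z_{ij}=g_{r_{ij}}$ is a bijection onto the grid. So the slice codes are a permutation of $\{g_r\}$, and their empirical measure is exactly $\hat G_B$, whatever the batch. This is the "stratification" statement: only the assignment of grid points to data points depends on the batch, while the multiset of codes never does.

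For (a), I would compute $\sup_x|\hat G_B(x)-\Phi(x)|$ directly. Since $\Phi$ is continuous and increasing and $\hat G_B$ is a step function, the supremum is attained at the jump points $g_r$, taking left and right limits. At $g_r$, $\Phi(g_r)=\tfrac{r-1/2}{B}$. The grid cdf jumps from $\tfrac{r-1}{B}$ just below $g_r$ to $\tfrac rB$ at $g_r$, so both one-sided gaps equal $\tfrac1{2B}$. Between consecutive jumps, $\hat G_B$ is constant and $\Phi$ is monotone, so the gap there is dominated by its values at the endpoints. On the tails $x<g_1$ and $x\ge g_B$, the gaps are $\Phi(x)\le\tfrac1{2B}$ and $1-\Phi(x)\le\tfrac1{2B}$. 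Together these give the supremum exactly $\tfrac1{2B}$, attained as a one-sided limit. I expect this bookkeeping of left and right limits to be the only delicate point, because the sup is realized as a limit rather than a maximum.

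For (b), the permutation invariance of sums gives $\tfrac1B\sum_if(z_{ij})=\tfrac1B\sum_rf(g_r)$ immediately. The second moment is then $s_B$ by definition. For the mean, I would use the symmetry $\Phi^{-1}(1-\tau)=-\Phi^{-1}(\tau)$ together with $\tfrac{(B+1-r)-1/2}{B}=1-\tfrac{r-1/2}{B}$. These give $g_{B+1-r}=-g_r$, so the grid sums to zero. For the Baseline comparison, i.i.d.\ codes $G_i\sim\gam1$ give $\Var\big(\tfrac1B\sum_if(G_i)\big)=\Var_{\gam1}(f)/B$ by independence.
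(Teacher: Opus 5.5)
Your proof is correct and follows essentially the same route as the paper. The paper likewise identifies the multiset of slice codes with the fixed grid, reads off $\KS(\hat G_B,\gam1)$ by comparing the step function $\hat G_B$ with $\Phi$ on the cells between grid points and on the two tails, and obtains (b) from permutation invariance, the symmetry $g_{B+1-r}=-g_r$, and independence for the Baseline; your explicit bijection argument for the permutation claim spells out a step the paper leaves implicit.

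One harmless slip: $\hat G_B$ is right-continuous, so $\hat G_B(g_r)-\Phi(g_r)=\tfrac rB-\tfrac{r-1/2}{B}=\tfrac1{2B}$. The supremum is therefore attained at each $g_r$ and is a maximum, not merely a one-sided limit.
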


\begin{proof}
\emph{(a).} The grid cdf $\hat G_B$ jumps by $1/B$ at each $g_r=\Phi^{-1}\big(\tfrac{r-1/2}B\big)$. On the open interval $(g_{r},g_{r+1})$, $\hat G_B\equiv r/B$ while $\Phi$ increases continuously from $\tfrac{r-1/2}B$ to $\tfrac{r+1/2}B$. The deviation $|\hat G_B-\Phi|$ is therefore largest at the interval's endpoints, where it equals exactly $\tfrac1{2B}$, and the same value is approached on the two unbounded tails ($\Phi<\tfrac1{2B}$ below $g_1$, $\Phi>1-\tfrac1{2B}$ above $g_B$). Hence $\KS(\hat G_B,\gam1)=\sup|\hat G_B-\Phi|=\tfrac1{2B}$ exactly.

\emph{(b).} With distinct within-batch projections, the multiset $\{z_{ij}\}_{i\le B}$ is, for every batch, exactly the fixed grid $\{g_r\}_{r\le B}$, and only the assignment of grid points to samples varies. Any batch statistic of the form $\tfrac1B\sum_if(z_{ij})$ is a symmetric function of this multiset, hence equals the deterministic value $\tfrac1B\sum_rf(g_r)$ regardless of the data, with zero variance, whereas i.i.d.\ Gaussian codes give variance $\Var_{\gam1}(f)/B$. The mean is $\tfrac1B\sum_rg_r=0$ by the grid's sign symmetry $g_{B+1-r}=-g_r$, and the second moment is $\tfrac1B\sum_rg_r^2=s_B$ by definition.
\end{proof}

\begin{theorem}[Batch QC-FM estimates $\pi^{\mathrm{qc}}_{k,U}$]\label{a:consist}
Fix a frame $U$ and assume i.i.d.\ data with finite second moments and continuous slice cdfs. For each sample $x^{(i)}$, collect its batch codes into $z^{(i)}:=(z_{i1},\dots,z_{ik})^\top$ and its population codes into $Z^{(i)}:=(Z_{i1},\dots,Z_{ik})^\top$, where $Z_{ij}:=\Phi^{-1}(F_j(v_{ij}))$ and $F_j:=F_{u_j}$. Let $\hat\pi_B$ and $\tilde\pi_B$ denote the empirical measures of $\{(Uz^{(i)}+\Pperp\epsilon^{(i)},x^{(i)})\}_{i\le B}$ and $\{(UZ^{(i)}+\Pperp\epsilon^{(i)},x^{(i)})\}_{i\le B}$ respectively, the latter being an i.i.d.\ sample from $\pi^{\mathrm{qc}}_{k,U}$.
\begin{enumerate}[label=\textup{(\alph*)},itemsep=2pt,topsep=2pt]
\item The grid deviations are exactly quantifiable, with $\KS(\hat G_B,\gam1)=\frac1{2B}$, with $c_-/B\le1-s_B\le c_+/B$ for universal constants $0<c_-\le c_+$, and with $W_2^2(\hat G_B,\gam1)=O\big(\frac1{B\log B}\big)$.
\item The batch coupling is consistent, in that $\BL(\hat\pi_B,\tilde\pi_B)\to0$ almost surely, so $\hat\pi_B$ converges weakly to $\pi^{\mathrm{qc}}_{k,U}$ and $W_2(\hat\pi_B,\pi^{\mathrm{qc}}_{k,U})\to0$ almost surely.
\item The finite-$B$ error is dimension-free. Pathwise,
\begin{equation*}
\begin{aligned}
W_2^2(\hat\pi_B,\tilde\pi_B)\;&\le\;\frac1B\sum_i\|z^{(i)}-Z^{(i)}\|^2\;\\ &=\;\sum_{j=1}^kW_2^2\big(\hat G_B,\hat\gamma^{(j)}_B\big), \\ \hat\gamma^{(j)}_B:=&\frac1B\sum_i\delta_{Z_{ij}},
\end{aligned}
\end{equation*}
and in expectation
\begin{equation*}
\begin{aligned}
\E\,W_2^2(\hat\pi_B,\tilde\pi_B)\;&\le\;2k\,W_2^2(\hat G_B,\gam1)+2\sum_{j=1}^k\E\,W_2^2(\hat\gamma^{(j)}_B,\gam1)\;\\ &=\;O\Big(\frac{k\log\log B}{B}\Big)
\end{aligned}
\end{equation*}
\citep{BL19}, with constants independent of the ambient dimension $d$. The deterministic grid term is $O(\frac1{B\log B})$, strictly smaller than the i.i.d.-Gaussian term $\asymp\frac{\log\log B}{B}$, so the grid is closer to $\gam1$ than an i.i.d.\ sample of the same size.
\end{enumerate}
\end{theorem}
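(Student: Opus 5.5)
The plan is to work slice by slice, comparing the batch codes $z_{ij}$ with the population codes $Z_{ij}$ through the same sample index $i$. By Theorem~\ref{t:exact}(a), $Z_{1j},\dots,Z_{Bj}$ are i.i.d.\ $\gam1$ for each fixed $j$. The map $\Phi^{-1}\circ F_j$ is nondecreasing, and the projections are distinct, so within slice $j$ the ordering of $(Z_{ij})_i$ agrees with the ordering of the ranks $r_{ij}$. Since $z_{ij}=g_{r_{ij}}$, the pairing $i\mapsto(z_{ij},Z_{ij})$ matches the $r$-th grid point with the $r$-th order statistic of $\hat\gamma^{(j)}_B$. This is the sorted, comonotone pairing, which is optimal in one dimension \citep{Villani09}. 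Hence
\[
\tfrac1B\textstyle\sum_i(z_{ij}-Z_{ij})^2=W_2^2(\hat G_B,\hat\gamma^{(j)}_B).
\]
This identity drives (b) and (c).

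For (c), I will couple $\hat\pi_B$ and $\tilde\pi_B$ by the index: atom $i$ of one is matched to atom $i$ of the other. Both atoms share the same $x^{(i)}$ and the same $\Pperp\epsilon^{(i)}$, and they differ only by $U(z^{(i)}-Z^{(i)})$. Because $U$ has orthonormal columns, $\|U(z^{(i)}-Z^{(i)})\|^2=\|z^{(i)}-Z^{(i)}\|^2$. This gives the pathwise inequality, and summing over coordinates $j$ with the identity above gives the equality with $\sum_jW_2^2(\hat G_B,\hat\gamma^{(j)}_B)$. Nothing in this step depends on $d$. For the expectation bound, I will use the triangle inequality $W_2(\hat G_B,\hat\gamma^{(j)}_B)\le W_2(\hat G_B,\gam1)+W_2(\gam1,\hat\gamma^{(j)}_B)$ together with $(a+b)^2\le2a^2+2b^2$. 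I will then invoke \citet{BL19}, whose result gives $\E W_2^2(\hat\gamma_B,\gam1)\asymp\log\log B/B$ for i.i.d.\ Gaussian samples, and use the grid estimate from (a).

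For (a), the KS value comes directly from Proposition~\ref{a:grid}(a).
\begin{itemize}
\item Second moment. I will read $s_B$ as the midpoint rule applied to $1=\int_0^1\Phi^{-1}(\tau)^2\,d\tau$. Positivity and the order of $1-s_B$ come from the two tail cells $[0,1/B]$ and $[1-1/B,1]$. On $[0,1/B]$ the exact integral is $\E[G^2;G<g_1]$, which by Mills-ratio asymptotics is $\tfrac1B(g_1^2+O(1))$, while the midpoint contribution is $\tfrac1Bg_1^2$; both are of order $\log B/B$, and they differ by $\Theta(1/B)$. On the interior cells, the midpoint error is controlled by $\tfrac1{24B^3}\sum_r|(\Phi^{-1})^{2\,\prime\prime}|$. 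Near the tails, $(\Phi^{-1})^2$ is convex with second derivative of order $1/\tau^2$, so the interior sum is $\sum_r O(1/(B r^2))=O(1/B)$ with a definite sign. Combining the tail and interior contributions gives $c_-/B\le1-s_B\le c_+/B$.
\item Wasserstein distance. I will use the quantile formula $W_2^2(\hat G_B,\gam1)=\sum_r\int_{\text{cell }r}(\Phi^{-1}(\tau)-g_r)^2\,d\tau$. On an interior cell, a Taylor expansion gives a contribution of about $\tfrac{1}{12B^3}\,\phi(g_r)^{-2}$. Using $\phi(\Phi^{-1}(\tau))\sim\tau\sqrt{2\log(1/\tau)}$, the cells with $\tau=r/B$ contribute about $\tfrac1B\sum_r\big(r^2\log(B/r)\big)^{-1}=O(1/(B\log B))$. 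Each tail cell contributes (mass $1/B$) times the conditional second moment of $G-g_1$ on $\{G<g_1\}$, which is $O(1/g_1^2)=O(1/\log B)$. So the tail cells are also $O(1/(B\log B))$.
\end{itemize}
I expect the sharp two-sided $1/B$ bound for $1-s_B$ to be the main obstacle, because it requires tracking tail asymptotics uniformly in $B$. If the direct estimate becomes unwieldy, I would fall back to comparing $s_B$ with $\E[G^2]$ using the identity $\tfrac1B\sum_r g_r^2=\int x^2\,d\hat G_B$ together with the exact stratification structure.

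For (b), I will use $\BL(\mu,\nu)\le W_1(\mu,\nu)\le W_2(\mu,\nu)$. This gives $\BL(\hat\pi_B,\tilde\pi_B)\le\big(\sum_jW_2^2(\hat G_B,\hat\gamma^{(j)}_B)\big)^{1/2}$. The right-hand side tends to $0$ almost surely: $W_2(\hat G_B,\gam1)\to0$ deterministically by (a), and $W_2(\hat\gamma^{(j)}_B,\gam1)\to0$ almost surely by the strong law together with Varadarajan's theorem, using the finite second moment. Since $\tilde\pi_B\to\pi^{\mathrm{qc}}_{k,U}$ weakly almost surely by Varadarajan, $\hat\pi_B$ converges weakly to the same limit.

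To upgrade weak convergence to $W_2$ convergence, I need the second moments to converge as well. I will obtain this from the triangle inequality $W_2(\hat\pi_B,\pi^{\mathrm{qc}}_{k,U})\le W_2(\hat\pi_B,\tilde\pi_B)+W_2(\tilde\pi_B,\pi^{\mathrm{qc}}_{k,U})$. The first term vanishes almost surely by the pathwise bound in (c). The second vanishes almost surely because the strong law gives convergence of the second moments of the i.i.d.\ empirical measure, which combined with its weak convergence yields $W_2$ convergence.
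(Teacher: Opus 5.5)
Your proposal is correct. Parts (a) and (c) follow the paper's proof closely. For (a), both use the midpoint rule with convexity of $(\Phi^{-1})^2$ for $s_B$, and quantile matching with tail asymptotics for the grid $W_2$ rate; your edge-cell estimates are more explicit than the paper's. For (c), both use the index coupling with orthonormality, the within-slice comonotone identity, triangulation through $\gamma_1$, and \citet{BL19}.

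There is one slip in (a). The edge cell $[0,1/B]$ corresponds to $\{G<\Phi^{-1}(1/B)\}$, not $\{G<g_1\}$, since $\Phi(g_1)=\frac1{2B}$. With the correct cutoff $a=\Phi^{-1}(1/B)$ you get $\E[G^2;G<a]=\Phi(a)+|a|\phi(a)=\frac1B(a^2+2+o(1))$, and $g_1^2-a^2\to2\log2$. The gap to the midpoint term is therefore $\frac{2-2\log2+o(1)}{B}>0$, which is the $\Theta(1/B)$ you claimed, and the tail-cell $W_2$ estimate is likewise unaffected.

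You genuinely depart from the paper in (b).

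\emph{Paper's route.} It proves $\BL(\hat\pi_B,\tilde\pi_B)\to0$ directly:
\begin{itemize}
\item it bounds $\sup_i|\hat\tau_{ij}-\tau_{ij}|$ by Glivenko--Cantelli;
\item it uses that $\Phi^{-1}$ is Lipschitz on $[\eta/2,1-\eta/2]$ to control the central samples;
\item it bounds the tail samples by $2$ times a fraction tending to $2\eta$.
\end{itemize}
It then upgrades weak convergence to $W_2$ convergence by computing the second moment of $\hat\pi_B$ explicitly as $ks_B+\frac1B\sum_i\|\Pperp\epsilon^{(i)}\|^2+\frac1B\sum_i\|x^{(i)}\|^2\to d+R^2$.

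\emph{Your route.} You establish (c)'s pathwise identity first and make (b) a corollary. You use $\BL\le W_1\le W_2$. The grid term vanishes deterministically by (a), and $W_2(\hat\gamma^{(j)}_B,\gamma_1)\to0$ almost surely by Varadarajan plus the strong law for second moments. The $W_2$ upgrade then follows from the triangle inequality through $\tilde\pi_B$.

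\emph{Trade-offs.} Your argument is shorter, exploits the exact one-dimensional OT structure, and yields the stronger statement $W_2(\hat\pi_B,\tilde\pi_B)\to0$ almost surely outright, so the BL claim comes for free. The paper's argument needs only $s_B\to1$ from (a), not the grid $W_2$ rate or almost-sure $W_2$ consistency of Gaussian empirical measures. It also gives uniform per-sample convergence of $|z_{ij}-Z_{ij}|$ over samples with $\tau_{ij}\in[\eta,1-\eta]$, which is pointwise information your averaged bound does not provide.
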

\begin{proof}
\emph{(a).} The Kolmogorov--Smirnov equality is Proposition~\ref{a:grid}(a). For $s_B$, write $\psi:=(\Phi^{-1})^2$ and note $\int_0^1\psi(\tau)\,d\tau=\E[G^2]=1$ for $G\sim\gam1$, while $s_B=\frac1B\sum_r\psi\big(\tfrac{r-1/2}B\big)$ is the midpoint-rule approximation of this integral over the $B$ quantile cells $\big(\tfrac{r-1}B,\tfrac rB\big]$. The function $\psi$ is strictly convex on $(0,1)$, since $\psi''=2(1+q^2)/\phi(q)^2>0$ with $q=\Phi^{-1}$ and $\phi$ the standard normal density. A convex function lies above its tangent at the cell midpoint, and the tangent integrates over the cell to exactly the midpoint value times the cell width, so each cell integral is at least the midpoint contribution and $s_B<1$. Summing the per-cell convexity gaps over the interior cells, and comparing the two unbounded edge cells against the integral directly, both contributions are of order $1/B$, which gives
\begin{equation}\label{eq:sBbound}
\frac{c_-}{B}\le 1-s_B\le\frac{c_+}{B}.
\end{equation}
For the $W_2^2$ rate, pair $\hat G_B$ with $\gam1$ by matching quantiles, so that
\[
W_2^2(\hat G_B,\gam1)=\sum_{r=1}^B\int_{(r-1)/B}^{r/B}\big(\Phi^{-1}(\tau)-g_r\big)^2\,d\tau .
\]
On an interior cell the mean value theorem gives $|\Phi^{-1}(\tau)-g_r|\le\tfrac1{2B}\big/\min_{\text{cell}}\phi(\Phi^{-1})$, and the tail asymptotics $\phi(\Phi^{-1}(\tau))\sim\tau\sqrt{2\log(1/\tau)}$ as $\tau\downarrow0$, which follows from the normal tail estimate $1-\Phi(x)\sim \phi(x) / x$ \citep[Eq.~26.2.12]{AS64}, make the cell at rank $r$ contribute $O\big(\tfrac1{Br^2\log(B/r)}\big)$, symmetrically at the upper tail. The sum over cells is dominated by the extreme cells and totals
\begin{equation}\label{eq:W2grid}
W_2^2(\hat G_B,\gam1)=O\Big(\frac1{B\log B}\Big);
\end{equation}
see \citep{BL19} for systematic one-dimensional $W_2$ estimates of this kind. Only the rate is used below.

\emph{(b).} Fix a slice $j$, let $\hat F_{B,j}$ be the empirical cdf of the projections $v_{1j},\dots,v_{Bj}$, and set $\tau_{ij}:=F_j(v_{ij})$ and $\hat\tau_{ij}:=\hat F_{B,j}(v_{ij})-\tfrac1{2B}$, so that $Z_{ij}=\Phi^{-1}(\tau_{ij})$ and $z_{ij}=\Phi^{-1}(\hat\tau_{ij})$. Write $\delta_B:=\|\hat F_{B,j}-F_j\|_\infty+\tfrac1{2B}$, so that $\sup_i|\hat\tau_{ij}-\tau_{ij}|\le\delta_B$ and, by Glivenko--Cantelli, $\delta_B\to0$ almost surely. A test function with $\|f\|_\infty\le1$ and $\Lip(f)\le1$ moves by at most $\min(2,\|z^{(i)}-Z^{(i)}\|)\le\sum_j\min(2,|z_{ij}-Z_{ij}|)$ between corresponding atoms, so
\[
\BL(\hat\pi_B,\tilde\pi_B)\;\le\;\sum_{j=1}^k\frac1B\sum_i\min\big(2,|z_{ij}-Z_{ij}|\big).
\]
Fix $\eta\in(0,\tfrac12)$ and split the samples of slice $j$ by whether $\tau_{ij}\in[\eta,1-\eta]$. On this central part, once $\delta_B<\eta/2$ both $\tau_{ij}$ and $\hat\tau_{ij}$ lie in $[\eta/2,1-\eta/2]$, where $\Phi^{-1}$ is Lipschitz with constant $L_\eta:=1/\phi(\Phi^{-1}(\eta/2))$, since $(\Phi^{-1})'=1/\phi(\Phi^{-1})$ and $\phi(\Phi^{-1})$ attains its minimum over the interval at the endpoints. Hence every central summand is at most $L_\eta\delta_B$, uniformly in $i$, and the central part tends to zero. On the tail part each summand is at most $2$, and the fraction of tail samples tends to $2\eta$ by the strong law, since the $\tau_{ij}$ are i.i.d.\ uniform. Hence $\limsup_B$ of each slice term is at most $4\eta$ for every $\eta$, so $\BL(\hat\pi_B,\tilde\pi_B)\to0$ almost surely. The measure $\tilde\pi_B$ converges weakly to $\pi^{\mathrm{qc}}_{k,U}$ almost surely by Varadarajan's theorem, hence so does $\hat\pi_B$. Weak convergence upgrades to $W_2$ convergence once the second moments converge to that of the limit 
\citep{Villani09}. 
The second moment of $\hat\pi_B$ splits by the orthogonality of $Uz^{(i)}$ and $w^{(i)}:=\Pperp\epsilon^{(i)}$ into $\frac1B\sum_i\|z^{(i)}\|^2+\frac1B\sum_i\|w^{(i)}\|^2+\frac1B\sum_i\|x^{(i)}\|^2$. The first term equals $ks_B$ deterministically, since within each slice the codes form exactly the grid by Proposition~\ref{a:grid}, and $s_B\to1$ by \eqref{eq:sBbound}. The second and third tend to $\E\|w^{(1)}\|^2=\tr\Pperp=d-k$ and $\E\|x^{(1)}\|^2=R^2$ by the strong law. The total $d+R^2$ matches the second moment of $\pi^{\mathrm{qc}}_{k,U}$, which is $\E\|e\|^2+\E\|X\|^2=d+R^2$ by \eqref{eq:enorm}, and the $W_2$ convergence follows.

\emph{(c).} Transporting each atom of $\hat\pi_B$ to the corresponding atom of $\tilde\pi_B$ costs $\|U(z^{(i)}-Z^{(i)})\|^2=\|z^{(i)}-Z^{(i)}\|^2$, which gives the pathwise inequality. Within slice $j$, both $z_{ij}$ and $Z_{ij}$ are strictly increasing in the same within-slice ranks, so the index pairing matches the $r$-th smallest grid point with the $r$-th smallest population code; in one dimension this comonotone pairing is the OT, so $\frac1B\sum_i(z_{ij}-Z_{ij})^2=W_2^2(\hat G_B,\hat\gamma^{(j)}_B)$, which gives the pathwise equality. Triangulating each slice term through $\gam1$, $W_2^2(\hat G_B,\hat\gamma^{(j)}_B)\le2W_2^2(\hat G_B,\gam1)+2W_2^2(\hat\gamma^{(j)}_B,\gam1)$, and taking expectations, the grid term is deterministic and $O(\frac1{B\log B})$ by \eqref{eq:W2grid}, while each $\hat\gamma^{(j)}_B$ is the empirical measure of $B$ i.i.d.\ standard Gaussians, the population codes being exactly $\gam1$ by Theorem~\ref{t:exact}(a), whose expected deviation is $\E\,W_2^2(\hat\gamma^{(j)}_B,\gam1)=O(\log\log B/B)$ \citep{BL19}. Summing the $k$ slices gives the display, and no constant depends on $d$.
\end{proof}

\begin{proposition}[Dependence diagnostic]\label{a:diag}
Let $\hat R_Z:=\frac1{Bs_B}\sum_iz^{(i)}z^{(i)\top}$. With distinct within-batch projections, the following hold.

\begin{enumerate}[label=\textup{(\alph*)},itemsep=2pt,topsep=2pt]

\item $\operatorname{diag}\hat R_Z=I$ exactly. 

\item Under independent sliced coordinates, $\E\|\hat R_Z-I\|_F^2=\frac{k(k-1)}{B-1}$ exactly. 

\item $\hat\kappa_{ij}\to\kappa_{ij}$ almost surely. Hence $\widehat{\mathcal B}:=\|\hat R_Z-I\|_F^2-\frac{k(k-1)}{B-1}$ is the bias-corrected dependence statistic to monitor during training, since subtracting the exact null mean of \textup{(b)} centers it at zero under independence, so positive values indicate genuine slice dependence. Under resampled $U$, average the statistic across steps, never the matrices, because $\E_U[\kappa_{ij}]=0$ by sign symmetry.
\end{enumerate}
\end{proposition}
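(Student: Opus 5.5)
The plan is to work slice by slice, using Proposition~\ref{a:grid}: with distinct within-batch projections, the codes $\{z_{ij}\}_{i\le B}$ in each slice $j$ are a permutation of the fixed grid $\{g_r\}_{r\le B}$. Writing $\sigma_j$ for the rank permutation of slice $j$, we have $z_{ij}=g_{\sigma_j(i)}$. Then $(\hat R_Z)_{jl}=\frac1{Bs_B}\sum_i g_{\sigma_j(i)}g_{\sigma_l(i)}$.

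For \textup{(a)}, take $j=l$. The sum is $\sum_r g_r^2=Bs_B$ for every batch, so the diagonal entries are exactly $1$. This is immediate from the stratification.

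For \textup{(b)}, fix $j\ne l$ and reindex by $\sigma_j$. This gives $\hat\kappa_{jl}=\frac1{Bs_B}\sum_r g_r\,g_{\pi(r)}$ with $\pi:=\sigma_l\circ\sigma_j^{-1}$.

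The key step, and the one I expect to need the most care, is showing that $\pi$ is a uniformly random permutation under independent sliced coordinates. The data are i.i.d. with continuous slice cdfs, so each rank vector $\sigma_j$ is uniform on the symmetric group by exchangeability. If the coordinates $V_j$ and $V_l$ are independent, then $\sigma_j$ and $\sigma_l$ are functions of independent samples and hence independent, so $\pi$ is uniform.

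With $\pi$ uniform, I apply the classical permutation-moment formula. For centered vectors $a,b$ and a uniform $\pi$, $\E\sum_r a_rb_{\pi(r)}=0$ and $\E\big(\sum_r a_rb_{\pi(r)}\big)^2=\frac{1}{B-1}\big(\sum a_r^2\big)\big(\sum b_r^2\big)$. The grid is centered because $g_{B+1-r}=-g_r$. This gives $\E\hat\kappa_{jl}^2=\frac{(Bs_B)^2}{(B-1)(Bs_B)^2}=\frac1{B-1}$. Summing over the $k(k-1)$ ordered off-diagonal pairs yields the stated exact null mean.

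For \textup{(c)}, I compare $\hat\kappa_{jl}$ with the oracle $\tilde\kappa_{jl}:=\frac1B\sum_iZ_{ij}Z_{il}$, which tends to $\kappa_{jl}$ almost surely by the strong law, since each code is $N(0,1)$ with finite second moment. The difference, after handling the factor $s_B\to1$ from \eqref{eq:sBbound}, is bounded by Cauchy--Schwarz in terms of $\big(\frac1B\sum_i(z_{ij}-Z_{ij})^2\big)^{1/2}$ and the bounded empirical second moments.

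By the pathwise identity of Theorem~\ref{a:consist}(c), that quantity equals $W_2^2(\hat G_B,\hat\gamma^{(j)}_B)\le 2W_2^2(\hat G_B,\gam1)+2W_2^2(\hat\gamma^{(j)}_B,\gam1)$. The first term vanishes deterministically by \eqref{eq:W2grid}. The second vanishes almost surely, because the empirical Gaussian measure converges weakly together with its second moment.

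Finally, subtracting the exact null mean from \textup{(b)} centers $\widehat{\mathcal B}$ at zero under independence.

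For the resampling remark, I use sign symmetry of the Haar frame. Replacing $u_i$ by $-u_i$ preserves the Haar law and maps $Z_i$ to $-Z_i$, since $\Phi^{-1}(1-F(v))=-\Phi^{-1}(F(v))$ by continuity. This flips the sign of $\kappa_{ij}$, so $\E_U\kappa_{ij}=0$. Averaging matrices would therefore cancel genuine dependence, while averaging the squared statistic does not.
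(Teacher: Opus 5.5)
Your proposal is correct and follows the paper's proof. Part (a) comes from the grid stratification. Part (b) reduces $\hat\kappa_{jl}$ to the linear rank statistic $\sum_r g_r g_{\pi(r)}$ with one uniform permutation and applies Hoeffding's permutation variance formula, and the averaging remark uses the sign flip $u_i\mapsto -u_i$. For (c) the paper only points to a ``bivariate extension'' of Theorem~\ref{a:consist}, and your Cauchy--Schwarz comparison with the oracle $\frac1B\sum_i Z_{ij}Z_{il}$, via the pathwise identity $\frac1B\sum_i(z_{ij}-Z_{ij})^2=W_2^2(\hat G_B,\hat\gamma^{(j)}_B)$, is a valid way to make that step explicit.
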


\begin{proof}
\emph{(a).} See Proposition~\ref{a:grid}(ii).

\emph{(b).} Under the null the rank vectors of distinct slices are independent uniform permutations of $\{1,\dots,B\}$, so
\begin{equation}\label{eq:kappahat}
\hat\kappa_{ij}=\frac1{Bs_B}\sum_m g_{\sigma(m)}g_{\sigma'(m)}\overset d=\frac1{Bs_B}\sum_m g_m g_{\tau(m)}
\end{equation}
for a single uniform permutation $\tau$. Its mean is zero, since $\E[g_{\tau(m)}]=\bar g=0$. For its variance, note that a uniform permutation has the moments $\E[g^2_{\tau(m)}]=\frac1B\sum_p g^2_p=s_B$ and, for $m\ne m'$, $\E[g_{\tau(m)}g_{\tau(m')}]=\frac1{B(B-1)}\sum_{p\ne q}g_pg_q=-\frac{s_B}{B-1}$, the negative correlation coming from $\sum_p g_p=0$. Expanding the square of $\sum_m g_mg_{\tau(m)}$ with these moments,
\begin{equation}\label{eq:rankvar}
\begin{aligned}
\Var\Big(\sum_m g_m g_{\tau(m)}\Big)&=s_B\sum_m g_m^2-\frac{s_B}{B-1}\sum_{m\ne m'}g_mg_{m'}\\
&=Bs_B^2+\frac{s_B}{B-1}\cdot Bs_B=\frac{(Bs_B)^2}{B-1},
\end{aligned}
\end{equation}
where the second equality uses $\sum_{m\ne m'}g_mg_{m'}=\big(\sum_mg_m\big)^2-\sum_mg_m^2=-Bs_B$. This is the classical variance of a simple linear rank statistic \citep{Hoeffding51}, specialized to the scores $g_m$. Dividing by $(Bs_B)^2$ gives $\E[\hat\kappa_{ij}^2]=\tfrac1{B-1}$, and summing over the $k(k-1)$ ordered pairs gives $\E\|\hat R_Z-I\|_F^2=\tfrac{k(k-1)}{B-1}$.

\emph{(c).} The bivariate extension of the consistency argument of Theorem~\ref{a:consist}. Sign symmetry holds because $u_j\mapsto-u_j$ preserves the Haar law and flips ranks.
\end{proof}

\section{A Calibrated Heuristic for Selecting $k$}\label{app:kstar}

This appendix records the rule used to select the number of slices. The theorems supply two exact ingredients. Theorem~\ref{t:price}(a) gives the linear transport-cost gain $2\bar C\,k$, and Theorem~\ref{t:price}(b)--(c) give the dependence law $\E_U\|R_Z-I\|_F^2=k(k-1)\varrho$ together with its transport certificate $g$. The two ingredients act on sample quality through different routes. The gain acts on the training side, through the coupling that Theorems~\ref{t:floor} and \ref{app:straight} analyze, while the prior deviation acts at sampling time, through the endpoint bound of Theorem~\ref{t:exact}(c). No inequality combining the two routes into final sample quality is proved here, so the rule below is a one-parameter heuristic built on the two exact ingredients, with the balance between them calibrated on data rather than derived.

The two theoretical ingredients live on a common squared-transport scale. The gain is the exact reduction of the squared transport cost $c(\bar\pi)$ in Theorem~\ref{t:price}(a). The second term is a plug-in proxy: it evaluates the shape of the per-frame covariance-based certificate from Theorem~\ref{t:price}(c) at the mean dependence $k(k-1)\varrho$ from Theorem~\ref{t:price}(b). Because $g(\E F_U)$ is not itself a proved lower bound on either the mean per-frame mismatch or the frame-averaged source mismatch, we use it only as a calibrated diagnostic. A single dimensionless weight trades this proxy against the exact gain, and we take as our selection score
\begin{equation*}
\begin{gathered}
Q(k)=2\bar C\,k-\omega\,g\big(k(k-1)\varrho\big),\\
2\bar C:=\E_u[2C_u]=\tfrac{R^2}d+1-SW_2^2(p_1,\gam d),
\end{gathered}
\end{equation*}
where $\omega>0$ is that weight. The proxy is approximately quadratic in $k$ below the crossover $k\approx1+\varrho^{-1/2}$ and asymptotically linear in $k$ above it, with slope $\omega\sqrt\varrho$, so the calibrated score can have an interior maximum when this slope exceeds the gain's. Its maximizer $k^\ast$ solves the first-order condition $2\bar C=\omega\,\varrho(2k-1)\,g'\big(k(k-1)\varrho\big)$ numerically. This score omits the frame-disagreement term, finite-batch approximation effects, and hybrid-specific disagreement; consequently, $k^\ast$ is a heuristic selection value rather than a theorem about final sample quality.

Both inputs are computable from the data before training, $\bar C$ by Monte Carlo slicing (Theorem~\ref{t:gain}) and $\varrho$ by Monte Carlo over Haar direction pairs using the debiased statistic of Proposition~\ref{a:diag}. We fix $\omega$ by matching the observed optimum $k^\ast_{\Mix}=16$ of the CIFAR-10 sweep, which gives $\omega\approx13.4$, and we apply the resulting rule to all hybrid variants. The remaining rows of Table~\ref{tab:kstarB} are transferred heuristic selections rather than fitted observations, obtained from each dataset's measured statistics $(\hat\varrho,\bar C)$.

\begin{table}[H]
\centering\small
\caption{Heuristic maximizers $k^\ast$ of the calibrated selection score $Q(k)=2\bar C\,k-\omega\,g(k(k-1)\varrho)$, computed from pre-training data statistics: $\hat\varrho$ by Monte Carlo over Haar direction pairs (debiased, Proposition~\ref{a:diag}) and $\bar C$ by Monte Carlo slicing (Theorem~\ref{t:gain}).}
\label{tab:kstarB}
\begin{tabular}{lrrrr}
\toprule
Dataset & $d$ & $\hat\varrho$ & $\bar C$ & $k^\ast$\\
\midrule
CelebA-64   & $12288$ & $0.0821$ & $1.057$ & $15.0$\\
CIFAR-10    & $3072$  & $0.0677$ & $0.964$ & $16.0^{\dagger}$\\
ImageNet-64 & $12288$ & $0.0673$ & $1.057$ & $21.9$\\
FFHQ-64     & $12288$ & $0.0524$ & $0.994$ & $31.7$\\
\bottomrule
\end{tabular}

\smallskip
{\footnotesize $^{\dagger}$Calibration point, fixing $\omega\approx13.4$ by matching the observed CIFAR-10 optimum.\par}
\end{table}

In practice, we restrict $k$ to powers of two, matching the sweep grid, and select the power nearest to $k^\ast$ on the logarithmic scale, $k=2^{\operatorname{round}(\log_2 k^\ast)}$, which gives $k=16$ on CIFAR-10 and CelebA-64 and $k=32$ on FFHQ-64.

\section{Algorithms, Ablations, and Qualitative Results}\label{app:supplementary}

\subsection{QC-FM and Hybrid Training Algorithms}\label{app:algorithms}

\begin{algorithm}[t]
\caption{Quantile Coupling Flow Matching (QC-FM)}
\label{alg:qcfm}
\begin{algorithmic}[1]
\STATE \textbf{Input:} Data batch \(\{x^{(i)}\}_{i=1}^B \subset \mathbb{R}^d\), number of slices \(k\)
\STATE \textbf{Output:} Source batch \(\{e^{(i)}\}_{i=1}^B\) and frame \(U\)
\STATE Sample \(A \in \mathbb{R}^{d \times k}\) with i.i.d. entries \(A_{ab} \sim \mathcal{N}(0,1)\)
\STATE Compute QR decomposition \(A = QR\), and set \(U \gets Q[:,1\!:\!k]\)
\FOR{\(j = 1, \dots, k\)}
    \FOR{\(i = 1, \dots, B\)}
        \STATE \(v_{ij} \gets u_j^\top x^{(i)}\)
    \ENDFOR
    \STATE Compute ranks \(r_{ij}\) of \(\{v_{1j}, \dots, v_{Bj}\}\)
    \FOR{\(i = 1, \dots, B\)}
        \STATE \(\tau_{ij} \gets (r_{ij} - 0.5)/B\)
        \STATE \(z_{ij} \gets \Phi^{-1}(\tau_{ij})\)
    \ENDFOR
\ENDFOR
\FOR{\(i = 1, \dots, B\)}
    \STATE Form \(z^{(i)} = (z_{i1}, \dots, z_{ik})^\top\)
    \STATE Sample \(\epsilon^{(i)} \sim \mathcal{N}(0, I_d)\)
    \STATE \(e^{(i)} \gets U z^{(i)} + (I_d - UU^\top)\epsilon^{(i)}\)
\ENDFOR
\RETURN \(\{e^{(i)}\}_{i=1}^B, U\)
\end{algorithmic}
\end{algorithm}

\subsection{Training with QC-FM-Based Couplings}

Given the coupled batch
$\mathcal{T}=\{(e^{(i)},x^{(i)})\}_{i=1}^{B}$ produced by either hybrid
strategy, we sample $t\sim q$ from the training-time distribution, construct
$x_t=(1-t)e^{(i)}+tx^{(i)}$, and train the velocity field with the same
conditional flow-matching objective
\[
\mathcal{L}_{\mathrm{FM}}(\theta)
=
\E_{(e,x)\sim\mathcal{T},\, t\sim q}
\Big[
\big\| v_\theta(x_t,t) - (x - e) \big\|^2
\Big].
\]
Thus QC-FM changes only the batchwise source--target pairs; the interpolation
path and regression objective remain unchanged. The hybrids inherit the
structured anchor pairs while retaining exact Gaussian source samples on the
non-anchor slots. Proposition~\ref{t:completion} formalizes the resulting
attenuation of prior mismatch, while Mixture and Adjacency differ only in their
joint pairing laws.

\begin{algorithm}[t]
\caption{Flow Matching Training with QC-FM-Based Hybrid Coupling}
\label{alg:hybrid}
\begin{algorithmic}[1]
\STATE \textbf{Input:} Batch size \(B\), anchor ratio \(p\), number of slices \(k\), training-time distribution \(q\); method \(\in \{\textsc{Mixture},\ \textsc{Adjacency}\}\)
\WHILE{not converged}
    \STATE Sample data batch \(\mathcal{B}_x = \{x^{(i)}\}_{i=1}^B \sim p_{\mathrm{data}}\)
    \STATE Split \(\mathcal{B}_x\) into \(\mathcal{B}_{x,\mathrm{anc}}\) and \(\mathcal{B}_{x,\mathrm{rest}}\) with \(|\mathcal{B}_{x,\mathrm{anc}}| = \lfloor pB \rfloor\)
    \STATE \((\mathcal{B}_{e,\mathrm{anc}},\, U) \gets \textrm{QC-FM}(\mathcal{B}_{x,\mathrm{anc}}, k)\)
    \STATE Form anchor pairs \(\mathcal{S} = \{(e_m^{\mathrm{anc}}, x_m^{\mathrm{anc}})\}\)
    \IF{method is \textsc{Mixture}}
        \STATE Sample \(\widetilde{\mathcal{B}}_{e,\mathrm{rest}} \sim \mathcal{N}(0,I_d)\)
        \STATE Pair \(\widetilde{\mathcal{B}}_{e,\mathrm{rest}}\) randomly with \(\mathcal{B}_{x,\mathrm{rest}}\) to form \(\mathcal{A}\)
        \STATE \(\mathcal{T} \gets \mathcal{S} \cup \mathcal{A}\)
    \ELSIF{method is \textsc{Adjacency}}
        \STATE Assign each \(x \in \mathcal{B}_{x,\mathrm{rest}}\) to its nearest anchor target using \(d_U(x, x_m^{\mathrm{anc}})\)
        \STATE Let \(\mathcal{G}_m\) be the resulting target groups and \(n_m = |\mathcal{G}_m|\)
        \STATE Sample \(\widetilde{\mathcal{B}}_{e,\mathrm{rest}} \sim \mathcal{N}(0,I_d)\)
        \STATE Allocate latents to anchors by a parallel auction, respecting the quotas \(n_m\)
        \STATE Let \(\mathcal{H}_m\) be the allocated latent groups with \(|\mathcal{H}_m| = n_m\)
        \STATE Pair each \(\mathcal{H}_m\) with \(\mathcal{G}_m\) within group to form \(\mathcal{N}\)
        \STATE \(\mathcal{T} \gets \mathcal{S} \cup \mathcal{N}\)
    \ENDIF
    \STATE Sample \(t \sim q\)
    \STATE Construct \(x_t = (1-t)x_0 + t x_1\) for all \((x_0, x_1) \in \mathcal{T}\)
    \STATE Compute \(\mathcal{L}_{\mathrm{FM}}(\theta)\) and update \(\theta\)
\ENDWHILE
\end{algorithmic}
\end{algorithm}

\subsection{Fixed PCA Projection Ablation}\label{app:pca-fixed}

The main experiments redraw a Haar-random orthonormal frame \(U\) at every
training step. To study the combined effect of projection alignment and frame
resampling, we additionally trained matched CIFAR-10 variants using a single
PCA-aligned frame throughout training. We computed the top \(16\) components once from the
\(50{,}000\) training images and their horizontal flips (\(100{,}000\) samples
in total), using the same CHW ordering and \([-1,1]\) normalization as
training. These components explain \(71.66\%\) of the empirical variance. All
other settings are unchanged: \(k=16\), \(B=64\), and \(p=0.2\) for Mixture or
\(p=0.8\) for Adjacency.
Table~\ref{tab:pca_fixed} reports this matched comparison.

\begin{table}[t]
\centering
\caption{Random-resampled versus PCA-fixed projection frames on CIFAR-10 at
the common \(1600\)-epoch endpoint (FID-50K, \(\mathrm{NFE}=50\)).}
\label{tab:pca_fixed}
\small\setlength{\tabcolsep}{5pt}
\begin{tabular}{llcc}
\toprule
Method & Projection frame \(U\) & \(p\) & FID \(\downarrow\) \\
\midrule
QC-FM-Mixture & Random, resampled & 0.2 & 2.10 \\
QC-FM-Mixture & PCA, fixed & 0.2 & 2.07 \\
\midrule
QC-FM-Adjacency & Random, resampled & 0.8 & 2.12 \\
QC-FM-Adjacency & PCA, fixed & 0.8 & 2.16 \\
\bottomrule
\end{tabular}
\end{table}

The PCA-fixed configuration slightly improves Mixture from \(2.10\) to
\(2.07\), but slightly degrades Adjacency from \(2.12\) to \(2.16\). Thus,
the combined change in projection alignment and frame resampling is
hybrid-dependent rather than uniformly beneficial.

\subsection{Rank-Orientation Ablation}\label{app:rank-orientation}

The default QC-FM assigns increasing data ranks to increasing Gaussian quantiles,
forming the monotone one-dimensional coupling. As an orientation control, we
reverse this assignment so that anchor rank \(r\in\{1,\ldots,M\}\) receives
the Gaussian grid quantile at rank \(M+1-r\). This anti-monotone variant
preserves the same Gaussian quantile
multiset, random frame distribution, anchor ratio, and computational cost; it
changes only the orientation of each rank pairing. We train the matched
QC-FM-Mixture control on CIFAR-10 with a fresh random \(U\) at every step,
\((p,k)=(0.2,16)\), and \(B=64\). Table~\ref{tab:rank_orientation} compares it
with the Baseline and the monotone variant.

\begin{table}[t]
\centering
\caption{Rank-orientation ablation on CIFAR-10 at the common \(1600\)-epoch
endpoint (FID-50K, \(\mathrm{NFE}=50\)). QC-FM rows use
Mixture with \((p,k)=(0.2,16)\), \(B=64\). 
}
\label{tab:rank_orientation}
\small\setlength{\tabcolsep}{6pt}
\begin{tabular}{llc}
\toprule
Method & Rank assignment & FID \(\downarrow\) \\
\midrule
Baseline & -- & 2.24 \\
QC-FM-Mixture & Anti-monotone & 2.41 \\
QC-FM-Mixture & Monotone & \textbf{2.10} \\
\bottomrule
\end{tabular}

\end{table}

The anti-monotone control degrades FID to \(2.41\), worse than both the
Baseline (\(2.24\)) and the monotone construction (\(2.10\)), despite using
the same Gaussian quantile grid. This supports that the gain comes from
preserving the correct rank orientation rather than from quantile
regularization alone.

\subsection{Batch-Size Ablation}\label{app:batch-size}

Minibatch OT is known to benefit mainly at large batch sizes
\citep{Zha25}. Table~\ref{tab:overhead} shows that QC-FM's measured coupling
latency grows only mildly over the tested range and remains negligible relative
to a full training step: from $B=64$ to $B=2048$, QC-FM-Mixture increases from
$0.42$ to $0.69$~ms and QC-FM-Adjacency from $2.96$ to $5.51$~ms. To place the
OT-CFM comparison on fair footing, we compare QC-FM and OT-CFM across batch sizes;
Table~\ref{tab:batch_ablation} reports the resulting sweep.

\begin{table}[t]
\centering
\caption{Effect of batch size on CIFAR-10 (FID at a common epoch endpoint for
each setting). QC-FM is compared against OT-CFM as the batch size grows, since
minibatch OT is disadvantaged at small $B$. Best results at each batch size are
bold.}
\label{tab:batch_ablation}
\small\setlength{\tabcolsep}{4pt}
\begin{tabular}{lcccc}
\toprule
Method & $B{=}64$ & $B{=}256$ & $B{=}512$ & $B{=}1024$ \\
\midrule
Baseline & 2.24 & 2.21 & 2.22 & 2.26 \\
OT-CFM (Hungarian) & 2.59 & 2.30 & 2.15 & 2.11 \\
QC-FM-Mixture & \textbf{2.10} & \textbf{2.06} & \textbf{2.02} & \textbf{2.04} \\
\bottomrule
\end{tabular}
\end{table}

The Baseline remains nearly unchanged with \(B\), whereas OT-CFM improves from
\(2.59\) to \(2.11\) and narrows its gap to QC-FM-Mixture from \(0.49\) to
\(0.07\). QC-FM-Mixture nevertheless remains best at every tested batch size,
with its lowest observed FID of \(2.02\) at \(B=512\); at \(B=1024\), its
isolated coupling construction is about \(210\times\) faster than Hungarian
OT (Table~\ref{tab:overhead}).

\subsection{Qualitative Image Results}\label{app:image-qualitative}

FID summarizes distribution match but not per-sample fidelity or failure modes.
Figures~\ref{fig:qualitative_cifar}--\ref{fig:qualitative_imagenet} provide an
illustrative matched-noise comparison between the Baseline and QC-FM-Mixture at
$\mathrm{NFE}=50$. Corresponding grid positions share the same Gaussian initial
state; ImageNet additionally shares the class label. Each pair of method grids
uses the same ordering.

\begin{figure}[p]
\centering
\setlength{\tabcolsep}{2pt}
\begin{tabular}{cc}
Baseline & QC-FM-Mixture \\
\includegraphics[width=.47\textwidth]{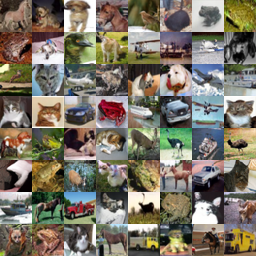} &
\includegraphics[width=.47\textwidth]{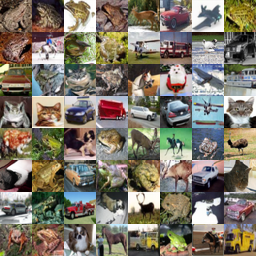}
\end{tabular}
\caption{Matched-noise CIFAR-10 comparisons at the common $1600$-epoch
endpoint ($\mathrm{NFE}=50$). Corresponding cells use identical Gaussian
initial states.}
\label{fig:qualitative_cifar}
\end{figure}

\begin{figure}[p]
\centering
\setlength{\tabcolsep}{2pt}
\begin{tabular}{cc}
Baseline & QC-FM-Mixture \\
\includegraphics[width=.47\textwidth]{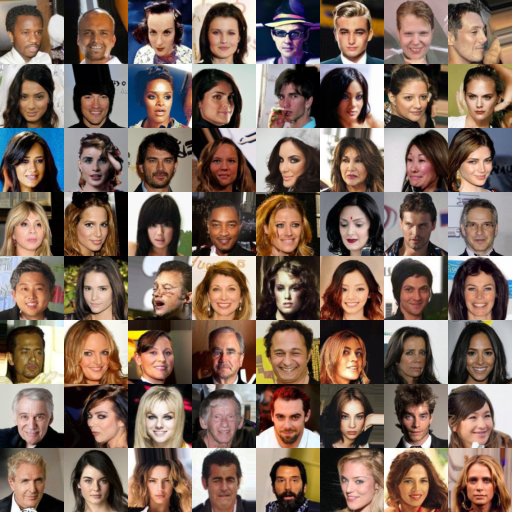} &
\includegraphics[width=.47\textwidth]{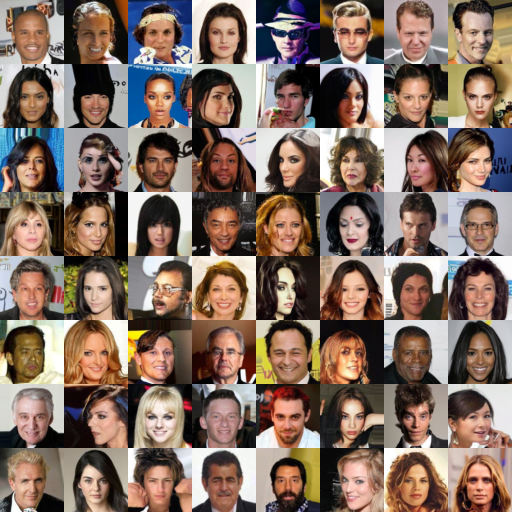}
\end{tabular}
\caption{Matched-noise CelebA-64 comparisons at the common $500$-epoch
endpoint ($\mathrm{NFE}=50$). Corresponding cells use identical Gaussian
initial states.}
\label{fig:qualitative_celeba}
\end{figure}

\begin{figure}[p]
\centering
\setlength{\tabcolsep}{2pt}
\begin{tabular}{cc}
Baseline & QC-FM-Mixture \\
\includegraphics[width=.47\textwidth]{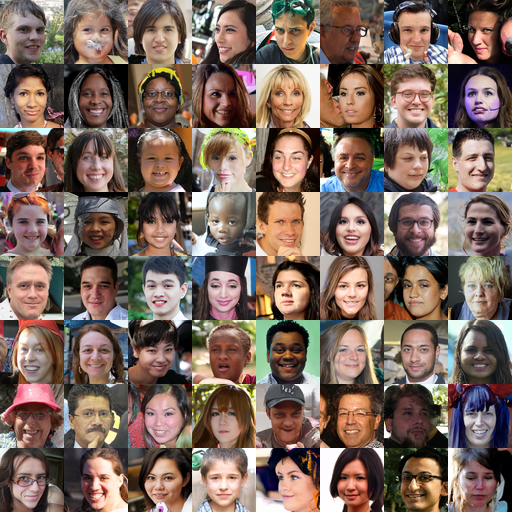} &
\includegraphics[width=.47\textwidth]{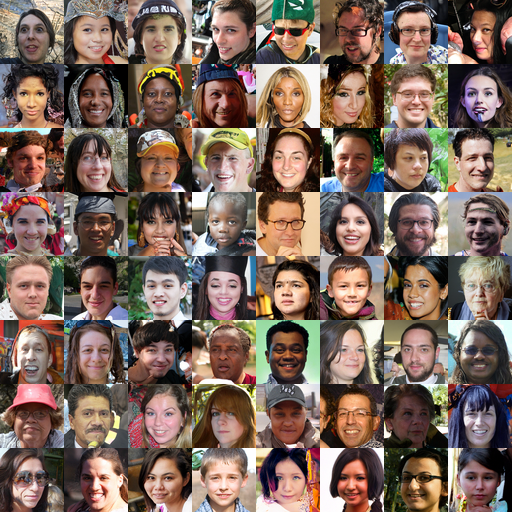}
\end{tabular}
\caption{Matched-noise FFHQ-64 comparisons at the common $1200$-epoch
endpoint ($\mathrm{NFE}=50$). Corresponding cells use identical Gaussian
initial states.}
\label{fig:qualitative_ffhq}
\end{figure}

\begin{figure}[p]
\centering
\setlength{\tabcolsep}{2pt}
\begin{tabular}{cc}
Baseline & QC-FM-Mixture \\
\includegraphics[width=.47\textwidth]{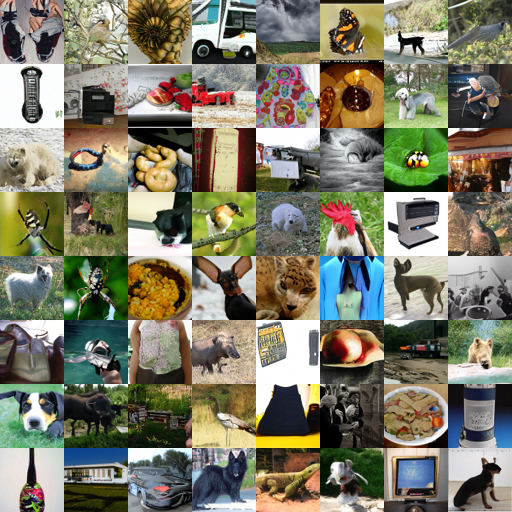} &
\includegraphics[width=.47\textwidth]{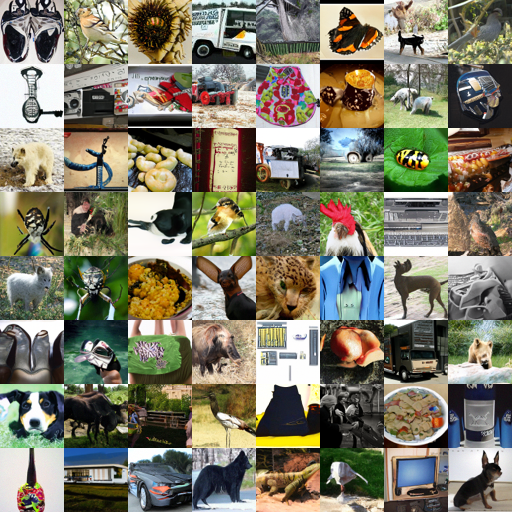}
\end{tabular}
\caption{Matched-input class-conditional ImageNet-64 comparisons at the common
$60$-epoch endpoint under the ADM center-crop protocol. Corresponding cells use
identical Gaussian initial states and class labels ($\mathrm{NFE}=50$,
classifier-free guidance scale $1.5$).}
\label{fig:qualitative_imagenet}
\end{figure}

\end{document}